\documentclass[a4paper, 11pt]{article}
\usepackage[pdftex, a4paper]{geometry}
\usepackage{graphicx}	
\usepackage{paralist}
\usepackage{amsmath}
\usepackage{pdfsync}
\usepackage{mathtools}
\usepackage{paralist}
\usepackage{tabularx,algorithm}	
\usepackage[toc,page]{appendix}
\usepackage{color}
\usepackage{fancyhdr}
\usepackage{algpseudocode}
\usepackage{bm,bbm}
\usepackage{tikz}
	
\usepackage{amsthm}
	
\usepackage{algpseudocode}
\usetikzlibrary{positioning}
\usetikzlibrary{decorations.pathreplacing}
\usepackage{epstopdf}
\pagestyle{fancy}
%\lhead{\bfseries Wu et al. {\it Privacy Issues in Distributively Solving  Linear Algebraic Equations}}
\chead{}
\rhead{}
\lfoot{}
\cfoot{}
\rfoot{\thepage}

\usepackage[noadjust]{cite}
\usepackage{amsfonts}
\usepackage{dsfont}
\usepackage{mathrsfs}
\usepackage{amssymb}
\usepackage{enumitem}
\usepackage{bbm}
\usepackage{algorithm}
\usepackage{algcompatible}
\usepackage{epsfig}
\usepackage[english]{babel}
\usepackage[all]{xy}
\usepackage[latin1]{inputenc}
\usepackage{tikz}
\usetikzlibrary{shapes,arrows}

\usepackage[T1]{fontenc}
\usepackage[justification=centering]{caption}
\usepackage[labelformat=simple]{subcaption}

\captionsetup{belowskip=12pt,aboveskip=8pt}

\newcommand{\overbar}[1]{\mkern 2.5mu\overline{\mkern-2.5mu#1\mkern-2.5mu}\mkern 2.5mu}

\frenchspacing

\topmargin -6.0mm\oddsidemargin 0mm \evensidemargin 0mm \textheight
23cm\textwidth 16cm

\parskip .1cm

\newtheorem{theorem}{Theorem}

\newtheorem{lemma}{Lemma}
\newtheorem{corollary}{Corollary}
\newtheorem{remark}{Remark}

\title{\bf Optimal camera-robot pose estimation in linear time from points and lines}

\date{}

\author{}

\newlist{abbrv}{itemize}{1}
\setlist[abbrv,1]{label=,labelwidth=1in,align=parleft,itemsep=0.1\baselineskip,leftmargin=!}
\begin{document}

\vspace{-2cm}

\maketitle

\vspace{-2cm}

\begin{center}
    Guangyang Zeng\textsuperscript{1}, Biqiang Mu\textsuperscript{2}, Qingcheng Zeng\textsuperscript{3}, Yuchen Song\textsuperscript{1},   Chulin Dai\textsuperscript{1}, \\ \centering
Guodong Shi\textsuperscript{4}, and Junfeng Wu\textsuperscript{1}
\end{center}

\begin{center}
{\small     \textsuperscript{1}School of Data Science, Chinese University of Hong Kong, Shenzhen, Shenzhen, China \\
\textsuperscript{2}Key Laboratory of Systems and Control, Institute of Systems Science, Academy of Mathematics and Systems Science, Chinese Academy of Sciences, Beijing, China \\
\textsuperscript{3}System Hub, Hong Kong University of Science and Technology (Guangzhou), Guangzhou, China \\
\textsuperscript{4}Australian Center for Field Robotics, School of Aerospace, Mechanical and Mechatronic Engineering, The University of Sydney, Sydney, Australia}
\end{center}

\begin{abstract}
Camera pose estimation is a fundamental problem in robotics. This paper focuses on two issues of interest: First, point and line features have complementary advantages, and it is of great value to design a uniform algorithm that can fuse them effectively; Second, with the development of modern front-end techniques, a large number of features can exist in a single image, which presents a potential for highly accurate robot pose estimation. 
With these observations, we propose \texttt{AOPnP(L)}, an optimal linear-time camera-robot pose estimation algorithm from points and lines. Specifically, we represent a line with two distinct points on it and unify the noise model for point and line measurements where noises are added to 2D points in the image. By utilizing Pl\"{u}cker coordinates for line parameterization, we formulate a maximum likelihood (ML) problem for combined point and line measurements. 
To optimally solve the ML problem, \texttt{AOPnP(L)} adopts a two-step estimation scheme. In the first step, a consistent estimate that can converge to the true pose is devised by virtue of bias elimination. In the second step, a single Gauss-Newton iteration is executed to refine the initial estimate. 
\texttt{AOPnP(L)} features theoretical optimality in the sense that its mean squared error converges to the Cram\'{e}r-Rao lower bound. Moreover, it owns a linear time complexity. These properties make it well-suited for precision-demanding and real-time robot pose estimation.
Extensive experiments are conducted to validate our theoretical developments and demonstrate the superiority of \texttt{AOPnP(L)} in both static localization and dynamic odometry systems.
\end{abstract}

\section{Introduction} \label{introduction}
\subsection{Background} \label{background}
Robot pose estimation refers to inferring the position and orientation of a robot in a specific coordinate system by using sensor measurements. As one of the most fundamental techniques in many robot applications, such as robot navigation, robotic grasping, and simultaneous localization and mapping (SLAM), pose estimation is an important topic in robotics and has been extensively investigated~\cite{taddese2018enhanced,zhang2021pose,lilge2022continuum}.
In virtue of low cost, high energy efficiency, and the ability to generate high-resolution images with detailed visual context, cameras have been a commonly used sensor for robot pose estimation.

One branch of visual-based methods utilizes the camera projection model and the correspondences between extracted features in the image plane and their 3D positions in the global frame to estimate the camera pose. 
The most commonly used feature is point features, which corresponds to the perspective-n-point (PnP) problem~\cite{lepetit2009epnp,urban2016mlpnp,terzakis2020consistently}. Another prevalent feature is line features, and the perspective-n-line (PnL) problem needs to be solved~\cite{xu2016pose,wang2019camera,liu2020globally}. 
On the one hand, line features are more robust in texture-less or repetitive scenes without enough distinctive points. This is likely to occur in man-made structures. While on the other hand, given a line correspondence, there still exists a shift of the line along its direction. In addition, line features may be partially observed due to occlusions. Existing PnL methods are not yet as accurate as PnP ones with enough distinctive point features~\cite{vakhitov2016accurate}. 
Therefore, some works utilize both point and line features for camera pose estimation, which yields the so-called PnPL problem~\cite{vakhitov2016accurate,vakhitov2021uncertainty}.
 
Accurate pose acquisition is crucial for many modern robot applications, such as surgical robots~\cite{haidegger2019autonomy}, implicit mapping based on neural radiance fields~\cite{mildenhall2021nerf}, and large-scale SLAM~\cite{lynen2020large}. 
It is noteworthy that with the development of feature detectors and descriptors, a large number of features can exist in a single image, which brings great potential for highly precise pose estimation.
One idea to fully exploit the large sample feature measurements is to devise a consistent and asymptotically efficient estimate, which converges to the true pose with a minimum covariance as the feature number increases. 
We note that the existing methods generally ignore the modeling of measurement noises and lack elaborate design from the perspective of statistical estimation theory. As a result, they are usually inconsistent due to the nonlinear property of the pinhole camera projection model. 
In what follows, we will give a literature review. 

\subsection{Related work} \label{related_work}

\textbf{PnP problem:} As points are easier to handle mathematically than lines in camera pose estimation, the PnP problem was first studied. Some works consider a fixed number of points. Since the degree of the pose is six and each point correspondence produces two independent equations, the minimal case is $n=3$, which corresponds to the P3P problem~\cite{dementhon1992exact,kneip2011novel}. There are also P4P~\cite{bujnak2008general} and P5P~\cite{triggs1999camera} solvers for $n=4$ and $n=5$, respectively. These fix-point solutions are sensitive to noises and generally embedded in a random sample consensus (RANSAC) framework for robust estimation. 
Most PnP solvers, however, can cope with a flexible number of points by formulating a least-squares problem. The optimization methods can be categorized into iterative 
and noniterative algorithms. Iterative methods start with an initial value and solve the problem using an iterative algorithm, e.g., the GN and Levenberg-Marquardt (LM) methods~\cite{lu2000fast,pavlakos20176,terzakis2020consistently}. Due to the nonconvexity of the problem, without a good initialization, iterative methods can only result in a local minimum. 
Early noniterative methods usually have a high computational complexity. The first well-known $O(n)$ solver is EPnP~\cite{lepetit2009epnp}, which utilizes linearization techniques. Direct linear transformation (DLT) is another linear formulation~\cite{hartley2003multiple}. Apart from linearization techniques, polynomial solvers that do not ignore the constraints on rotation have become mainstream~\cite{hesch2011direct,zheng2013revisiting,zhou2019efficient}. They estimate the pose by solving a system of polynomial equations and can generally achieve a better performance than linear methods which ignore constraints. 
Noniterative solutions are sometimes used as the initial value for iterative algorithms.

\textbf{PnL problem:} The early PnL research emerged in the 1990s for the reason that line features are generally more robust than point features, especially in texture-less or repetitive scenes. Note that each line correspondence provides two independent equations. Thus, the minimal case is also $n=3$, which results in the P3L problem~\cite{dhome1989determination}. For nonminimal flexible cases, early works mainly focused on error function formulation and iterative solutions~\cite{zhou2020complete}. Because of nonconvexity, iterative methods also face the possibility of converging to local minima~\cite{liu1990determination,christy1999iterative}. 
In terms of noniterative methods, the DLT formulations can vary for different line parameterizations. The classical DLT method parameterizes a 3D line by its two endpoints and constructs homogeneous equations based on the constraint that the projection of the endpoints should be on the 2D line~\cite{hartley2003multiple}. P\v ribyl \emph{et al.}~\cite{pvribyl2017absolute} used the Pl\"ucker coordinates to represent a 3D line and proposed a novel DLT method. 
Compared with linear methods, polynomial solvers that consider the rotation constraints need fewer lines and can usually obtain a more accurate solution for a small $n$~\cite{mirzaei2011globally,ansar2003linear,xu2016pose,zhou2019robust}. However, they are more computationally demanding and may encounter numerical problems~\cite{zhou2020complete}.

\textbf{PnPL problem:} There is a limited number of approaches that estimate the camera pose utilizing both points and lines. The minimal case was addressed in~\cite{ramalingam2011pose,zhou2019stable}. 
For nonminimal cases, DLT methods are straightforward since the homogeneous equations for both points and lines have already been established in previous works~\cite{hartley2003multiple}. 
Extending the existing PnP methods to the PnPL counterparts by elaborately formulating the line projection error is another practical approach, for instance, Vakhitov \emph{et al.}~\cite{vakhitov2016accurate} developed EPnP and OPnP to EPnPL and OPnPL, respectively, and EPnP and DLS are extended to EPnPLU and DLSLU, respectively in~\cite{vakhitov2021uncertainty}. Li \emph{et al.} adopted the Cayley parametrization for the rotation matrix and constructed a polynomial system by the first-order optimality condition~\cite{li2017combining}. Recently, certifiably optimal solvers were proposed with sum-of-squares relaxation~\cite{sun2020certifiably} and semidefinite relaxation~\cite{agostinho2023cvxpnpl}.  

\textbf{Uncertainty-aware PnP(L) problem:} Most existing works do not consider measurement noises contained in features. Since the extracted features are inevitably corrupted by noises, the uncertainty-aware methods that explicitly model noises are likely to achieve higher accuracy. 
Among them, the uncertainty in 2D features was considered and a weighted least-squares problem by using the squared Mahalanobis distance was formulated in~\cite{mirzaei2011globally,ferraz2014leveraging,urban2016mlpnp,li2017combining,vakhitov2021uncertainty}. Recently,  Vakhitov \emph{et al.}~\cite{vakhitov2021uncertainty} estimated both 2D and 3D uncertainty to enhance estimation performance. However, it needs prior information of an initial pose or the average scene depth. 
We remark that although the above methods have considered the noise model, they do not analyze noise propagation and the effect on the final estimate. The lack of statistical analysis and optimization from the perspective of estimation theory makes them not asymptotically unbiased and consistent. 
CPnP~\cite{zeng2023cpnp} can obtain a consistent estimate, however, it cannot theoretically guarantee asymptotic efficiency, and it is only applicable to point features.

\subsection{Main results} \label{main_results}
This paper investigates the camera-robot pose estimation problem with large sample features. We propose a uniform estimator that can take point features, line features, or both as measurements. 
In terms of estimation accuracy, the proposed estimator is \emph{optimal} in the large sample case---it optimally solves the maximum likelihood (ML) problem and its covariance reaches a theoretical lower bound as the number of features increases. In terms of computation speed, it has a \emph{linear} time complexity---the consuming time linearly grows with respect to the feature number. With a large number of features, our estimator provides an appropriate solution for highly precise and real-time robot pose estimation.  

The main contributions of this paper are summarized as follows:
\begin{enumerate}
	\item  [$(i).$]  By representing a line with two distinct points on it, we unify the camera noise model in which measurement noises are added to 2D point projections in the image. In addition, by utilizing the Pl\"{u}cker coordinates for line parameterization, we construct the residuals for line features and formulate a uniform ML problem for combined point and line measurements.
	\item  [$(ii).$]  We propose a uniform pose estimation framework for combined point and line measurements. The framework has linear time complexity and is asymptotically optimal. The core of the framework is a two-step estimation scheme, which asymptotically solves the ML problem. In the first step, we devise a consistent pose estimate that converges to the true pose as features increase. In the second step, we execute a single Gauss-Newton (GN) iteration and prove that the refined estimate asymptotically reaches the theoretical lower bound---Cram\'{e}r-Rao bound (CRB).
	\item  [$(iii).$]  The framework also incorporates two modules to facilitate the stability and applicability of the estimator in real scenarios. The first is data preprocessing, which normalizes the scale of data to promote numerical stability. The second is consistent noise variance estimation by solving a generalized eigenvalue problem,  which is a prerequisite for the proposed two-step pose estimator in case of unknown noise statistical characteristics. 
 \item  [$(iv).$] We perform extensive experiments with both synthetic and real-world data to verify the theoretical developments and compare the proposed estimator with state-of-the-art ones. The experiment results demonstrate the superiority of our algorithm in terms of estimation accuracy and computation speed, not only for robot relocalization in a known map but also for SLAM in an unknown environment.
\end{enumerate}

\section{Preliminaries} \label{preliminaries}

To facilitate the readability of the subsequent technical part, in this section, we give some preliminaries on notations, probability and statistics, rigid transformation, and GN iterations on the manifold of ${\rm SO}(3)$. 

\subsection{Notations} \label{notations}
For a vector $\bf v$, $[{\bf v}]_i$ denotes its $i$-th element, $\|{\bf v}\|_{\bm \Sigma}^2= {\bf v}^\top {\bm \Sigma}^{-1} {\bf v}$, and ${\rm diag}(\bf v)$ returns a diagonal matrix with $\bf v$ being its diagonal elements. Given a matrix ${\bf M}$, $\|{\bf M}\|_{\rm F}$ returns its Frobenius norm, and ${\rm vec}({\bf M})$ yields a vector by concatenating the columns of ${\bf M}$. If ${\bf M}$ is a square matrix, then ${\rm tr}({\bf M})$ represents its trace, and ${\rm det}({\bf M})$ denotes its determinant. If ${\bf M}$ has real eigenvalues, then $\lambda_{\rm min}({\bf M})$ and $\lambda_{\rm max}({\bf M})$ denote the minimum and maximum eigenvalues of ${\bf M}$, respectively. We use $\times $ to denote the cross product and $\otimes$ to denote the Kronecker product. We denote the identity matrix of size $n$ as ${\bf I}_n$ and the zero matrix of size $n \times m$ as ${\bf 0}_{n \times m}$. For a quantity ${\bf x}$ corrupted by noise, we use ${\bf x}^o$ to denote its noise-free counterpart.

\subsection{Some notions in probability and statistics} \label{preliminaries_probability_statistics}

\hspace{1.2em} \textbf{Convergence in probability.} The small $o_p$ notation $X_n=o_p(a_n)$ means that the sequence $(X_n/a_n)$ converges to zero in probability. That is, for any $\varepsilon>0$, 
\begin{equation*}
    \lim_{n \rightarrow \infty} \mathbb{P} (|X_n/a_n| > \varepsilon)=0.
\end{equation*} 

\textbf{Stochastic boundedness.} The big $O_p$ notation $X_n=O_p(a_n)$ means that the sequence $(X_n/a_n)$ is stochastically bounded. That is, for any $\varepsilon >0$, there exists a finite $M$ and a finite $N$ such that for any $n>N$,
\begin{equation*}
    \mathbb{P} (|X_n/a_n|>M )<\varepsilon.
\end{equation*}

The following notions are established on the premise that given $n$ measurements $\mathcal Y_n \triangleq \{{\bf y}_1,\ldots,{\bf y}_n\}$, $\hat {\bm \theta}_n$ is an estimate of some unknown parameter ${\bm \theta}^o$.

\textbf{Consistent and $\sqrt{n}$-consistent estimate.} The estimate $\hat {\bm \theta}_n$ is called a (weakly) consistent estimate of ${\bm \theta}^o$ if
\begin{equation*}
    \hat {\bm \theta}_n-{\bm \theta}^o=o_p(1).
\end{equation*}
Furthermore, $\hat {\bm \theta}_n$ is called a $\sqrt{n}$-consistent estimate of ${\bm \theta}^o$ if
\begin{equation*}
    \hat {\bm \theta}_n-{\bm \theta}^o=O_p(1/\sqrt{n}).
\end{equation*}
It is noteworthy that \emph{$\sqrt{n}$-consistent} includes two implications: The estimate $\hat {\bm \theta}_n$ is consistent, i.e., $\hat {\bm \theta}_n$ converges to ${\bm \theta}^o$ in probability; The convergence rate is $1/\sqrt{n}$. 

\textbf{(Asymptotically) unbiased estimate.} The bias of an estimate $ \hat {\bm \theta}_n$ is equal to its expectation minus the true value, i.e., ${\rm Bias}(\hat {\bm \theta}_n)=\mathbb E[\hat {\bm \theta}_n]-{\bm \theta}^o$. If ${\rm Bias}(\hat {\bm \theta}_n)=0$, we call $\hat {\bm \theta}_n$ an unbiased estimate of ${\bm \theta}^o$. In particular, if $\lim_{n \rightarrow \infty}{\rm Bias}(\hat {\bm \theta}_n)=0$, we call $\hat {\bm \theta}_n$ an asymptotically unbiased estimate. It is noteworthy that an asymptotically unbiased estimate $\hat {\bm \theta}_n$ may not necessarily be unbiased given a finite $n$. 

\textbf{(Asymptotically) efficient estimate.} An unbiased estimate $\hat {\bm \theta}_n$ is said to be efficient if its covariance ${\rm cov}(\hat {\bm \theta}_n)$ equals the theoretical lower bound --- Cram\'{e}r-Rao bound (CRB). In particular, $\hat {\bm \theta}_n$ is called asymptotically efficient if it is asymptotically unbiased, and $\lim_{n \rightarrow \infty} {\rm cov}(\sqrt{n}\hat {\bm \theta}_n)=n{\rm CRB}$. CRB can be obtained from the Fisher information matrix $\bf F$ as ${\rm CRB}={\bf F}^{-1}$, and $\bf F$ is the covariance of the derivative of the log-likelihood function $\ell({\bm \theta};\mathcal Y_n)$ evaluated at the true parameter ${\bm \theta}^o$:
\begin{equation*}
    {\bf F}=\mathbb E \left[\frac{\partial \ell({\bm \theta};\mathcal Y_n)}{\partial {\bm \theta}} \frac{\partial \ell({\bm \theta};\mathcal Y_n)}{\partial {\bm \theta}^\top} \bigg| {\bm \theta}^o\right].
\end{equation*}
% An asymptotically efficient estimate has an asymptotic covariance no worse than any other estimate~\cite{greene2003econometric}. In this sense, an asymptotically efficient estimate is also called an \emph{asymptotically optimal} estimate. 

\subsection{Rigid transformation} \label{rigid_transformation}
The (proper) rigid transformations, or said relative poses, include rotations and translations. In the 3D Euclidean space, the translation is depicted by a vector ${\bf t} \in \mathbb R^3$. The rotation can be characterized by a rotation matrix $\bf R$. Specifically, rotation matrices belong to the special orthogonal group
\begin{equation*}
{\rm SO}(3)=\{{\bf R}\in \mathbb R^{3 \times 3} \mid {\bf R}^\top {\bf R}={\bf I}_3, {\rm det}({\bf R})=1\}.
\end{equation*}
The group operation is the usual matrix multiplication, and the inverse is the matrix transpose. The group ${\rm SO}(3)$ forms a smooth manifold, and the tangent space to the manifold (at the identity) is denoted as the Lie algebra $\mathfrak{so}(3)$ which is comprised of all $3 \times 3$ skew-symmetric matrices. Every $3 \times 3$ skew-symmetric matrix can be generated from a vector ${\bf s} \in \mathbb R^3$ via the \emph{hat} operator:
\begin{equation} \label{hat_operator}
    {\bf s}^\wedge = \begin{bmatrix}
        s_1 \\
        s_2 \\
        s_3
    \end{bmatrix}^\wedge = \begin{bmatrix}
	0 & -s_3 & s_2 \\
	s_3 & 0 & -s_1 \\
	-s_2 & s_1 & 0 
\end{bmatrix} \in \mathfrak {so} (3).
\end{equation}
Conversely, the \emph{vee} operator retrieves the vector from a skew-symmetric matrix:
\begin{equation*}
    \left({\bf s}^\wedge \right)^\vee = {\bf s} \in  \mathbb R^3.
\end{equation*}

The exponential map ${\rm exp}:\mathfrak{so}(3) \rightarrow {\rm SO}(3)$, which coincides with standard matrix exponential, associates an element in the Lie algebra to a rotation matrix. Specifically, Rodrigues' rotation formula reads that
\begin{equation*}
    {\rm exp}\left({\bf s}^\wedge \right)= {\bf I}_3 + \frac{\sin(\|{\bf s}\|)}{\|{\bf s}\|} {\bf s}^\wedge + \frac{1-\cos(\|{\bf s}\|)}{\|{\bf s}\|^2} \left({\bf s}^\wedge \right)^2.
\end{equation*}
Conversely, the logarithm map ${\rm log}:{\rm SO}(3) \rightarrow \mathfrak{so}(3)$ associates a rotation matrix ${\bf R} \neq {\bf I}_3$ to a skew-symmetric matrix:
\begin{equation*}
    {\rm log}({\bf R})=\frac{\phi({\bf R}-{\bf R}^\top)}{2 \sin(\phi)} ~~ \text{with}~~ \phi = \cos^{-1} \left( \frac{{\rm tr}(\bf R)-1}{2}\right).
\end{equation*}
One can further write ${\rm log}({\bf R})^\vee=\phi {\bf n}$, where the unit vector $\bf n$ denotes the rotation axis, and $\phi$ is the rotation angle. It is noteworthy that the exponential map is bijective if we confine its domain to the open ball $\{{\bf s} \mid \|{\bf s}\|<\pi\}$, and its inverse is the logarithm map.  

Suppose the relative pose of the second frame with respect to (w.r.t.) the first frame is $({\bf R},{\bf t})$, and the coordinates of a 3D point in the second frame is ${\bf X}$. Then the coordinates of the point in the first frame is ${\bf R}{\bf X}+{\bf t}$. 

\subsection{Gauss-Newton method on ${\rm SO}(3)$} \label{GN_manifold}
The GN method is usually applied to iteratively solve a nonlinear least-squares optimization problem. It is an approximation of Newton's method and has the advantage that second derivatives are not required. Specifically, it linearly approximates residuals through the first-order Taylor polynomial, and then optimally solves the resulting quadratic problem with a closed-form solution.
When it comes to an optimization problem over the manifold of ${\rm SO}(3)$, the standard GN method needs to be modified with some techniques to ensure that the updated value after each iteration also belongs to  ${\rm SO}(3)$. 

Consider the following optimization problem:
\begin{equation} \label{general_so3_optimization}
    \mathop{\rm minimize}\limits_{{\bf R} \in {\rm SO}(3),{\bf t} \in \mathbb R^3} ~ \|{\bf r}({\bf R},{\bf t})\|^2. 
\end{equation}
Note that the rotation in the 3D  space has three degrees, while the rotation matrix $\bf R$ has nine elements, which is overparameterized. If we directly optimize over $\bf R$, the Jacobian matrix may not have full column rank, and the updated value of $\bf R$ is generally not an element of ${\rm SO}(3)$. To overcome this issue, we can introduce a \emph{retraction} $\mathcal {R}_{\bf R}$, which is a bijective map between an element ${\bf s}$ of the tangent space at $\bf R$ and a neighborhood of ${\bf R} \in {\rm SO}(3)$. The retraction used in this paper is
\begin{equation} \label{retraction}
    \mathcal {R}_{\bf R} ({\bf s}) = {\bf R} ~{\rm exp} ({\bf s}^\wedge), ~~ {\bf s} \in \mathbb R^3.
\end{equation}
Given an initial estimate $\hat {\bf R} \in {\rm SO}(3)$, using this retraction, the original problem~\eqref{general_so3_optimization} is converted into 
\begin{equation} \label{general_so3_optimization2}
    \mathop{\rm minimize}\limits_{{\bf s,t} \in \mathbb R^3} ~ \|{\bf r}\left(\mathcal {R}_{\hat {\bf R}} ({\bf s}),{\bf t} \right)\|^2.
\end{equation}

The above reparameterization is usually called \emph{lifting}. The main idea is to work in the tangent space defined at the current estimate $\hat {\bf R}$, which locally behaves as an Euclidean space. The Euclidean space has three dimensions, coinciding with the degree of a 3D rotation. 
Note that problem~\eqref{general_so3_optimization2} is an optimization problem over an Euclidean space and thus can be solved utilizing the standard GN method. Once an updated $\hat {\bf s}$ is produced, we can obtain a refined rotation matrix $\mathcal {R}_{\hat {\bf R}} (\hat {\bf s})$ which naturally belongs to ${\rm SO}(3)$ via the retraction~\eqref{retraction}.

\section{Problem formulation} \label{problem_formulation}
\subsection{Point and line models} \label{point_line_model}

\begin{figure}[!b]
	\centering
	\includegraphics[width=0.5\columnwidth]{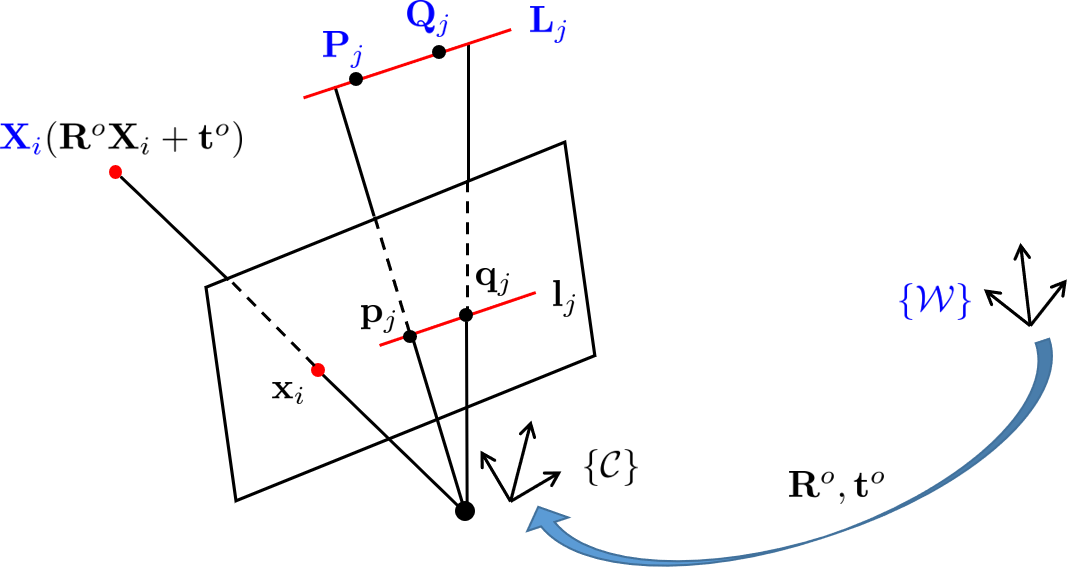}
	\caption{Illustration of point and line correspondences.}
	\label{geometry_illustration}
\end{figure}

We consider a pinhole camera model and assume the intrinsic matrix of the camera is known by some means, such as a prior calibration. As such, without loss of generality, we assume the focal length $f=1$ and utilize normalized image coordinates to represent points in the image plane. 
Let $\{ \mathcal W\}$ and $\{ \mathcal C\}$ denote the world coordinate frame and camera coordinate frame, respectively. To estimate the relative pose of $\{ \mathcal W\}$ w.r.t. $\{ \mathcal C\}$, which is denoted by a rotation matrix ${\bf R}^o$ and a translation vector ${\bf t}^o$, suppose that there exist $n$ point correspondences $\{({\bf X}_i,{\bf x}_i) \}_{i=1}^{n}$ and $m$ line correspondences $\{({\bf L}_j,{\bf l}_j)\}_{j=1}^{m}$. Here, ${\bf X}_i \in \mathbb R^3$ denotes the $i$-th 3D point in the world frame, and ${\bf x}_i \in \mathbb R^2$ represents its 2D projection (in the form of normalized image coordinates) in the image plane. Let ${\bf X}_i^h \triangleq [{\bf X}_i^\top~1]^\top$ and ${\bf x}_i^h \triangleq [{\bf x}_i^\top~1]^\top$ be the corresponding homogeneous coordinates. As illustrated in Figure~\ref{geometry_illustration}, we have the following relationship:
\begin{equation} \label{linear_point_projection}
    {\bf x}_i^h \approx {\bf R}^o {\bf X}_i+{\bf t}^o = [{\bf R}^o~~{\bf t}^o] {\bf X}_i^h,
\end{equation}
where $\approx$ denotes the equivalence relation that equates two values only with a nonzero scale difference. 
More specifically, the exact point projection function is given by~\cite{zeng2023cpnp}
\begin{equation} \label{point_projection}
    {\bf x}_i = h_{{\bf R}^o,{\bf t}^o}({\bf X}_i) \triangleq \frac{{\bf E}({\bf R}^o {\bf X}_i+{\bf t}^o)}{{\bf e}_3^\top({\bf R}^o {\bf X}_i+{\bf t}^o)},
\end{equation}
where ${\bf E}=[{\bf e}_1~{\bf e}_2]^\top$, and ${\bf e}_i$ denotes the unit vector whose $i$-th element is $1$. 

For the $j$-th line ${\bf L}_j$ in the world frame, we denote it by two distinct points ${\bf P}_j$ and ${\bf Q}_j$ on it and parameterize it with Pl\"{u}cker coordinates~\cite{hartley2003multiple} as follows:
\begin{equation*}
    {\bf L}_j \triangleq \left[({\bf P}_j \times  {\bf Q}_j)^\top~~({\bf Q}_j-{\bf P}_j)^\top \right]^\top \in \mathbb R^6. 
\end{equation*}
Note that a line in the 3D space has a degree of four. Hence, the Pl\"{u}cker coordinates representation is overparameterized. Nevertheless, it is widely used in geometric computer vision since the projection of a line under this parameterization exhibits a linear form~\cite{hartley2003multiple,pvribyl2017absolute,bartoli20043d}. Similarly, we denote ${\bf l}_j$ (the projection of ${\bf L}_j$) by two distinct points ${\bf p}_j$ and ${\bf q}_j$ on it and parameterize it as ${\bf l}_j \triangleq {\bf p}_j^h \times {\bf q}_j^h \in \mathbb R^3$, where ${\bf p}_j^h \triangleq [{\bf p}_j^\top~1]^\top$ and ${\bf q}_j^h \triangleq [{\bf q}_j^\top~1]^\top$. 
Then, the linear line projection is expressed as~\cite{hartley2003multiple} 
\begin{equation} \label{linear_line_projection}
    {\bf l}_j \approx \left[{\bf R}^o~~{\bf t}^{o\wedge} {\bf R}^o \right] {\bf L}_j,
\end{equation}
where the \emph{hat} operator $\wedge$ has been defined in~\eqref{hat_operator}.
It is worth mentioning that for the $j$-th line correspondence $({\bf L}_j,{\bf l}_j)$, their endpoints generally do not form point correspondences, i.e., ${\bf p}_j$ and ${\bf q}_j$ are generally not the projections of ${\bf P}_j$ and ${\bf Q}_j$, as shown in Figure~\ref{geometry_illustration}. 
% Second, the matrix ${\bf t}^{o\wedge} {\bf R}^o$ is the essential matrix that is widely used in epipolar geometry. One can recover ${\bf R}^o$ and ${\bf t}^o$ based on the singular value decomposition (SVD) from the essential matrix and the geometric constraint that the observed 3D points and lines should be in front of the camera~\cite{hartley2003multiple}. 

\subsection{ML problem} \label{ML_problem_construction}
In the previous subsection, we have introduced the models of points and lines and the ideal projection geometry. In this part, we explicitly take measurement noises (or projection noises) into account and construct an ML problem from both point and line features. 
Specifically, we suppose noises are added to the projections of points. That is, 
\begin{align} 
    {\bf x}_i & = h_{{\bf R}^o,{\bf t}^o}({\bf X}_i) + \bm{\epsilon}_{{\bf x}_i} ={\bf x}_i^o + \bm{\epsilon}_{{\bf x}_i}, \label{noisy_pt_measurement_model} \\
    {\bf p}_j & ={\bf p}_j^o + \bm{\epsilon}_{{\bf p}_j},~~~~~~{\bf q}_j ={\bf q}_j^o + \bm{\epsilon}_{{\bf q}_j}, \label{noisy_line_measurement_model}
\end{align}
where the quantity $(\cdot)^o$ denotes the corresponding noise-free counterpart of $(\cdot)$, and $\bm{\epsilon}_{{\bf x}_i}$, $\bm{\epsilon}_{{\bf p}_j}$, and $\bm{\epsilon}_{{\bf q}_j}$ are measurement noises, which are assumed to be mutually independent and follow a Gaussian distribution $\mathcal N \left(0,\sigma^2 {\bf I}_2 \right)$. 

\begin{remark} \label{remark_on_line_noise}
    Note that for the line ${\bf l}_j$, we do not directly add a 3D noise vector to ${\bf l}_j^o$. Instead, we add a 2D noise to each of its endpoints. Although this modeling is more complicated and harder to handle\footnote{With this modeling, the noise for ${\bf l}_j$ becomes ${\bf p}_j^{oh} \times \bm{\epsilon}_{{\bf q}_j}^h +\bm{\epsilon}_{{\bf p}_j}^h  \times {\bf q}_j^{oh} + \bm{\epsilon}_{{\bf p}_j}^h  \times \bm{\epsilon}_{{\bf q}_j}^h $, which is no longer a Gaussian noise. Here, $\bm{\epsilon}_{{\bf p}_j}^h \triangleq [\bm{\epsilon}_{{\bf p}_j}^\top~0]^\top$ and $\bm{\epsilon}_{{\bf q}_j}^h \triangleq[\bm{\epsilon}_{{\bf q}_j}^\top~0]^\top$.}, it is more in line with the fact that the original measurements of a camera are a series of points. This modeling also satisfies the intuition that longer line segments are less affected by the noises on their endpoints, i.e., the longer a line segment is, the more confident we are with the observation. Actually, adding noises to the endpoints of a line is widely used in the simulation of previous works~\cite{pvribyl2017absolute,zhou2020complete,mirzaei2011globally,xu2016pose}. 
\end{remark}

% \begin{remark} \label{remark_on_Gaussian_noise}
%     The i.i.d. Gaussian noise assumption has been widely adopted in estimations in computer vision, e.g.,~\cite{lepetit2009epnp,hesch2011direct,urban2016mlpnp}.
% \end{remark}

Based on the point measurement model~\eqref{noisy_pt_measurement_model}, the residual for the $i$-th point is 
\begin{equation} \label{point_residual}
    {\bf r}_{pi}({\bf R},{\bf t})={\bf x}_i-h_{{\bf R},{\bf t}}({\bf X}_i),
\end{equation}
and the weight matrix is $\bm{\Sigma}^{-1}=\frac{1}{\sigma^2} {\bf I}_2$. 
However, the residual for the $j$-th line is not so straightforward since the 3D points associated with ${\bf p}_j$ and ${\bf q}_j$ are unknown. Let $\bar {\bf l}_j \triangleq \left[{\bf R}~~{\bf t}^{\wedge} {\bf R} \right] {\bf L}_j$. Note that ${\bf p}_j^o$ and ${\bf q}_j^o$ lie on $\bar {\bf l}_j^o\triangleq \left[{\bf R}^o~~{\bf t}^{o \wedge} {\bf R}^o \right] {\bf L}_j$. We have $\bar {\bf l}_j^{o \top} {\bf p}_j^{oh} =0$ and $\bar {\bf l}_j^{o \top} {\bf q}_j^{oh} =0$, which give 
\begin{equation} \label{modified_line_measurement}
    0=\bar {\bf l}_j^{o \top} {\bf p}_j^{h} - \bar {\bf l}_j^{o \top} \bm{\epsilon}_{{\bf p}_j}^{h},~~0=\bar {\bf l}_j^{o \top} {\bf q}_j^{h} - \bar {\bf l}_j^{o \top} \bm{\epsilon}_{{\bf q}_j}^{h},
\end{equation}
where $\bm{\epsilon}_{{\bf p}_j}^h$ and $\bm{\epsilon}_{{\bf q}_j}^h$ are the homogeneous coordinates of $\bm{\epsilon}_{{\bf p}_j}$ and $\bm{\epsilon}_{{\bf q}_j}$, respectively\footnote{For a point, we add 1 to denote its homogeneous coordinates, e.g., ${\bf p}_j^{oh}=[{\bf p}_j^{o \top}~1]^\top$, while for a noise vector, we add 0 to represent its homogeneous coordinates, e.g., ${\bm \epsilon}_{pj}^{h}=[{\bm \epsilon}_{pj}^{\top}~0]^\top$.}. 
The noise terms $\bar {\bf l}_j^{o \top} \bm{\epsilon}_{{\bf p}_j}^{h}$ and $\bar {\bf l}_j^{o \top} \bm{\epsilon}_{{\bf q}_j}^{h}$ in~\eqref{modified_line_measurement} follow the Gaussian distribution $\mathcal N \left(0,\sigma_j^2 ({\bf R}^o,{\bf t}^o)\right)$, where we use $\sigma_j^2 ({\bf R}^o,{\bf t}^o)$ to indicate that $\sigma_j^2$ is a function of ${\bf R}^o$ and ${\bf t}^o$. 
The explicit expression of $\sigma_j^2 ({\bf R}^o,{\bf t}^o)$ is $\sigma_j^2 ({\bf R}^o,{\bf t}^o)=[\bar {\bf l}_j^{o}]_{1:2}^\top {\bm \Sigma} [\bar {\bf l}_j^{o}]_{1:2}$, where $[\bar {\bf l}_j^{o}]_{1:2}$ denotes the first two elements of $\bar {\bf l}_j^{o}$.
As a result, the residual for the $j$-th line is formulated as 
\begin{equation} \label{line_residual}
    {\bf r}_{lj}({\bf R},{\bf t}) = \begin{bmatrix}
        {\bf p}_j^{h \top} \\
        {\bf q}_j^{h \top}
    \end{bmatrix} \left[{\bf R}~~{\bf t}^{\wedge} {\bf R} \right] {\bf L}_j,
\end{equation}
and the weight matrix is $\bm{\Sigma}_j^{-1}=\frac{1}{\sigma_j^2 ({\bf R}^o,{\bf t}^o)} {\bf I}_2$.

Finally, by combining the point residual~\eqref{point_residual} and line residual~\eqref{line_residual}, the ML problem is formulated as 
\begin{equation} \label{ML_problem}
    \begin{split}
        \mathop{\rm minimize}\limits_{{\bf R},{\bf t} } ~ & \frac{1}{n+m} \left( \sum_{i=1}^{n} \left\|{\bf r}_{pi}\right\|_{\bm {\Sigma}}^2 + \sum_{j=1}^{m} \left\|{\bf r}_{lj}\right\|_{\bm {\Sigma}_j}^2 \right) \\
        \mathop{\rm subject~ to} ~ & {\bf R} \in {\rm SO}(3), {\bf t} \in \mathbb R^3,
    \end{split}
\end{equation}
where we have abbreviated ${\bf r}_{pi}({\bf R},{\bf t})$ and ${\bf r}_{lj}({\bf R},{\bf t})$ to ${\bf r}_{pi}$ and ${\bf r}_{lj}$, respectively. A global solution to~\eqref{ML_problem} is called an ML estimate, denoted as $\hat {\bf R}^{\rm ML}$ and $\hat {\bf t}^{\rm ML}$, and an algorithm that optimally solves~\eqref{ML_problem} is termed an ML estimator. Note that the degrees of $\bf R$ and $\bf t$ are both $3$, and each point or line correspondence can provide $2$ independent equations. Hence, at least $n+m=3$ point/line correspondences are required to estimate the camera pose.

From the statistical theory, we know that under some regularity conditions, an ML estimate is consistent and asymptotically efficient. 
Thanks to the development of modern feature extraction techniques, a large number of features can exist in a single image.  
For example, Figure~\ref{large_sample_examples} shows two images from the ETH3D~\cite{schops2017multi} and VGG~\cite{werner2002new} datasets, respectively. One image contains thousands of point features, and the other has hundreds of line features.
The main problem of interest in this paper is to fully exploit the value of large sample features, which can be achieved by optimally solving the ML problem~\eqref{ML_problem}.  

\begin{figure*}[!htbp]
	\centering
	\begin{subfigure}[b]{0.48\textwidth}
		\centering
		\includegraphics[width=\textwidth]{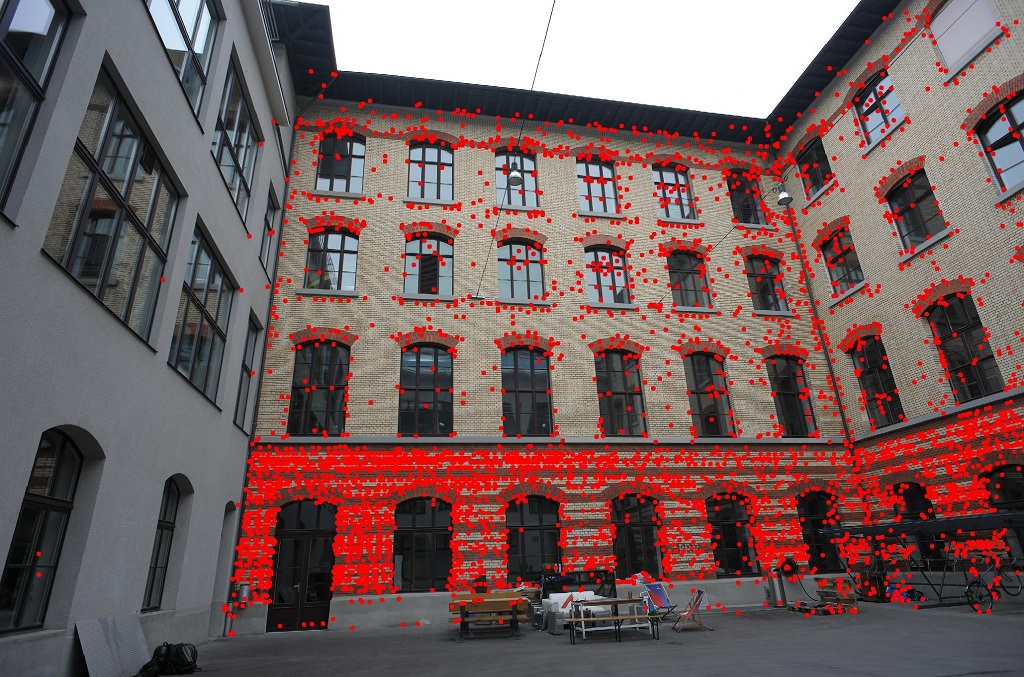}
		\caption{Point features in the ETH3D dataset}
		\label{courtyard_big}
	\end{subfigure}
	\begin{subfigure}[b]{0.48\textwidth}
		\centering
		\includegraphics[width=\textwidth]{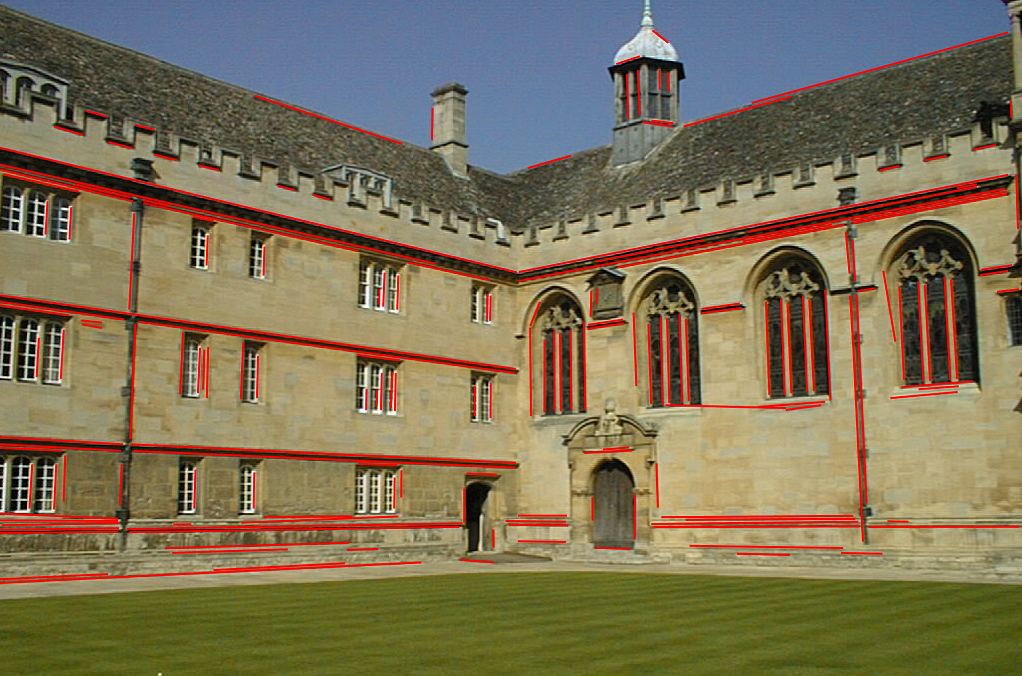}
		\caption{Line features in the VGG dataset}
		\label{Wadham_College_lines}
	\end{subfigure}
	\caption{Large sample features extracted from a single image.}
	\label{large_sample_examples}
\end{figure*}

\section{Algorithm framework}
The ML problem~\eqref{ML_problem} is nonconvex since the objective is nonconvex, and the ${\rm SO}(3)$ group is not a convex set. In addition, the covariance matrices $\bm \Sigma$ and $\bm{\Sigma}_j,j=1,\ldots,m$ are unknown. As a result, a global solution to the ML problem is hard to obtain. When using a locally iterative method to solve the ML problem, e.g., the GN algorithm, it is required that the initial value falls into the attraction neighborhood of a global minimum, otherwise, we can only obtain a locally optimal solution. 

\begin{figure*}[!htbp]
	\centering
	\includegraphics[width=0.82\textwidth]{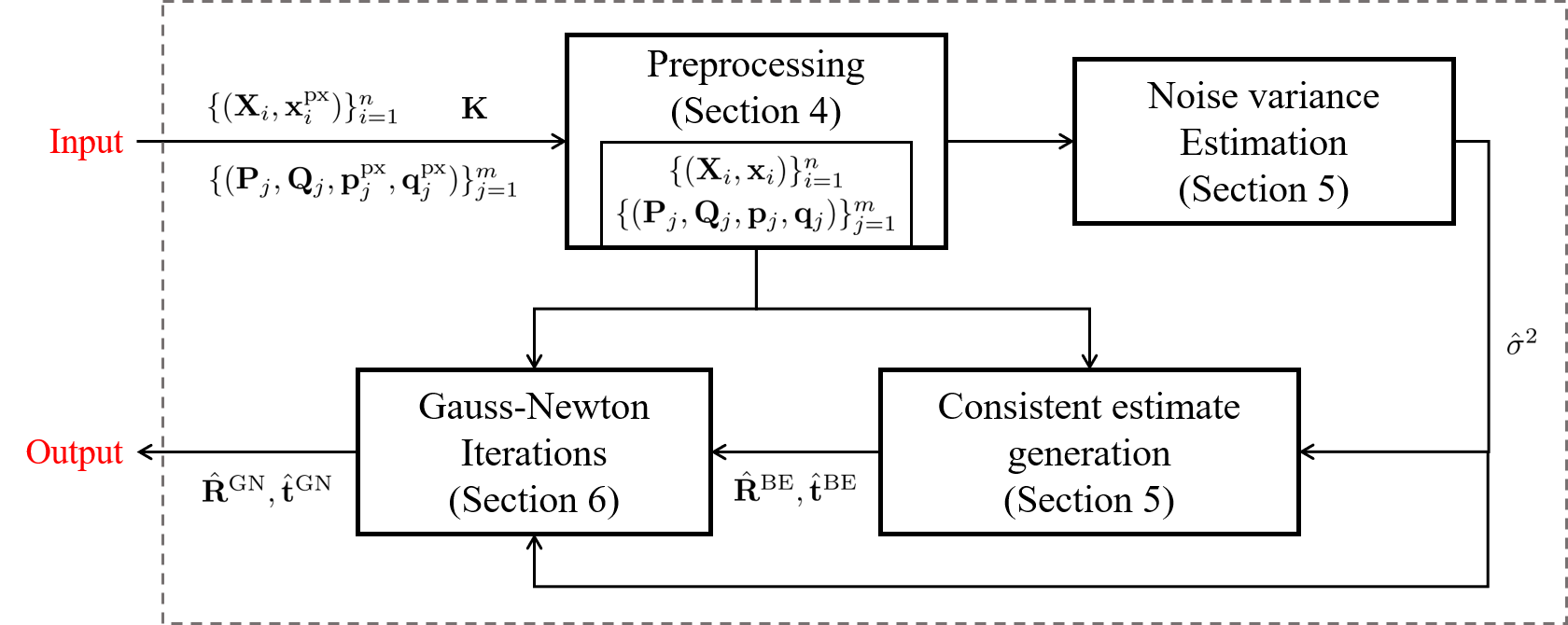}
	\caption{Overview of algorithm framework.}
	\label{algorithm_framework}
\end{figure*}

In this paper, our focus is to devise an algorithm that can yield an estimate, which owns the same asymptotic properties as the ML estimate. In other words, we are going to propose a consistent and asymptotically efficient estimator. The algorithm framework is shown in Figure~\ref{algorithm_framework}. The input of our algorithm includes the camera intrinsic matrix $\bf K$, $n$ point correspondences, and $m$ line correspondences, where all 2D points are expressed as pixel coordinates $(\cdot)^{\rm px}$. The output is the estimate of the camera pose. Our algorithm consists of four components.
In what follows, we will outline their functions respectively. 

\emph{(1) Preprocessing.} 
% The preprocessing module includes two steps. First, as illustrated in Remark~\ref{remark_on_Gaussian_noise}, the directional uncertainty of each detected 2D point (including the endpoints of detected lines) is calculated by using its neighbor pixel information. Then, points are weighted accordingly so that they approximately have the same covariance matrix $\sigma^2 {\bf I}_2$. 
By drawing lessons from the existing literature~\cite{hartley2003multiple,pvribyl2017absolute}, points and lines need to be normalized to make an algorithm numerically stable. In practice, the main principle is that points and lines with large absolute values need to be normalized. 
For the detected 2D points, we convert their pixel coordinates into normalized image coordinates based on the camera's intrinsic parameters. Take ${\bf x}_i^{\rm px}$ as example, and we have 
\begin{equation*}
    {\bf x}_i = [{\bf K}^{-1}]_{1:2} \begin{bmatrix}
        {\bf x}_i^{\rm px} \\
        1
    \end{bmatrix},
\end{equation*}
where $[{\bf K}^{-1}]_{1:2}$ denotes the first two rows of ${\bf K}^{-1}$. The line endpoints ${\bf p}_j$ and ${\bf q}_j$ can be obtained in the same way. 
For each 3D line ${\bf L}_j$, we move ${\bf P}_j$ and ${\bf Q}_j$ along the line direction such that $\|{\bf P}_j-{\bf Q}_j\|=\sqrt{3}$. 

\emph{(2) Noise variance estimation.} For a nonlinear and nonconvex optimization problem, solutions obtained by some relaxation-based methods are usually biased, and the bias is generally associated with the variance of measurement noises. 
Therefore, it is important to estimate noise variance in order to eliminate the bias, obtaining an asymptotically unbiased and consistent solution. 
The technical details of noise variance estimation will be presented in Section~\ref{consistent_estimator_design}. The resulting consistent noise variance estimate will be utilized in the components of consistent estimate generation and GN iterations.

\emph{(3) Consistent estimate generation.} We relax the original ML problem~\eqref{ML_problem} to a generalized trust region subproblem (GTRS) whose global solution can be obtained. In addition, based on the noise variance estimate, we conduct bias elimination, which yields a consistent pose estimate. The consistent estimate serves as a good initial value for the following GN iterations to find the ML estimate. 
In addition, the consistent estimator has a closed-form expression and thus is computationally efficient. 
The technical details of consistent estimate generation will be presented in Section~\ref{consistent_estimator_design}. 

\emph{(4) Gauss-Newton iterations.}  Taking the consistent estimate as the initial value, the GN algorithm is executed to find the ML estimate. With the increase of the measurement number, the consistent estimate can converge into the attraction neighborhood of the global minimum of the ML problem~\eqref{ML_problem}, hence the ML estimate is achievable in the large sample case. 
It is worth noting that due to the SO(3) constraint on the rotation matrix and ${\bm \Sigma}_j$ in~\eqref{ML_problem} being a function of the true pose, a modified GN algorithm instead of the standard one needs to be designed.
The proposed GN iterations and the resulting asymptotic efficiency will be presented in Section~\ref{asymptotic_property_analysis} in detail.

\section{Consistent estimator design} \label{consistent_estimator_design}
In this section, we will detail the initial consistent estimator in the first step. Subsection~\ref{consistent_point_estimator} is for the case of point features, Subsection~\ref{consistent_line_estimator} is for the case of line features, and the case of combined point and line features is described in Subsection~\ref{consistent_point_line_estimator}. 

\subsection{Consistent estimation from point correspondences}
\label{consistent_point_estimator}

Since ${\bf x}_i^{oh}$ is parallel to ${\bf R}^o {\bf X}_i+{\bf t}^o$ (see Figure~\ref{geometry_illustration} for geometric illustration), we have ${\bf x}_i^{oh} \times ({\bf R}^o {\bf X}_i+{\bf t}^o)=0$, i.e., 
\begin{equation} \label{point_geometry_constraint}
    0={\bf x}_i^{h} \times ({\bf R}^o {\bf X}_i+{\bf t}^o)-\bm{\varepsilon}_{{\bf x}_i},
\end{equation}
where $\bm{\varepsilon}_{{\bf x}_i}=\bm{\epsilon}_{{\bf x}_i}^h \times ({\bf R}^o {\bf X}_i+{\bf t}^o)$. 
Define ${\bm \theta}^o \triangleq {\rm vec}([{\bf R}^o~ {\bf t}^o])$ and $\overbar{\bf X}_i={\bf X}_i^\top \otimes {\bf I}_3$. Then, equation~\eqref{point_geometry_constraint} can be converted to 
\begin{equation} \label{point_geometry_constraint2}
    0=\left[{\bf x}_i^{h \wedge}\overbar{\bf X}_i~~ {\bf x}_i^{h \wedge} \right] {\bm \theta}^o-\bm{\varepsilon}_{{\bf x}_i}.
\end{equation}
Equation~\eqref{point_geometry_constraint2} only has two independent scalar equations due to the fact that ${\bf x}_i$ has two degrees of freedom~\cite{hartley2003multiple,pvribyl2017absolute}.
Hence, we select the first two rows of~\eqref{point_geometry_constraint2} and stack them for $n$ point correspondences, obtaining the matrix form
\begin{equation} \label{point_geometry_matrix_form}
    0={\bf A}_n {\bm \theta}^o-\bm{\varepsilon}_{{\bf x}},
\end{equation}
where 
\begin{equation*}
    {\bf A}_n = \begin{bmatrix}
        [{\bf x}_1^{h \wedge}]_{1:2}\overbar{\bf X}_1 & [{\bf x}_1^{h \wedge}]_{1:2} \\
        \vdots & \vdots \\
        [{\bf x}_n^{h \wedge}]_{1:2}\overbar{\bf X}_n & [{\bf x}_n^{h \wedge}]_{1:2}
    \end{bmatrix},~~\bm{\varepsilon}_{{\bf x}}=\begin{bmatrix}
        [\bm{\varepsilon}_{{\bf x}_1}]_{1:2} \\
        \vdots \\
       [\bm{\varepsilon}_{{\bf x}_n}]_{1:2}
    \end{bmatrix}.
\end{equation*}
This leads to the classical DLT problem~\cite{pvribyl2017absolute}:
\begin{equation} \label{DLT_point_problem}
    \begin{split}
        \mathop{\rm minimize}\limits_{{\bm \theta} \in \mathbb R^{12}} ~ & \frac{1}{n}\|{\bf A}_n {\bm \theta}\|^2 \\
        \mathop{\rm subject~ to} ~ & \|{\bm \theta}\|=1.
    \end{split}
\end{equation}
The constraint in~\eqref{DLT_point_problem} can be equivalently replaced with $\|{\bm \theta}\|=\alpha$, where $\alpha$ is any positive real number\footnote{Here, ``equivalently'' means that two problems yield the same optimal estimates of ${\bf R}^o$ and ${\bf t}^o$.}. The scale of $\bm \theta$ can be corrected by leveraging the constraint on a rotation matrix. 

We denote the optimal solution to problem~\eqref{DLT_point_problem} as $\hat {\bm \theta}^{\rm B}_n$. It can be calculated in a closed form as follows. Denote the SVD of ${\bf A}_n$ as ${\bf A}_n={\bf U}{\bf D}{\bf V}^\top$. Then, $\hat {\bm \theta}^{\rm B}_n$ is the singular vector corresponding to the smallest singular value of ${\bf A}_n$, i.e., the last column of $\bf V$~\cite{hartley2003multiple}. This is equivalent to finding the eigenvector corresponding to the smallest eigenvalue of ${\bf Q}_n \triangleq {\bf A}_n^\top {\bf A}_n/n$. 
Note that in~\eqref{point_geometry_matrix_form}, the regressor matrix ${\bf A}_n$ contains noisy measurements ${\bf x}_i$'s and thus is correlated with the noise term ${\bm \varepsilon}_{\bf x}$. As a result, according to estimation theory, $\hat {\bm \theta}^{\rm B}_n$ is generally not asymptotically unbiased and consistent~\cite{mu2017globally}. This is verified in our simulation, see Figure~\ref{estimate_bias} in Section~\ref{experiments}.
In what follows, we are going to eliminate the asymptotic bias of $\hat {\bm \theta}^{\rm B}_n$ and obtain a consistent estimate.
Before that, we first analyze the noise-free case. The noise-free counterpart of problem~\eqref{DLT_point_problem} is
\begin{equation} \label{nf_DLT_point_problem}
    \begin{split}
        \mathop{\rm minimize}\limits_{{\bm \theta} \in \mathbb R^{12}} ~ & \frac{1}{n}\|{\bf A}_n^o {\bm \theta}\|^2 \\
        \mathop{\rm subject~ to} ~ & \|{\bm \theta}\|=1.
    \end{split}
\end{equation}

Note that ${\bf A}_n^o$ is noise-free. From projection geometry, we have that ${\bf A}_n^o {\bm \theta}^o=0$. Hence, $\bar {\bm \theta}^o$ and $-\bar {\bm \theta}^o$, where $\bar {\bm \theta}^o={\bm \theta}^o/\|{\bm \theta}^o\|$, are global minimizers of problem~\eqref{nf_DLT_point_problem}. In addition, given $n \geq 6$ points in general position, the matrix ${\bf A}_n^o$ has a rank of $11$, in which case $\bar {\bm \theta}^o$ is the unique global minimizer of~\eqref{nf_DLT_point_problem} up to sign. Then, the true pose $[{\bf R}^o~{\bf t}^o]$ can be recovered from $\bar {\bm \theta}^o$. 

% Further, we want to figure out a condition under which $\bar {\bm \theta}^o$ is the unique global minimizer up to sign. On the one hand, since ${\bm \theta} \in \mathbb R^{12}$, a necessary condition is that there are at least $n=6$ point correspondences; On the other hand, a sufficient condition is given in the following lemma:
% \begin{lemma} \label{point_identifiability_condition}
%     If we have $n \geq 6$ point correspondences where the 3D points ${\bf X}_i$'s and the camera center do not concentrate on any critical set given in Result 22.5 in~\cite{hartley2003multiple}, then $\bar {\bm \theta}^o$ is the unique minimizer of~\eqref{nf_DLT_point_problem} up to sign.  
% \end{lemma}
% \begin{proof}
%     It is shown in~\cite{hartley2003multiple} that when ${\bf X}_i$'s and the camera center do not concentrate on any critical set given in Result 22.5 in~\cite{hartley2003multiple}, there is a unique camera matrix ${\bf H} \in \mathbb R^{3 \times 4}$ with which ${\bf x}_i^{oh} \approx {\bf H} {\bf X}_i^{h}$ for each $i$. Since ${\bf x}_i^{oh} \approx {\bf K} [{\bf R}^o~{\bf t}^o] {\bf X}_i^{h}$, the true pose $[{\bf R}^o~{\bf t}^o]$ is the unique qualified camera matrix. Hence, $\bar {\bm \theta}^o$ is the unique minimizer of~\eqref{nf_DLT_point_problem} up to sign.
% \end{proof}

However, in real applications, noise is inevitable, thus the noise-free matrix ${\bf A}_n^o$ and the problem~\eqref{nf_DLT_point_problem} are not available in practice. Note that a unit eigenvector corresponding to the smallest eigenvalue of ${\bf Q}_n^o \triangleq {\bf A}_n^{o\top} {\bf A}_n^o/n$ is a global minimizer of~\eqref{nf_DLT_point_problem}. This gives rise to the core idea of constructing a consistent estimator in this paper, i.e., analyzing and eliminating the asymptotic bias between ${\bf Q}_n$ and ${\bf Q}_n^o$. With a bias-eliminated ${\bf Q}_n^{\rm BE}$ which satisfies $\|{\bf Q}_n^{\rm BE}-{\bf Q}_n^o\| \rightarrow 0$ as $n$ increases, we can obtain a pose estimate (by executing the eigendecomposition of ${\bf Q}_n^{\rm BE}$) that converges to the true pose. 
Now, we give such a ${\bf Q}_n^{\rm BE}$. Define ${\bf X}^{h} \triangleq [{\bf X}^{h}_1~\cdots~{\bf X}^{h}_n]$, $\tilde{\bf A}_n \triangleq {\bf X}^{h \top} \otimes [{\bf e}_3~~{\bf e}_3]^\top$, and $\tilde{\bf Q}_n \triangleq \tilde{\bf A}_n^{\top} \tilde{\bf A}_n/n$. Then, we have the following theorem:
\begin{theorem} \label{point_asymptotic_bias}
    Suppose $\hat \sigma^2_n$ is a $\sqrt{n}$-consistent estimate of the measurement noise variance $\sigma^2$. Let ${\bf Q}_n^{\rm BE} \triangleq {\bf Q}_n -\hat \sigma^2_n \tilde{\bf Q}_n$. Then,
    \begin{equation} \label{Qn_decomposition}
        {\bf Q}_n^{\rm BE}={\bf Q}_n^o+O_p(1/\sqrt{n}).
    \end{equation}
    Moreover, any unit eigenvector corresponding to the smallest eigenvalue of ${\bf Q}_n^{\rm BE}$, denoted as $\hat {\bm \theta}^{\rm BE}_n$, is a $\sqrt{n}$-consistent estimate of $\bar {\bm \theta}^o$ up to sign.
\end{theorem}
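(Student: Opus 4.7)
The plan is to prove the two conclusions in sequence: first establish the matrix-level bias elimination in~\eqref{Qn_decomposition}, and then lift this to an eigenvector rate via matrix perturbation theory.

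For the first part, I would decompose ${\bf A}_n = {\bf A}_n^o + {\bf A}_n^\epsilon$, where ${\bf A}_n^\epsilon$ collects the contributions of ${\bm \epsilon}_{{\bf x}_i}$ through $[{\bm \epsilon}_{{\bf x}_i}^{h\wedge}]_{1:2}$ in each row block. Expanding the product yields
\begin{equation*}
{\bf Q}_n - {\bf Q}_n^o \;=\; \frac{1}{n}\bigl({\bf A}_n^{o\top}{\bf A}_n^\epsilon + {\bf A}_n^{\epsilon\top}{\bf A}_n^o\bigr) \;+\; \frac{1}{n}{\bf A}_n^{\epsilon\top}{\bf A}_n^\epsilon .
\end{equation*}
The two cross terms are linear in the mean-zero, independent noises, so by the central limit theorem each is $O_p(1/\sqrt{n})$. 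The quadratic term carries the bias: writing the $i$-th summand as ${\bf C}_i^\top [{\bm \epsilon}_{{\bf x}_i}^{h\wedge}]_{1:2}^\top [{\bm \epsilon}_{{\bf x}_i}^{h\wedge}]_{1:2} {\bf C}_i$ with ${\bf C}_i = {\bf X}_i^{h\top} \otimes {\bf I}_3$, a short calculation yields the collapse $[{\bm \epsilon}_{{\bf x}_i}^{h\wedge}]_{1:2}^\top [{\bm \epsilon}_{{\bf x}_i}^{h\wedge}]_{1:2} = \|{\bm \epsilon}_{{\bf x}_i}\|^2 {\bf e}_3{\bf e}_3^\top$, and pushing this through the Kronecker structure produces expectation $2\sigma^2 \bigl(\tfrac{1}{n}\sum_i {\bf X}_i^h {\bf X}_i^{h\top}\bigr) \otimes ({\bf e}_3{\bf e}_3^\top)$. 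Since $[{\bf e}_3,{\bf e}_3][{\bf e}_3,{\bf e}_3]^\top = 2{\bf e}_3{\bf e}_3^\top$, this is exactly $\sigma^2 \tilde{\bf Q}_n$. The deviation of the quadratic term from its mean is itself $O_p(1/\sqrt{n})$ by a second CLT (fourth moments being finite under the Gaussian noise model), so ${\bf Q}_n - \sigma^2 \tilde{\bf Q}_n - {\bf Q}_n^o = O_p(1/\sqrt{n})$. Finally, since $\hat\sigma_n^2 - \sigma^2 = O_p(1/\sqrt{n})$ and $\tilde{\bf Q}_n$ is bounded, replacing $\sigma^2$ with $\hat\sigma_n^2$ contributes only another $O_p(1/\sqrt{n})$ term, yielding~\eqref{Qn_decomposition}.

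For the second part, I would invoke a standard eigenvector perturbation argument. From the noise-free analysis preceding the theorem, ${\bf Q}_n^o$ is positive semidefinite with smallest eigenvalue $0$ and one-dimensional null space spanned by $\bar{\bm \theta}^o$ (under $n \geq 6$ and general position, so that ${\bf A}_n^o$ has rank $11$). Under a mild nondegeneracy assumption on the 3D scene, the second-smallest eigenvalue of ${\bf Q}_n^o$ is bounded below by some $\lambda>0$ uniformly for all sufficiently large $n$. Applying the Davis-Kahan $\sin\Theta$ theorem to the pair $({\bf Q}_n^{\rm BE},{\bf Q}_n^o)$ then gives $\|\hat{\bm \theta}_n^{\rm BE} \mp \bar{\bm \theta}^o\| \leq C\,\|{\bf Q}_n^{\rm BE}-{\bf Q}_n^o\|/\lambda$ for an appropriate sign choice, and combining with the bound just established produces the $\sqrt{n}$-consistency of $\hat{\bm \theta}_n^{\rm BE}$.

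The main obstacle is the bias-identification bookkeeping: spotting the collapse $[{\bm \epsilon}_{{\bf x}_i}^{h\wedge}]_{1:2}^\top [{\bm \epsilon}_{{\bf x}_i}^{h\wedge}]_{1:2} = \|{\bm \epsilon}_{{\bf x}_i}\|^2 {\bf e}_3{\bf e}_3^\top$ and matching the resulting Kronecker structure exactly to the seemingly ad-hoc definition $\tilde{\bf A}_n = {\bf X}^{h\top} \otimes [{\bf e}_3,{\bf e}_3]^\top$ is what motivates the precise form of the correction $\hat\sigma_n^2\tilde{\bf Q}_n$. Once this algebraic identity is secured, the rest reduces to a routine assembly of CLT bounds and Davis-Kahan, modulo the uniform eigengap hypothesis that must be ensured (implicitly) by general position of $\{{\bf X}_i\}$.
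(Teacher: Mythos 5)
Your proposal is correct and takes essentially the same route as the paper: decompose ${\bf Q}_n-{\bf Q}_n^o$ into noise-linear cross terms (which are $O_p(1/\sqrt{n})$ by an LLN-type bound for mean-zero independent terms) plus a quadratic-in-noise term whose mean is exactly $\sigma^2\tilde{\bf Q}_n$, substitute the $\sqrt{n}$-consistent $\hat\sigma_n^2$ using boundedness of $\tilde{\bf Q}_n$, and then pass from matrix closeness to eigenvector closeness. Your explicit collapse identity $[{\bm \epsilon}_{{\bf x}_i}^{h\wedge}]_{1:2}^\top[{\bm \epsilon}_{{\bf x}_i}^{h\wedge}]_{1:2}=\|{\bm \epsilon}_{{\bf x}_i}\|^2{\bf e}_3{\bf e}_3^\top$ and the Davis--Kahan step simply make precise what the paper asserts directly as $\frac{1}{n}\Delta{\bf A}_n^\top\Delta{\bf A}_n=\frac{\sigma^2}{n}\tilde{\bf A}_n^\top\tilde{\bf A}_n+O_p(1/\sqrt{n})$ and as ``continuity of eigenvectors'' (with the eigengap implicitly guaranteed by the rank-$11$ general-position assumption).
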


The proof of Theorem~\ref{point_asymptotic_bias} is presented in Appendix~\ref{proof_of_point_consistency}. Corollary~\ref{point_asymptotically_unbiased} is a straightforward extension of Theorem~\ref{point_asymptotic_bias}.
\begin{corollary} \label{point_asymptotically_unbiased}
    The estimate $\hat {\bm \theta}^{\rm BE}_n$ is an asymptotically unbiased estimate of $\bar {\bm \theta}^o$ up to sign.
\end{corollary}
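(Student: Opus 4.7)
The plan is to deduce the asymptotic unbiasedness directly from the $\sqrt{n}$-consistency established in Theorem~\ref{point_asymptotic_bias} together with the fact that $\hat{\bm{\theta}}_n^{\rm BE}$ is, by construction, a unit vector. The starting point is the conclusion $\hat{\bm{\theta}}_n^{\rm BE}-\bar{\bm{\theta}}^o = O_p(1/\sqrt{n})$ (up to sign), which in particular implies convergence in probability: $\hat{\bm{\theta}}_n^{\rm BE} \xrightarrow{p} \bar{\bm{\theta}}^o$. The task is then to upgrade this probabilistic convergence to convergence of the expectation.

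The first step I would take is to resolve the sign ambiguity. Since an eigenvector is only determined up to sign, $\mathbb{E}[\hat{\bm{\theta}}_n^{\rm BE}]$ is not well-defined without a selection rule; I would fix, for each $n$, the sign of $\hat{\bm{\theta}}_n^{\rm BE}$ so that $\hat{\bm{\theta}}_n^{\rm BE\top} \bar{\bm{\theta}}^o \geq 0$ (equivalently, choose the representative in the half-space containing $\bar{\bm{\theta}}^o$). Under this convention, Theorem~\ref{point_asymptotic_bias} says precisely that $\hat{\bm{\theta}}_n^{\rm BE} \to \bar{\bm{\theta}}^o$ in probability.

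Next, I would invoke the fact that $\hat{\bm{\theta}}_n^{\rm BE}$ is a unit eigenvector, so $\|\hat{\bm{\theta}}_n^{\rm BE}\|=1$ for every $n$; this is a deterministic bound independent of $n$. Combining this uniform boundedness with the in-probability convergence puts us exactly in the setting of the bounded convergence theorem, which yields
\begin{equation*}
    \lim_{n \rightarrow \infty} \mathbb{E}\left[\hat{\bm{\theta}}_n^{\rm BE}\right] = \bar{\bm{\theta}}^o.
\end{equation*}
Equivalently, the bias $\mathbb{E}[\hat{\bm{\theta}}_n^{\rm BE}]-\bar{\bm{\theta}}^o$ vanishes as $n \to \infty$, which is the definition of asymptotic unbiasedness recalled in Section~\ref{preliminaries_probability_statistics}.

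I expect the only genuinely delicate point to be the sign convention: without fixing it, the symmetry $\hat{\bm{\theta}}_n^{\rm BE} \leftrightarrow -\hat{\bm{\theta}}_n^{\rm BE}$ would make the expectation vanish trivially and the statement vacuous. Once the sign is pinned down as above, the remainder of the argument reduces to the standard ``bounded plus convergent in probability implies convergent in mean'' principle, and no further calculation is needed. The $\sqrt{n}$-rate from Theorem~\ref{point_asymptotic_bias} is stronger than required here; mere consistency suffices, but the stronger conclusion is already available and it is natural to cite it.
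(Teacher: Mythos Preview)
Your proposal is correct and follows essentially the same route as the paper: consistency from Theorem~\ref{point_asymptotic_bias} plus the deterministic unit-norm bound, then upgrade convergence in probability to convergence of expectations. The only cosmetic difference is that the paper spells out the $\varepsilon$-splitting argument by hand (bounding $\mathbb{E}[\|\hat{\bm{\theta}}^{\rm BE}_n-\bar{\bm{\theta}}^o\|_1]$ by $\varepsilon + 2\,\mathbb{P}(\|\hat{\bm{\theta}}^{\rm BE}_n-\bar{\bm{\theta}}^o\|_1>\varepsilon)$), whereas you invoke the bounded convergence theorem by name; your explicit treatment of the sign selection is also a bit more careful than the paper's, which simply fixes one of the two cases at the outset.
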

\begin{proof}
    We will prove the case where $\hat {\bm \theta}^{\rm BE}_n$ is a $\sqrt{n}$-consistent estimate of $\bar {\bm \theta}^o$. The proof of the case where $\hat {\bm \theta}^{\rm BE}_n$ is a $\sqrt{n}$-consistent estimate of $-\bar {\bm \theta}^o$ is similar. 
    Due to the consistency of $\hat {\bm \theta}^{\rm BE}_n$, for any $\epsilon >0$, we have $\lim_{n \rightarrow \infty} P(\|\hat {\bm \theta}^{\rm BE}_n-\bar {\bm \theta}^o\|_1>\epsilon )=0$. In addition, note that both $\hat {\bm \theta}^{\rm BE}_n$ and $\bar {\bm \theta}^o$ are unit vectors. It holds that $\|\hat {\bm \theta}^{\rm BE}_n-\bar {\bm \theta}^o\|_1 \leq 2$. Then,
    \begin{align*}
        & \mathbb E[\|\hat {\bm \theta}^{\rm BE}_n-\bar {\bm \theta}^o\|_1] \\
         = &\int_{\|\hat {\bm \theta}^{\rm BE}_n-\bar {\bm \theta}^o\|_1 \leq \epsilon} \|\hat {\bm \theta}^{\rm BE}_n-\bar {\bm \theta}^o\|_1 P(\hat {\bm \theta}^{\rm BE}_n) d \hat {\bm \theta}^{\rm BE}_n  \\
        & + \int_{\|\hat {\bm \theta}^{\rm BE}_n-\bar {\bm \theta}^o\|_1 > \epsilon} \|\hat {\bm \theta}^{\rm BE}_n-\bar {\bm \theta}^o\|_1 P(\hat {\bm \theta}^{\rm BE}_n) d \hat {\bm \theta}^{\rm BE}_n \\
        \leq & \epsilon + 2 P\left(\|\hat {\bm \theta}^{\rm BE}_n-\bar {\bm \theta}^o\|_1>\epsilon \right).
    \end{align*}
    Hence, $\mathbb E[\|\hat {\bm \theta}^{\rm BE}_n-\bar {\bm \theta}^o\|_1] \rightarrow 0$ as $n$ goes to infinity, which implies that $\hat {\bm \theta}^{\rm BE}_n$ is asymptotically unbiased and completes the proof. 
\end{proof}

To leverage Theorem~\ref{point_asymptotic_bias}, the remaining problem to be tackled is to obtain a $\sqrt{n}$-consistent estimate of $\sigma^2$. In the work~\cite{stoica1982bias}, a consistent estimator of noise variance is proposed for estimating output-error rational models. Inspired by this work, we give a consistent estimate of noise variance for model~\eqref{noisy_pt_measurement_model} in the following theorem:
\begin{theorem} \label{point_consistent_noise_estimation}
    Let $\hat \sigma^2_n \triangleq 1/\lambda_{\rm max}({\bf Q}_n^{-1}\tilde{\bf Q}_n)$. Then, $\hat \sigma^2_n-\sigma^2=O_p(1/\sqrt{n})$, i.e., $\hat \sigma^2_n$ is a $\sqrt{n}$-consistent estimate of $\sigma^2$.
\end{theorem}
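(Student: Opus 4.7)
The plan is to reduce the statement to a perturbation analysis of a generalized eigenvalue problem, building on the structural decomposition of ${\bf Q}_n$ that also underlies Theorem~\ref{point_asymptotic_bias}. Specifically, I would first verify the key identity
\begin{equation*}
{\bf Q}_n - {\bf Q}_n^o - \sigma^2 \tilde{\bf Q}_n = O_p(1/\sqrt{n}).
\end{equation*}
Writing ${\bf A}_n = {\bf A}_n^o + \Delta_n$, where $\Delta_n$ collects the contribution from $\bm{\epsilon}_{{\bf x}_i}^h \times (\cdot)$ appearing through ${\bf x}_i^{h\wedge}$, a direct expansion combined with $\mathbb{E}[\bm{\epsilon}_{{\bf x}_i}\bm{\epsilon}_{{\bf x}_i}^\top] = \sigma^2 {\bf I}_2$ and independence across $i$ yields $\mathbb{E}[{\bf Q}_n] = {\bf Q}_n^o + \sigma^2 \tilde{\bf Q}_n$ exactly (by inspection of the structure of $\tilde{\bf A}_n = {\bf X}^{h\top}\otimes[{\bf e}_3~{\bf e}_3]^\top$), while a central-limit-style argument on the sample mean controls the fluctuation at rate $1/\sqrt{n}$.

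Next, I would interpret the quantity $1/\hat\sigma_n^2 = \lambda_{\max}({\bf Q}_n^{-1}\tilde{\bf Q}_n)$ as the largest generalized eigenvalue of the pencil $(\tilde{\bf Q}_n,{\bf Q}_n)$, and substitute the decomposition above. The generalized eigenvalue equation $\tilde{\bf Q}_n\bm{v} = \lambda {\bf Q}_n \bm{v}$ becomes
\begin{equation*}
(1-\lambda\sigma^2)\tilde{\bf Q}_n\bm{v} = \lambda {\bf Q}_n^o \bm{v} + O_p(1/\sqrt{n}).
\end{equation*}
Ignoring the remainder, this is equivalent to a pencil problem for $({\bf Q}_n^o,\tilde{\bf Q}_n)$ with eigenvalue $\mu = 1/\lambda - \sigma^2$. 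Since $\tilde{\bf Q}_n \succ 0$ (which I would confirm from its explicit form, using that the 3D point cloud is non-degenerate) and ${\bf Q}_n^o \succeq 0$ with null space exactly $\mathrm{span}(\bar{\bm\theta}^o)$ under the standing assumption of $n\ge 6$ points in general position from Theorem~\ref{point_asymptotic_bias}, all generalized eigenvalues satisfy $\mu \geq 0$ with equality attained uniquely at $\bm{v} = \bar{\bm\theta}^o$. Consequently, $\lambda_{\max} = 1/\sigma^2$ in the noise-free limit, attained at $\bar{\bm\theta}^o$, and this maximal eigenvalue is simple and strictly isolated from the rest of the spectrum.

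Finally, I would transfer the $O_p(1/\sqrt{n})$ matrix perturbation into an eigenvalue perturbation. The isolation and simplicity of the limiting largest eigenvalue let me apply standard perturbation theory for Hermitian (generalized) eigenvalue problems, e.g., Weyl/Lidskii-type bounds together with the fact that ${\bf Q}_n^{-1}$ is stochastically bounded (since $\lambda_{\min}({\bf Q}_n) \to \sigma^2 \lambda_{\min}(\tilde{\bf Q}_n) > 0$), yielding $\lambda_{\max}({\bf Q}_n^{-1}\tilde{\bf Q}_n) - 1/\sigma^2 = O_p(1/\sqrt{n})$. A one-line delta-method argument for the map $\lambda\mapsto 1/\lambda$ at the positive point $1/\sigma^2$ then converts this to $\hat\sigma_n^2 - \sigma^2 = O_p(1/\sqrt{n})$.

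The main obstacle will be rigorously justifying the $\sqrt{n}$-rate in the eigenvalue perturbation step. Generic matrix perturbation bounds deliver the sharp rate only for \emph{simple} and \emph{isolated} eigenvalues, so the argument hinges on carefully showing that the limiting largest generalized eigenvalue $1/\sigma^2$ is both simple (one-dimensional eigenspace spanned by $\bar{\bm\theta}^o$) and uniformly separated from the remaining spectrum as $n$ grows. Both properties trace back to the rank-$11$ structure of ${\bf A}_n^o$ under general position, already invoked in Theorem~\ref{point_asymptotic_bias}; once those are made quantitative, everything else is routine perturbation bookkeeping.
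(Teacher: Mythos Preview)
Your overall strategy---establishing ${\bf Q}_n = {\bf Q}_n^o + \sigma^2\tilde{\bf Q}_n + O_p(1/\sqrt{n})$ and then passing this matrix perturbation to the top eigenvalue---matches the paper's. However, a step on which your argument rests is false: $\tilde{\bf Q}_n$ is \emph{not} positive definite. From $\tilde{\bf A}_n = {\bf X}^{h\top}\otimes[{\bf e}_3~{\bf e}_3]^\top$ one computes
\begin{equation*}
\tilde{\bf Q}_n = \tfrac{1}{n}\tilde{\bf A}_n^\top\tilde{\bf A}_n
= \Bigl(\tfrac{1}{n}{\bf X}^h{\bf X}^{h\top}\Bigr)\otimes\bigl([{\bf e}_3~{\bf e}_3][{\bf e}_3~{\bf e}_3]^\top\bigr)
= \Bigl(\tfrac{1}{n}{\bf X}^h{\bf X}^{h\top}\Bigr)\otimes\bigl(2\,{\bf e}_3{\bf e}_3^\top\bigr),
\end{equation*}
a Kronecker product of a rank-$4$ and a rank-$1$ matrix, so $\mathrm{rank}(\tilde{\bf Q}_n)\le 4$ in $\mathbb{R}^{12\times 12}$. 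Consequently your reformulated pencil $({\bf Q}_n^o,\tilde{\bf Q}_n)$ is singular, the conclusion ``all generalized eigenvalues satisfy $\mu\ge 0$'' does not follow as you argue it, and the line $\lambda_{\min}({\bf Q}_n)\to\sigma^2\lambda_{\min}(\tilde{\bf Q}_n)>0$ is wrong since $\lambda_{\min}(\tilde{\bf Q}_n)=0$.

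The paper sidesteps this by invoking a clean algebraic lemma: if ${\bf Q}={\bf R}+{\bf S}$ with ${\bf Q}\succ 0$ and ${\bf R}\succeq 0$ having a zero eigenvalue, then $\lambda_{\max}({\bf Q}^{-1}{\bf S})=1$. With ${\bf R}={\bf Q}_n^o$ and ${\bf S}=\sigma^2\tilde{\bf Q}_n$ this gives $\lambda_{\max}\bigl(({\bf Q}_n^o+\sigma^2\tilde{\bf Q}_n)^{-1}\tilde{\bf Q}_n\bigr)=1/\sigma^2$ directly; the only positivity needed is ${\bf Q}_n^o+\sigma^2\tilde{\bf Q}_n\succ 0$, i.e.\ $\bar{\bm\theta}^o\notin\ker\tilde{\bf Q}_n$, which holds because $\tilde{\bf A}_n\bar{\bm\theta}^o$ encodes the (positive) camera-frame depths ${\bf e}_3^\top({\bf R}^o{\bf X}_i+{\bf t}^o)$. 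Your route is repairable---stay with the well-posed pencil $(\tilde{\bf Q}_n,{\bf Q}_n)$ throughout, never invert $\tilde{\bf Q}_n$, and read off simplicity of the top eigenvalue from $\ker{\bf Q}_n^o=\mathrm{span}(\bar{\bm\theta}^o)$---but as written it hinges on a positivity claim that fails.
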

\begin{proof}
    The proof is mainly based on the following lemma:
    \begin{lemma}[{\cite[Lemma 6]{zeng2023consistent}}] \label{lemma_largest_eig}
	Let ${\bf R}$ and ${\bf S}$ be two real symmetric matrices and ${\bf Q}={\bf R}+{\bf S}$. If ${\bf Q}$ is positive-definite and ${\bf R}$ is positive-semidefinite with $0$ eigenvalues, then $\lambda_{\rm max}( {\bf Q}^{-1} {\bf S})=1$.
\end{lemma}
From~\eqref{Qn_decomposition}, we have
\begin{equation*}
    {\bf Q}_n={\bf Q}_n^o+\sigma^2 \tilde{\bf Q}_n + O_p(1/\sqrt{n}).
\end{equation*}
This implies that $\lim_{n \rightarrow \infty} {\bf Q}_n={\bf Q}_n^o+\sigma^2 \tilde{\bf Q}_n$. On the one hand, in the noise-free case, ${\bf A}_n^o {\bm \theta}^o=0$, which implies that ${\bf A}_n^o$ does not have full column rank. Hence, ${\bf Q}_n^o$ is a positive-semidefinite matrix with $0$ eigenvalues. 
On the other hand, since ${\bf A}_n$ is corrupted by random noises, given $n \geq 6$ point correspondences, ${\bf A}_n$ has full column rank with probability one, i.e., ${\bf Q}_n$ is almost surely positive-definite. Then, according to Lemma~\ref{lemma_largest_eig}, we have $\lim_{n \rightarrow \infty} 1/\lambda_{\rm max}( {\bf Q}_n^{-1}\tilde{\bf Q}_n)=\sigma^2$, i.e., $\hat \sigma^2_n$ converges to $\sigma^2$. Moreover, since ${\bf Q}_n-\sigma^2 \tilde{\bf Q}_n$ converges to ${\bf Q}_n^o$ with a rate of $1/\sqrt{n}$, the convergence rate of $\hat \sigma^2_n$ is also $1/\sqrt{n}$, which completes the proof.
\end{proof}

In the rest of this subsection, we give a procedure to recover pose $[\hat {\bf R}^{\rm BE}_n~~\hat {\bf t}^{\rm BE}_n]$ from $\hat {\bm \theta}^{\rm BE}_n \in \mathbb R^{12}$.
The procedure includes two parts: First, we correct the scale of $\hat {\bm \theta}^{\rm BE}_n$. By drawing lessons from~\cite{pvribyl2017absolute}, we utilize the fact that all three singular values of a rotation matrix $\bf R$ should be $1$ rather than ${\rm det}({\bf R})=1$ to achieve a more robust result in practice; Second, we correct the sign of $\hat {\bm \theta}^{\rm BE}_n$ and project its first $9$ elements to a rotation matrix. The specific procedure is given in Algorithm~\ref{point_recover_pose_algorithm}, which mainly depends on Algorithm~\ref{rotation_projection_algorithm}. 
Note that $\hat {\bm \theta}^{\rm BE}_n$ is a $\sqrt{n}$-consistent and asymptotically unbiased estimate of $\bar {\bm \theta}^o$ up to sign, and the operations in Algorithm~\ref{point_recover_pose_algorithm} are all continuous. The retrieved pose $[\hat {\bf R}^{\rm BE}_n~~\hat {\bf t}^{\rm BE}_n]$ is a $\sqrt{n}$-consistent and asymptotically unbiased estimate of $[{\bf R}^o~~{\bf t}^o]$.

\begin{algorithm}
	\caption{Rotation matrix recovery}
	\label{rotation_projection_algorithm}
	\begin{algorithmic}[1]
		\Statex {\bf Input:} ${\bm \theta} \in \mathbb R^{9}$.
		\Statex {\bf Output:} ${\bf R} \in {\rm SO}(3)$, the scale correction $s$, and the sign correction $d$. 
		\Statex \textbf{Correct the scale}:
  \State ${\bf R}_1 \leftarrow \left[[{\bm \theta}]_{1:3}~[{\bm \theta}]_{4:6}~[{\bm \theta}]_{7:9}\right]$;
  \State ${\bf U}{\bf D}{\bf V}^\top \leftarrow {\rm SVD}({\bf R}_1)$;
  \State $s \leftarrow \sum_{i=1}^{3} [{\bf D}]_{ii}/3$;
  \State ${\bf R}_2 \leftarrow {\bf R}_1/s$;
  \Statex \textbf{Meet ${\rm SO}(3)$ constraint}:
  \State ${\bf U}{\bf D}{\bf V}^\top \leftarrow {\rm SVD}({\bf R}_2)$;
  \State $d \leftarrow {\rm det}({\bf U}{\bf V}^\top)$; 
  \State ${\bf R} \leftarrow d {\bf U}{\bf V}^\top$.
	\end{algorithmic}
\end{algorithm}

\begin{algorithm}
	\caption{Recover $\hat {\bf R}^{\rm BE}_n$ and $\hat {\bf t}^{\rm BE}_n$ from $\hat {\bm \theta}^{\rm BE}_n \in \mathbb R^{12}$}
	\label{point_recover_pose_algorithm}
	\begin{algorithmic}[1]
		\Statex {\bf Input:} $\hat {\bm \theta}^{\rm BE}_n \in \mathbb R^{12}$.
		\Statex {\bf Output:} $\hat {\bf R}^{\rm BE}_n$ and $\hat {\bf t}^{\rm BE}_n$. 
\State Apply Algorithm~\ref{rotation_projection_algorithm} to $[\hat {\bm \theta}^{\rm BE}_n]_{1:9}$ and obtain $\hat {\bf R}^{\rm BE}_n$, $s$, and $d$;
  \State $\hat {\bf t}^{\rm BE}_n \leftarrow d [\hat {\bm \theta}^{\rm BE}_n]_{10:12}/s$.
	\end{algorithmic}
\end{algorithm}

\subsection{Consistent estimation from line correspondences}
\label{consistent_line_estimator}
With a little abuse of notation, in this subsection, we define ${\bm \theta}^o \triangleq {\rm vec}([{\bf R}^o~ {\bf t}^{o \wedge}{\bf R}^o]) \in \mathbb R^{18}$. By combining~\eqref{modified_line_measurement} and $\bar {\bf l}_j^o = \left[{\bf R}^o~~{\bf t}^{o\wedge} {\bf R}^o \right] {\bf L}_j$, we can also construct a matrix-form equation for $m$ line correspondences as follows:
\begin{equation} \label{line_geometry_matrix_form}
    0={\bf A}_m {\bm \theta}^o-\bm{\varepsilon}_{{\bf l}},
\end{equation}
where 
\begin{equation*}
    {\bf A}_m = \begin{bmatrix}
        [{\bf p}_1^h~{\bf q}_1^h]^\top \overbar {\bf L}_1 \\
        \vdots \\
        [{\bf p}_m^h~{\bf q}_m^h]^\top \overbar {\bf L}_m
    \end{bmatrix},~~\bm{\varepsilon}_{{\bf l}}=\begin{bmatrix}
        [\bm{\epsilon}_{{\bf p}_1}^{h}~\bm{\epsilon}_{{\bf q}_1}^{h}]^\top \bar {\bf l}_1^{o} \\
        \vdots \\
        [\bm{\epsilon}_{{\bf p}_m}^{h}~\bm{\epsilon}_{{\bf q}_m}^{h}]^\top \bar {\bf l}_m^{o}
    \end{bmatrix},
\end{equation*}
and $\overbar {\bf L}_j={\bf L}_j^\top \otimes {\bf I}_3$. 
This leads to the DLT problem:
\begin{equation} \label{DLT_line_problem}
    \begin{split}
        \mathop{\rm minimize}\limits_{{\bm \theta} \in \mathbb R^{18}} ~ & \frac{1}{m}\|{\bf A}_m {\bm \theta}\|^2 \\
        \mathop{\rm subject~ to} ~ & \|{\bm \theta}\|=1.
    \end{split}
\end{equation}

Since ${\bm \theta} \in \mathbb R^{18}$, and each line correspondence gives two independent equations, at least $m=9$ line correspondences are required to ensure~\eqref{DLT_line_problem} has a unique minimizer up to sign. We can calculate a minimizer, denoted as $\hat {\bm \theta}_m^{\rm B}$, via eigendecomposition. 
Note that in~\eqref{line_geometry_matrix_form}, the regressor matrix ${\bf A}_m$ contains noisy measurements ${\bf p}_j$'s and ${\bf q}_j$'s and thus is correlated with the noise term ${\bm \varepsilon}_{\bf l}$. Hence, $\hat {\bm \theta}^{\rm B}_m$ in the line case also lacks asymptotic unbiasedness and consistency. Similar to the point case, define ${\bf Q}_m \triangleq {\bf A}_m^{\top} {\bf A}_m/m$ and its noise-free counterpart ${\bf Q}_m^o \triangleq {\bf A}_m^{o\top} {\bf A}_m^o/m$. We are going to analyze and eliminate the asymptotic bias between ${\bf Q}_m$ and ${\bf Q}_m^o$. 
Let ${\bf L} \triangleq [{\bf L}_1~\cdots~{\bf L}_m]$, $\tilde{\bf A}_{m1} \triangleq {\bf L}^{\top} \otimes [{\bf e}_1~~{\bf e}_1]^\top$, $\tilde{\bf A}_{m2} \triangleq {\bf L}^{\top} \otimes [{\bf e}_2~~{\bf e}_2]^\top$, $\tilde{\bf Q}_{m1} \triangleq \tilde{\bf A}_{m1}^{\top} \tilde{\bf A}_{m1}/m$, and $\tilde{\bf Q}_{m2} \triangleq \tilde{\bf A}_{m2}^{\top} \tilde{\bf A}_{m2}/m$. Then, we have the following theorem:
\begin{theorem} \label{line_asymptotic_bias}
    Suppose $\hat \sigma^2_m$ is a $\sqrt{m}$-consistent estimate of the measurement noise variance $\sigma^2$. Let ${\bf Q}_m^{\rm BE} \triangleq {\bf Q}_m -\hat \sigma^2_m ( \tilde{\bf Q}_{m1}+\tilde{\bf Q}_{m2})$. Then,
    \begin{equation} \label{Qm_decomposition}
        {\bf Q}_m^{\rm BE}={\bf Q}_m^o+O_p(1/\sqrt{m}).
    \end{equation}
    Moreover, any unit eigenvector corresponding to the smallest eigenvalue of ${\bf Q}_m^{\rm BE}$, denoted as $\hat {\bm \theta}^{\rm BE}_m$, is a $\sqrt{m}$-consistent estimate of $\bar {\bm \theta}^o$ up to sign.
\end{theorem}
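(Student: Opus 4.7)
The plan is to mirror the structure of the proof of Theorem~\ref{point_asymptotic_bias}, but with a careful accounting of how noises in the two endpoints of each line segment propagate through the homogeneous coordinate embedding. First, I would decompose ${\bf A}_m = {\bf A}_m^o + \Delta{\bf A}_m$, where the $j$-th block row of $\Delta{\bf A}_m$ is $[{\bm \epsilon}_{{\bf p}_j}^h~{\bm \epsilon}_{{\bf q}_j}^h]^\top \overbar{\bf L}_j$. Expanding ${\bf Q}_m={\bf A}_m^\top{\bf A}_m/m$ then produces three pieces: the noise-free ${\bf Q}_m^o$, a cross-term $({\bf A}_m^{o\top}\Delta{\bf A}_m+\Delta{\bf A}_m^\top{\bf A}_m^o)/m$, and a quadratic term $\Delta{\bf A}_m^\top \Delta{\bf A}_m/m$. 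The cross-term has zero mean, is a sum of independent random matrices with bounded variance (because the ${\bf L}_j$ are deterministic and the noises are independent centered Gaussians), and hence is $O_p(1/\sqrt{m})$ by a matrix CLT / Chebyshev bound.

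The central computation is identifying the mean of the quadratic term. Using independence of ${\bm \epsilon}_{{\bf p}_j}$ and ${\bm \epsilon}_{{\bf q}_j}$ and the fact that the homogeneous noises append a zero, I would derive
\begin{equation*}
\mathbb E\!\left[[{\bm \epsilon}_{{\bf p}_j}^h~{\bm \epsilon}_{{\bf q}_j}^h][{\bm \epsilon}_{{\bf p}_j}^h~{\bm \epsilon}_{{\bf q}_j}^h]^\top\right]=2\sigma^2({\bf e}_1{\bf e}_1^\top+{\bf e}_2{\bf e}_2^\top).
\end{equation*}
Applying the Kronecker identity $({\bf A}\otimes{\bf B})({\bf C}\otimes{\bf D})=({\bf A}{\bf C})\otimes({\bf B}{\bf D})$ to $\overbar{\bf L}_j={\bf L}_j^\top\otimes{\bf I}_3$ gives $\sum_j \overbar{\bf L}_j^\top {\bf e}_k{\bf e}_k^\top \overbar{\bf L}_j = ({\bf L}{\bf L}^\top)\otimes({\bf e}_k{\bf e}_k^\top)$, from which $\mathbb E[\Delta{\bf A}_m^\top\Delta{\bf A}_m]/m=\sigma^2(\tilde{\bf Q}_{m1}+\tilde{\bf Q}_{m2})$ after matching the definition of $\tilde{\bf A}_{mk}$. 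Concentration of the sample quadratic form around its expectation (again by variance bounds on independent sums) yields $\Delta{\bf A}_m^\top\Delta{\bf A}_m/m=\sigma^2(\tilde{\bf Q}_{m1}+\tilde{\bf Q}_{m2})+O_p(1/\sqrt{m})$.

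Combining the three pieces gives ${\bf Q}_m={\bf Q}_m^o+\sigma^2(\tilde{\bf Q}_{m1}+\tilde{\bf Q}_{m2})+O_p(1/\sqrt{m})$. Subtracting $\hat\sigma_m^2(\tilde{\bf Q}_{m1}+\tilde{\bf Q}_{m2})$ and invoking the $\sqrt{m}$-consistency of $\hat\sigma_m^2$ together with the boundedness of $\tilde{\bf Q}_{m1}+\tilde{\bf Q}_{m2}$ then produces the decomposition~\eqref{Qm_decomposition}. This step is essentially bookkeeping once the mean calculation is in place.

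For the eigenvector assertion I would appeal to standard matrix perturbation theory. In the noise-free problem, assuming $m\geq 9$ lines in general position, ${\bf A}_m^o$ has rank $17$, so ${\bf Q}_m^o$ is positive semidefinite with a simple zero eigenvalue whose associated unit eigenvector is $\pm\bar{\bm\theta}^o$, and the second-smallest eigenvalue is bounded away from zero uniformly in $m$ (this uniform spectral-gap condition is the one nontrivial ingredient and would be treated as a genericity/identifiability assumption, mirroring its role in Theorem~\ref{point_asymptotic_bias}). Then a Davis--Kahan / sin-$\Theta$ bound applied to ${\bf Q}_m^{\rm BE}={\bf Q}_m^o+O_p(1/\sqrt{m})$ yields that the corresponding unit eigenvector $\hat{\bm\theta}^{\rm BE}_m$ differs from $\bar{\bm\theta}^o$ (with appropriate sign) by a perturbation of the same order $O_p(1/\sqrt{m})$, establishing the $\sqrt{m}$-consistency claim. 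The main obstacle will be the clean book-keeping of the quadratic-term expectation and the Kronecker identities, rather than any deep technical difficulty.
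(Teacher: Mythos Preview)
Your proposal is correct and follows essentially the same route as the paper's proof: decompose ${\bf A}_m={\bf A}_m^o+\Delta{\bf A}_m$, control the cross-terms by Lemma~\ref{lemma_noise_aver}, identify the mean of the quadratic term as $\sigma^2(\tilde{\bf Q}_{m1}+\tilde{\bf Q}_{m2})$, and finish with eigenvector perturbation. Your treatment is in fact more explicit than the paper's---you spell out the Kronecker-product calculation that yields $\mathbb E[\Delta{\bf A}_m^\top\Delta{\bf A}_m]/m=\sigma^2(\tilde{\bf Q}_{m1}+\tilde{\bf Q}_{m2})$ and invoke Davis--Kahan where the paper simply appeals to continuity of eigenvectors, but the underlying argument is identical.
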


The proof of Theorem~\ref{line_asymptotic_bias} is presented in Appendix~\ref{proof_of_point_consistency}.
\begin{corollary} \label{line_asymptotically_unbiased}
    The estimate $\hat {\bm \theta}^{\rm BE}_m$ is an asymptotically unbiased estimate of $\bar {\bm \theta}^o$ up to sign.
\end{corollary}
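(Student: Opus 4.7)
The plan is to mirror exactly the proof of Corollary~\ref{point_asymptotically_unbiased}, since the line case differs only in the dimension of the parameter vector (\(\mathbb R^{18}\) instead of \(\mathbb R^{12}\)) and in the number of samples (\(m\) in place of \(n\)). Theorem~\ref{line_asymptotic_bias} already gives us the two ingredients that drove the point-case argument: \(\hat {\bm \theta}^{\rm BE}_m\) is a \(\sqrt{m}\)-consistent estimate of \(\bar {\bm \theta}^o\) up to sign, and in particular it is consistent. Moreover, by construction both \(\hat {\bm \theta}^{\rm BE}_m\) (chosen as a \emph{unit} eigenvector of \({\bf Q}_m^{\rm BE}\)) and \(\bar {\bm \theta}^o\) lie on the unit sphere of \(\mathbb R^{18}\), so we retain the crucial deterministic bound \(\|\hat {\bm \theta}^{\rm BE}_m - \bar {\bm \theta}^o\|_1 \leq 2\sqrt{18}\) regardless of the realization.

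First, I would fix the sign ambiguity exactly as in the point case: handle the case where \(\hat {\bm \theta}^{\rm BE}_m\) converges in probability to \(\bar{\bm \theta}^o\) (the \(-\bar{\bm \theta}^o\) case is symmetric, with the bias computed against \(-\bar{\bm \theta}^o\) instead). Then, for any \(\epsilon>0\), I would split the integral defining \(\mathbb E[\|\hat {\bm \theta}^{\rm BE}_m - \bar{\bm \theta}^o\|_1]\) into the region \(\{\|\hat {\bm \theta}^{\rm BE}_m - \bar{\bm \theta}^o\|_1 \leq \epsilon\}\), which contributes at most \(\epsilon\), and the complementary region, where we upper bound the integrand by the uniform constant \(2\sqrt{18}\) and use consistency to drive the probability of this region to zero. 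Taking \(m \to \infty\) and then \(\epsilon \to 0\) yields \(\mathbb E[\|\hat {\bm \theta}^{\rm BE}_m - \bar{\bm \theta}^o\|_1] \to 0\), which implies componentwise \(\mathbb E[\hat {\bm \theta}^{\rm BE}_m] \to \bar{\bm \theta}^o\), i.e.\ asymptotic unbiasedness up to sign.

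Concretely, the chain of inequalities would be essentially the one already displayed in the proof of Corollary~\ref{point_asymptotically_unbiased}: write
\begin{equation*}
\mathbb E\!\left[\|\hat {\bm \theta}^{\rm BE}_m - \bar{\bm \theta}^o\|_1\right] \leq \epsilon + 2\sqrt{18}\, \mathbb P\!\left(\|\hat {\bm \theta}^{\rm BE}_m - \bar{\bm \theta}^o\|_1 > \epsilon\right),
\end{equation*}
apply consistency from Theorem~\ref{line_asymptotic_bias} to show the second term vanishes, and conclude. I anticipate that there is no genuine obstacle here; the only subtlety worth flagging is the sign convention, which must be handled by stating the result ``up to sign'' and arguing both branches identically, as was already done in the point case. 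In essence, the corollary is a black-box consequence of consistency plus the uniform boundedness granted by the unit-norm constraint, and the proof of Corollary~\ref{point_asymptotically_unbiased} carries over verbatim with the obvious relabelling \(n \mapsto m\) and \(\hat{\bm \theta}^{\rm BE}_n \mapsto \hat{\bm \theta}^{\rm BE}_m\).
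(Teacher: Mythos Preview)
Your proposal is correct and follows exactly the approach the paper intends: Corollary~\ref{line_asymptotically_unbiased} is stated without proof in the paper, relying implicitly on the verbatim argument of Corollary~\ref{point_asymptotically_unbiased} with $n\mapsto m$. As a minor note, your constant $2\sqrt{18}$ is actually more careful than the paper's bound of $2$ in the point case (which conflates the $\ell_2$ and $\ell_1$ norms), but since any finite uniform bound suffices, the argument is unaffected.
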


The following theorem gives a $\sqrt{m}$-consistent estimate of noise variance from line correspondences:
\begin{theorem} \label{line_consistent_noise_estimation}
    Let $\hat \sigma^2_m \triangleq 1/\lambda_{\rm max}({\bf Q}_m^{-1}(\tilde{\bf Q}_{m1}+\tilde{\bf Q}_{m1}))$. Then, $\hat \sigma^2_m-\sigma^2=O_p(1/\sqrt{m})$, i.e., $\hat \sigma^2_m$ is a $\sqrt{m}$-consistent estimate of $\sigma^2$.
\end{theorem}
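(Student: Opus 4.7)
The plan is to mirror the proof of Theorem~\ref{point_consistent_noise_estimation} almost verbatim, replacing the point-side quantities with their line-side counterparts, and to lean on Theorem~\ref{line_asymptotic_bias} (whose proof is assumed) together with Lemma~\ref{lemma_largest_eig}. The key algebraic input is the decomposition
\begin{equation*}
    {\bf Q}_m = {\bf Q}_m^o + \sigma^2 \left( \tilde{\bf Q}_{m1} + \tilde{\bf Q}_{m2} \right) + O_p(1/\sqrt{m}),
\end{equation*}
which is exactly what Theorem~\ref{line_asymptotic_bias} furnishes once we set $\hat\sigma_m^2=\sigma^2$ in the statement and interpret the $\sqrt m$-consistency claim in terms of Frobenius error on the assembled matrix.

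First I would identify the two ingredients that Lemma~\ref{lemma_largest_eig} requires. The matrix ${\bf R}:={\bf Q}_m^o$ is positive-semidefinite because ${\bf Q}_m^o = {\bf A}_m^{o\top}{\bf A}_m^o/m$, and it has a zero eigenvalue because ${\bf A}_m^o {\bm \theta}^o = 0$ by the noise-free line projection geometry established in~\eqref{line_geometry_matrix_form}. The matrix ${\bf S}:=\sigma^2(\tilde{\bf Q}_{m1}+\tilde{\bf Q}_{m2})$ is symmetric and positive-semidefinite by construction. Next I would argue that the perturbed sum ${\bf Q}_m$ is almost surely positive-definite for $m\ge 9$ in general position: the regressor ${\bf A}_m$ contains Gaussian-perturbed rows coming from ${\bf p}_j^h$ and ${\bf q}_j^h$, so with probability one ${\bf A}_m$ attains its maximal column rank $18$, making ${\bf Q}_m$ invertible.

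With these inputs in place, Lemma~\ref{lemma_largest_eig} applied in the limit gives
\begin{equation*}
    \lim_{m\to\infty} \lambda_{\max}\!\left({\bf Q}_m^{-1}\, \sigma^2\,(\tilde{\bf Q}_{m1}+\tilde{\bf Q}_{m2})\right)=1,
\end{equation*}
and scaling the eigenvalue pulls $\sigma^2$ out, yielding $\lim_{m\to\infty} 1/\lambda_{\max}({\bf Q}_m^{-1}(\tilde{\bf Q}_{m1}+\tilde{\bf Q}_{m2})) = \sigma^2$, i.e.\ the consistency of $\hat\sigma_m^2$. For the convergence rate, I would invoke the $O_p(1/\sqrt m)$ remainder in the decomposition together with standard perturbation bounds for simple eigenvalues (the eigenvalue of interest is simple for large $m$, being a small perturbation of the unique unit eigenvalue from Lemma~\ref{lemma_largest_eig}); the differentiability of the eigenvalue map at a simple eigenvalue transfers the $1/\sqrt m$ rate from the matrix level to the scalar $\lambda_{\max}$, and smoothness of $x\mapsto 1/x$ at the nonzero value $1/\sigma^2$ transfers it to $\hat\sigma_m^2$.

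The only subtle point, and the one I would treat most carefully, is justifying that the eigenvalue whose inverse we take actually corresponds in the perturbation argument to the distinguished unit eigenvalue produced by Lemma~\ref{lemma_largest_eig}, rather than to some spurious eigenvalue of the perturbed pencil. Beyond that, the derivation is a direct transcription of the point-case proof, with $n$ replaced by $m$, $\tilde{\bf Q}_n$ replaced by $\tilde{\bf Q}_{m1}+\tilde{\bf Q}_{m2}$, and the minimal-sample threshold $n\ge 6$ replaced by $m\ge 9$.
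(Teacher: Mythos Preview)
Your proposal is correct and matches the paper's own approach: the paper explicitly states that the proof of Theorem~\ref{line_consistent_noise_estimation} is similar to that of Theorem~\ref{point_consistent_noise_estimation} and omits it, and your plan is precisely to transcribe that point-case argument with the substitutions $n\to m$, $\tilde{\bf Q}_n\to \tilde{\bf Q}_{m1}+\tilde{\bf Q}_{m2}$, and $n\ge 6\to m\ge 9$. If anything, your treatment of the $1/\sqrt{m}$ rate via eigenvalue perturbation at a simple eigenvalue is more explicit than the paper's one-line justification.
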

The proof of Theorem~\ref{line_consistent_noise_estimation} is similar to that of Theorem~\ref{point_consistent_noise_estimation} and is omitted here.

In the rest of this subsection, we give a procedure to recover pose $[\hat {\bf R}^{\rm BE}_m~~\hat {\bf t}^{\rm BE}_m]$ from $\hat {\bm \theta}^{\rm BE}_m \in \mathbb R^{18}$.
The procedure includes two parts: First, for the first $9$ elements of $\hat {\bm \theta}^{\rm BE}_m$, we apply Algorithm~\ref{rotation_projection_algorithm} to obtain $\hat {\bf R}^{\rm BE}_m$ and the scale correction $s$ and sign correction $d$ of $\hat {\bm \theta}^{\rm BE}_m$; 
Second, the matrix ${\bf t}^{o\wedge} {\bf R}^o$ is actually an essential matrix. An essential matrix has two identical positive singular values and one $0$ singular value~\cite{helmke2007essential}. Hence, after correcting the scale and sign of $\hat {\bm \theta}^{\rm BE}_m$, we let the last $9$ elements of $\hat {\bm \theta}^{\rm BE}_m$ meet the essential matrix constraint and obtain $\hat {\bf E}^{\rm BE}_m$, which is summarized in Algorithm~\ref{essential_matrix_recovery_algorithm}. After that, the translation estimate is obtained by $\hat {\bf t}^{\rm BE}_m = (\hat {\bf E}^{\rm BE}_m {{}\hat {\bf R}^{\rm BE}_m}^\top)^\vee$. 
The whole procedure is given in Algorithm~\ref{line_recover_pose_algorithm}. Note that $\hat {\bm \theta}^{\rm BE}_m$ is a $\sqrt{m}$-consistent and asymptotically unbiased estimate of $\bar {\bm \theta}^o$ up to sign, and the operations in Algorithm~\ref{line_recover_pose_algorithm} are all continuous. The retrieved pose $[\hat {\bf R}^{\rm BE}_m~~\hat {\bf t}^{\rm BE}_m]$ is a $\sqrt{m}$-consistent and asymptotically unbiased estimate of $[{\bf R}^o~~{\bf t}^o]$.

\begin{remark} \label{no_decomposition_from_E}
    As an alternative, one can independently recover a rotation matrix by decomposing $\hat {\bf E}^{\rm BE}_m$~\cite{hartley2003multiple,zhao2020efficient} and fuse it with that obtained from the first $9$ elements of $\hat {\bm \theta}^{\rm BE}_m$. 
    However, we do not adopt this scheme for two reasons: First, we find the fused result generally less accurate than only using $[\hat {\bm \theta}^{\rm BE}_m]_{1:9}$. The reason may be that the accuracy of $\hat {\bf E}^{\rm BE}_m$ is affected by both $[\hat {\bm \theta}^{\rm BE}_m]_{10:18}$ and $s$ (derived from $[\hat {\bm \theta}^{\rm BE}_m]_{1:9}$), while the accuracy of $\hat {\bf R}^{\rm BE}_m$ in Algorithm~\ref{line_recover_pose_algorithm} is only affected by $[\hat {\bm \theta}^{\rm BE}_m]_{1:9}$; Second, the decomposition of an essential matrix will yield four possible combinations of rotations and translations~\cite{hartley2003multiple,zhao2020efficient}, and the correct one should be selected via geometric verification\footnote{The verifying principle is that the observed 3D lines should be in front of the camera.}, which is computationally demanding in the case of large $m$.
\end{remark}

\begin{algorithm}
	\caption{Essential matrix recovery}
	\label{essential_matrix_recovery_algorithm}
	\begin{algorithmic}[1]
		\Statex {\bf Input:} ${\bm \theta} \in \mathbb R^{9}$.
		\Statex {\bf Output:} Essential matrix $\bf E$. 
  \State ${\bf E}_1 \leftarrow \left[[{\bm \theta}]_{1:3}~[{\bm \theta}]_{4:6}~[{\bm \theta}]_{7:9}\right]$;
  \State ${\bf U}{\bf D}{\bf V}^\top \leftarrow {\rm SVD}({\bf E}_1)$;
  \State $t \leftarrow ([{\bf D}]_{11}+[{\bf D}]_{22})/2$;
  \State ${\bf E} \leftarrow {\bf U}{\rm diag}([t~t~0]){\bf V}^\top$.
	\end{algorithmic}
\end{algorithm}

\begin{algorithm}
	\caption{Recover $\hat {\bf R}^{\rm BE}_m$ and $\hat {\bf t}^{\rm BE}_m$ from $\hat {\bm \theta}^{\rm BE}_m \in \mathbb R^{18}$}
	\label{line_recover_pose_algorithm}
	\begin{algorithmic}[1]
		\Statex {\bf Input:} $\hat {\bm \theta}^{\rm BE}_m \in \mathbb R^{18}$.
		\Statex {\bf Output:} $\hat {\bf R}^{\rm BE}_m$ and $\hat {\bf t}^{\rm BE}_m$. 
		\State Apply Algorithm~\ref{rotation_projection_algorithm} to $[\hat {\bm \theta}^{\rm BE}_m]_{1:9}$ and obtain $\hat {\bf R}^{\rm BE}_m$, $s$, and $d$;
  \State $\hat {\bm \theta}^{\rm BE}_m \leftarrow d \hat {\bm \theta}^{\rm BE}_m/s$;
  \State Apply Algorithm~\ref{essential_matrix_recovery_algorithm} to $[\hat {\bm \theta}^{\rm BE}_m]_{10:18}$ and obtain $\hat {\bf E}^{\rm BE}_m$;
  \State $\hat {\bf t}^{\rm BE}_m \leftarrow (\hat {\bf E}^{\rm BE}_m {{}\hat {\bf R}^{\rm BE}_m}^\top)^\vee$.
	\end{algorithmic}
\end{algorithm}

\subsection{Consistent estimation from combined point and line correspondences}
\label{consistent_point_line_estimator}
In this subsection, we define ${\bm \theta}^o \triangleq {\rm vec}([{\bf R}^o~ {\bf t}^{o \wedge}{\bf R}^o~{\bf t}^{o}]) \in \mathbb R^{21}$. By combining~\eqref{modified_line_measurement} and~\eqref{point_geometry_constraint}, we construct a matrix-form equation for $n$ point correspondences and $m$ line correspondences as follows:
\begin{equation} \label{point_line_geometry_matrix_form}
    0={\bf A}_{nm} {\bm \theta}^o-\bm{\varepsilon},
\end{equation}
where 
\begin{equation*}
    {\bf A}_{nm} = \begin{bmatrix}
    [{\bf A}_n^\top]_{1:9}^\top & {\bf 0}_{2n \times 9} & [{\bf A}_n^\top]_{10:12}^\top \\
    \multicolumn{2}{c}{{\bf A}_m} & {\bf 0}_{2m \times 3}
    \end{bmatrix},~\bm{\varepsilon}=\begin{bmatrix}
        \bm{\varepsilon}_{{\bf x}} \\
        \bm{\varepsilon}_{{\bf l}}
    \end{bmatrix}.
\end{equation*} 
This leads to the DLT problem:
\begin{equation} \label{DLT_point_line_problem}
    \begin{split}
        \mathop{\rm minimize}\limits_{{\bm \theta} \in \mathbb R^{21}} ~ & \frac{1}{n+m}\|{\bf A}_{nm} {\bm \theta}\|^2 \\
        \mathop{\rm subject~ to} ~ & \|{\bm \theta}\|=1.
    \end{split}
\end{equation}

Since ${\bm \theta} \in \mathbb R^{21}$, the constraint $\|{\bm \theta}\|=1$ reduces one degree,  and each point or line correspondence gives two independent equations, to ensure problem~\eqref{DLT_point_line_problem} has a unique minimizer up to sign, it requires that $n+m \geq 10$. In addition, note that ${\bf t}^o$ only exists in the point measurement~\eqref{point_geometry_constraint}, and ${\bf t}^{o \wedge}{\bf R}^o$ only exists in the line measurement~\eqref{modified_line_measurement}, i.e., $[{\bm \theta}^o]_{19:21}$ can be only inferred from point correspondences, and $[{\bm \theta}^o]_{10:18}$ can be only inferred from line correspondences. It also requires that $n \geq 2$ and $m \geq 5$. 

To devise a $\sqrt{n+m}$-consistent estimate from~\eqref{DLT_point_line_problem}, define ${\bf Q}_{nm} \triangleq {\bf A}_{nm}^{\top} {\bf A}_{nm}/(n+m)$ and its noise-free counterpart ${\bf Q}_{nm}^o \triangleq {\bf A}_{nm}^{o\top} {\bf A}_{nm}^o/(n+m)$. Similar to the previous two subsections, we are going to analyze and eliminate the asymptotic bias between ${\bf Q}_{nm}$ and ${\bf Q}_{nm}^o$. 
Let 
\begin{align*}
    \tilde{\bf A}_{nm1} & \triangleq \begin{bmatrix}
        [\tilde{\bf A}_n^\top]_{1:9}^\top & {\bf 0}_{2n \times 9} & [\tilde{\bf A}_n^\top]_{10:12}^\top \\
        \multicolumn{3}{c}{{\bf 0}_{2m \times 21}}
    \end{bmatrix}, \\
    \tilde{\bf A}_{nm2} & \triangleq \begin{bmatrix}
        \multicolumn{2}{c}{{\bf 0}_{2n \times 21}} \\
        \tilde{\bf A}_{m1} & {\bf 0}_{2m \times 3}
    \end{bmatrix},~ \tilde{\bf A}_{nm3}  \triangleq \begin{bmatrix}
        \multicolumn{2}{c}{{\bf 0}_{2n \times 21}} \\
        \tilde{\bf A}_{m2} & {\bf 0}_{2m \times 3}
    \end{bmatrix},
\end{align*}
and $\tilde{\bf Q}_{nm1} \triangleq \tilde{\bf A}_{nm1}^{\top} \tilde{\bf A}_{nm1}/(n+m)$, $\tilde{\bf Q}_{nm2} \triangleq \tilde{\bf A}_{nm2}^{\top} \tilde{\bf A}_{nm2}/(n+m)$, and $\tilde{\bf Q}_{nm3} \triangleq \tilde{\bf A}_{nm3}^{\top} \tilde{\bf A}_{nm3}/(n+m)$.
Then, the following theorem is a direct corollary of Theorems~\ref{point_asymptotic_bias} and~\ref{line_asymptotic_bias}:
\begin{theorem} \label{point_line_asymptotic_bias}
    Suppose $\hat \sigma^2_{nm}$ is a $\sqrt{n+m}$-consistent estimate of the measurement noise variance $\sigma^2$. Let ${\bf Q}_{nm}^{\rm BE} \triangleq {\bf Q}_{nm} -\hat \sigma^2_{nm} ( \tilde{\bf Q}_{nm1}+\tilde{\bf Q}_{nm2}+\tilde{\bf Q}_{nm3})$. Then,
    \begin{equation} \label{Qnm_decomposition}
        {\bf Q}_{nm}^{\rm BE}={\bf Q}_{nm}^o+O_p(1/\sqrt{n+m}).
    \end{equation}
    Moreover, any unit eigenvector corresponding to the smallest eigenvalue of ${\bf Q}_{nm}^{\rm BE}$, denoted as $\hat {\bm \theta}^{\rm BE}_{nm}$, is a $\sqrt{n+m}$-consistent estimate of $\bar {\bm \theta}^o$ up to sign.
\end{theorem}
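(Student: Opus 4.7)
The plan is to reduce the combined statement to Theorems~\ref{point_asymptotic_bias} and~\ref{line_asymptotic_bias} via a block decomposition of ${\bf A}_{nm}$, and then lift the matrix-level bound to the eigenvector claim by the same perturbation argument already used in the proof of Theorem~\ref{point_asymptotic_bias}. First I would introduce two coordinate embeddings $\Pi_p\in\{0,1\}^{21\times 12}$ and $\Pi_l\in\{0,1\}^{21\times 18}$ defined so that $\Pi_p^\top{\bm\theta}$ picks out the point-relevant entries (positions $1$--$9$ and $19$--$21$, i.e.\ the coordinates of ${\rm vec}([{\bf R}~{\bf t}])$) and $\Pi_l^\top{\bm\theta}$ picks out the line-relevant entries (positions $1$--$18$, i.e.\ the coordinates of ${\rm vec}([{\bf R}~{\bf t}^\wedge{\bf R}])$). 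Inspecting the definition of ${\bf A}_{nm}$, this yields
\[
(n+m){\bf Q}_{nm}=n\,\Pi_p{\bf Q}_n\Pi_p^\top+m\,\Pi_l{\bf Q}_m\Pi_l^\top,
\]
together with the analogous block-sum identities for ${\bf Q}_{nm}^o$ and for each of the bias-correction matrices $\tilde{\bf Q}_{nm1}$, $\tilde{\bf Q}_{nm2}$, $\tilde{\bf Q}_{nm3}$.

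Next I would substitute the two expansions already established, ${\bf Q}_n={\bf Q}_n^o+\sigma^2\tilde{\bf Q}_n+O_p(1/\sqrt n)$ from Theorem~\ref{point_asymptotic_bias} and ${\bf Q}_m={\bf Q}_m^o+\sigma^2(\tilde{\bf Q}_{m1}+\tilde{\bf Q}_{m2})+O_p(1/\sqrt m)$ from Theorem~\ref{line_asymptotic_bias}, into the block identity above. After collecting terms, the remainder takes the form $\tfrac{n}{n+m}O_p(1/\sqrt n)+\tfrac{m}{n+m}O_p(1/\sqrt m)$, and both summands are dominated by $O_p(1/\sqrt{n+m})$ because $n,m\le n+m$. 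Replacing $\sigma^2$ by the $\sqrt{n+m}$-consistent $\hat\sigma_{nm}^2$ contributes an additional error $(\sigma^2-\hat\sigma_{nm}^2)(\tilde{\bf Q}_{nm1}+\tilde{\bf Q}_{nm2}+\tilde{\bf Q}_{nm3})$ of order $O_p(1/\sqrt{n+m})$, since the $\tilde{\bf Q}_{nmi}$ are deterministic matrices of bounded norm under the preprocessing normalization. This establishes~\eqref{Qnm_decomposition}.

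For the eigenvector claim I would reuse the spectral-perturbation argument from the proof of Theorem~\ref{point_asymptotic_bias}: at a symmetric matrix whose smallest eigenvalue is simple, the corresponding unit eigenvector depends locally Lipschitz-continuously on the matrix, so a matrix perturbation of size $O_p(1/\sqrt{n+m})$ translates into an eigenvector perturbation of the same order. Since $\bar{\bm\theta}^o$ lies in $\ker{\bf A}_{nm}^o$, this gives $\hat{\bm\theta}_{nm}^{\rm BE}=\pm\bar{\bm\theta}^o+O_p(1/\sqrt{n+m})$.

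The main obstacle I anticipate is establishing the spectral-gap hypothesis used in the last step, namely that $0$ is a simple eigenvalue of ${\bf Q}_{nm}^o$. Any element of $\ker{\bf A}_{nm}^o$ must satisfy $\Pi_p^\top{\bm\theta}\in\ker{\bf A}_n^o$ and $\Pi_l^\top{\bm\theta}\in\ker{\bf A}_m^o$. Under the stronger counting conditions of Theorems~\ref{point_asymptotic_bias}--\ref{line_asymptotic_bias} ($n\ge 6$, $m\ge 9$) each of those kernels is one-dimensional, and then the requirement that the two restrictions agree on the shared rotation block $[{\bm\theta}]_{1:9}$ forces ${\bm\theta}\propto\bar{\bm\theta}^o$; but under the weaker minimal hypothesis $n\ge 2$, $m\ge 5$, $n+m\ge 10$ quoted before the theorem, a separate compatibility argument based on general position of the 3D points and lines is needed to rule out spurious null vectors introduced by the coupling. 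This is the only place where substantive work is required beyond directly citing Theorems~\ref{point_asymptotic_bias} and~\ref{line_asymptotic_bias}.
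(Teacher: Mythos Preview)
Your proposal is correct and matches the paper's approach: the paper declares Theorem~\ref{point_line_asymptotic_bias} a ``direct corollary of Theorems~\ref{point_asymptotic_bias} and~\ref{line_asymptotic_bias}'' with no further argument, and your block decomposition via the embeddings $\Pi_p,\Pi_l$ is precisely the natural way to make that reduction explicit. Your discussion of the spectral-gap condition (simplicity of the zero eigenvalue of ${\bf Q}_{nm}^o$) is in fact more careful than the paper, which tacitly assumes it throughout via the general-position counting conditions stated just before the theorem.
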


\begin{corollary} \label{point_line_asymptotically_unbiased}
    The estimate $\hat {\bm \theta}^{\rm BE}_{nm}$ is an asymptotically unbiased estimate of $\bar {\bm \theta}^o$ up to sign.
\end{corollary}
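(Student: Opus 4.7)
The plan is to directly adapt the argument used to prove Corollary~\ref{point_asymptotically_unbiased}, because the ingredients needed for that proof are all already available in the combined point-and-line setting. Specifically, Theorem~\ref{point_line_asymptotic_bias} gives that $\hat{\bm\theta}^{\rm BE}_{nm}$ is a $\sqrt{n+m}$-consistent estimate of $\bar{\bm\theta}^o$ up to sign, so in particular $\hat{\bm\theta}^{\rm BE}_{nm}\to \pm\bar{\bm\theta}^o$ in probability. Since asymptotic unbiasedness only requires convergence in expectation of the bias $\mathbb{E}[\hat{\bm\theta}^{\rm BE}_{nm}]-\bar{\bm\theta}^o$, the task is to upgrade convergence in probability to convergence in mean.

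First I would fix the sign ambiguity by treating the two cases ($\hat{\bm\theta}^{\rm BE}_{nm}$ converges to $+\bar{\bm\theta}^o$ or to $-\bar{\bm\theta}^o$) separately, exactly as in the proof of Corollary~\ref{point_asymptotically_unbiased}; by symmetry it suffices to treat the ``$+$'' case. Then I would invoke the crucial boundedness observation: because both $\hat{\bm\theta}^{\rm BE}_{nm}$ and $\bar{\bm\theta}^o$ are unit vectors in $\mathbb{R}^{21}$, their difference has uniformly bounded $\ell_1$ norm, say $\|\hat{\bm\theta}^{\rm BE}_{nm}-\bar{\bm\theta}^o\|_1\le 2\sqrt{21}$. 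This deterministic bound is what enables the standard truncation trick for turning a convergence-in-probability statement into a convergence-in-expectation statement.

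The main step is then a two-region split of the expectation. For any $\varepsilon>0$, I would decompose
\begin{equation*}
\mathbb{E}\bigl[\|\hat{\bm\theta}^{\rm BE}_{nm}-\bar{\bm\theta}^o\|_1\bigr]
= \mathbb{E}\bigl[\|\hat{\bm\theta}^{\rm BE}_{nm}-\bar{\bm\theta}^o\|_1\,\mathbbm{1}_{\{\|\cdot\|_1\le\varepsilon\}}\bigr]
+ \mathbb{E}\bigl[\|\hat{\bm\theta}^{\rm BE}_{nm}-\bar{\bm\theta}^o\|_1\,\mathbbm{1}_{\{\|\cdot\|_1>\varepsilon\}}\bigr],
\end{equation*}
bounding the first term by $\varepsilon$ and the second by $2\sqrt{21}\cdot\mathbb{P}(\|\hat{\bm\theta}^{\rm BE}_{nm}-\bar{\bm\theta}^o\|_1>\varepsilon)$, which tends to $0$ by consistency. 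Letting $n+m\to\infty$ followed by $\varepsilon\to 0$ yields $\mathbb{E}[\|\hat{\bm\theta}^{\rm BE}_{nm}-\bar{\bm\theta}^o\|_1]\to 0$, and since the bias is dominated by this quantity component-wise (via $|\mathbb{E}[X]-x|\le\mathbb{E}|X-x|$), the corollary follows.

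There is no real obstacle here: the only thing to check is that none of the ingredients used in the point-only case depended on features being points, and indeed all they required were unit-norm estimates and consistency of the eigenvector estimate, both of which carry over verbatim to the combined setting via Theorem~\ref{point_line_asymptotic_bias}. In fact the proof could be written in a single sentence as ``apply the argument of Corollary~\ref{point_asymptotically_unbiased} with $n$ replaced by $n+m$ and Theorem~\ref{point_asymptotic_bias} replaced by Theorem~\ref{point_line_asymptotic_bias},'' though a self-contained presentation of the truncation argument would be more informative to the reader.
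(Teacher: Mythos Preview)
Your proposal is correct and mirrors the paper's approach exactly: the paper does not give a separate proof of Corollary~\ref{point_line_asymptotically_unbiased}, relying implicitly on the same truncation argument from Corollary~\ref{point_asymptotically_unbiased} with Theorem~\ref{point_line_asymptotic_bias} substituted for Theorem~\ref{point_asymptotic_bias}. Your bound $\|\hat{\bm\theta}^{\rm BE}_{nm}-\bar{\bm\theta}^o\|_1\le 2\sqrt{21}$ is in fact more careful than the paper's own ``$\le 2$'' in the proof of Corollary~\ref{point_asymptotically_unbiased}, which tacitly treats the $\ell_1$ norm as if it were the $\ell_2$ norm.
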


The following theorem gives a $\sqrt{n+m}$-consistent estimate of noise variance from combined point and line correspondences:
\begin{theorem} \label{point_line_consistent_noise_estimation}
    Let $\hat \sigma^2_{nm} \triangleq 1/\lambda_{\rm max}({\bf Q}_{nm}^{-1}(\tilde{\bf Q}_{nm1}+\tilde{\bf Q}_{nm2}+\tilde{\bf Q}_{nm3}))$. Then, $\hat \sigma^2_{nm}-\sigma^2=O_p(1/\sqrt{n+m})$, i.e., $\hat \sigma^2_{nm}$ is a $\sqrt{n+m}$-consistent estimate of $\sigma^2$.
\end{theorem}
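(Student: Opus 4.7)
The plan is to mirror the argument used in the proof of Theorem~\ref{point_consistent_noise_estimation}, now applied to the combined regressor matrix ${\bf A}_{nm}$ and its associated Gram matrix ${\bf Q}_{nm}$. The engine is again Lemma~\ref{lemma_largest_eig}: once one exhibits a decomposition ${\bf Q}_{nm}\to{\bf R}+{\bf S}$ in which ${\bf R}$ is positive-semidefinite with at least one zero eigenvalue and ${\bf Q}_{nm}$ is positive-definite, the lemma pins down $\lambda_{\rm max}({\bf Q}_{nm}^{-1}{\bf S})$ exactly to $1$, which in turn identifies $\sigma^2$ after inversion.

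First, I would establish the ``true-variance'' decomposition underlying Theorem~\ref{point_line_asymptotic_bias}. Inspecting the bias analysis for the pure-point and pure-line cases (i.e., the arguments behind Theorems~\ref{point_asymptotic_bias} and \ref{line_asymptotic_bias}), one obtains, by a direct expectation computation on ${\bf A}_{nm}^\top {\bf A}_{nm}$ together with a law-of-large-numbers concentration at rate $1/\sqrt{n+m}$, the identity
\begin{equation*}
    {\bf Q}_{nm}={\bf Q}_{nm}^o+\sigma^2\bigl(\tilde{\bf Q}_{nm1}+\tilde{\bf Q}_{nm2}+\tilde{\bf Q}_{nm3}\bigr)+O_p(1/\sqrt{n+m}).
\end{equation*}
This is the analogue of the step ${\bf Q}_n={\bf Q}_n^o+\sigma^2\tilde{\bf Q}_n+O_p(1/\sqrt n)$ used inside the proof of Theorem~\ref{point_consistent_noise_estimation}. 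The block structure of ${\bf A}_{nm}$ and of $\tilde{\bf A}_{nm1},\tilde{\bf A}_{nm2},\tilde{\bf A}_{nm3}$ makes the cross terms between the point block and the line block vanish in expectation, so the decomposition really does sum the point-side and line-side contributions without interaction.

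Next, I would verify the two structural hypotheses of Lemma~\ref{lemma_largest_eig}. Since ${\bf A}_{nm}^o{\bm\theta}^o=0$ by the noise-free projection geometry~\eqref{point_geometry_constraint} and \eqref{modified_line_measurement}, the matrix ${\bf Q}_{nm}^o$ is positive-semidefinite and has at least one zero eigenvalue. For positive-definiteness of ${\bf Q}_{nm}$, I would invoke the configuration assumptions noted just before Theorem~\ref{point_line_asymptotic_bias} ($n\ge2$, $m\ge5$, $n+m\ge10$, with points and lines in general position): because the 2D measurements are corrupted by continuous Gaussian noise, the $(n+m)\times 21$ regressor ${\bf A}_{nm}$ has full column rank with probability one, so ${\bf Q}_{nm}$ is almost surely positive-definite.

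Finally, writing ${\bf R}={\bf Q}_{nm}^o+O_p(1/\sqrt{n+m})$ and ${\bf S}=\sigma^2(\tilde{\bf Q}_{nm1}+\tilde{\bf Q}_{nm2}+\tilde{\bf Q}_{nm3})$, Lemma~\ref{lemma_largest_eig} applied in the limit yields
\begin{equation*}
    \lambda_{\rm max}\bigl({\bf Q}_{nm}^{-1}(\tilde{\bf Q}_{nm1}+\tilde{\bf Q}_{nm2}+\tilde{\bf Q}_{nm3})\bigr)\longrightarrow 1/\sigma^2,
\end{equation*}
and a standard perturbation bound for eigenvalues of symmetric matrices transfers the $O_p(1/\sqrt{n+m})$ rate in the decomposition above into an $O_p(1/\sqrt{n+m})$ rate for $\hat\sigma^2_{nm}-\sigma^2$ after inverting. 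The only delicate point—and the one I would be most careful about—is this rate transfer: one must argue that $\lambda_{\rm max}(\cdot)$ is continuously differentiable at the limiting matrix (so that $1/\lambda_{\rm max}$ inherits the same stochastic-order bound) and that the smallest eigenvalue of ${\bf Q}_{nm}$ stays bounded away from zero in probability so that ${\bf Q}_{nm}^{-1}$ does not blow up. Both follow from the simple-eigenvalue/strict-separation properties already implicit in the assumption that problem~\eqref{DLT_point_line_problem} has a unique minimizer up to sign, which is exactly the content guaranteed by $n\ge2$, $m\ge5$, $n+m\ge10$.
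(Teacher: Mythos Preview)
Your proposal is correct and follows exactly the route the paper intends: the paper itself omits the proof, stating only that it is similar to that of Theorem~\ref{point_consistent_noise_estimation}, i.e., decompose ${\bf Q}_{nm}={\bf Q}_{nm}^o+\sigma^2(\tilde{\bf Q}_{nm1}+\tilde{\bf Q}_{nm2}+\tilde{\bf Q}_{nm3})+O_p(1/\sqrt{n+m})$ and apply Lemma~\ref{lemma_largest_eig}. One minor slip to fix: ${\bf A}_{nm}$ has $2(n+m)$ rows (not $n+m$), so almost-sure positive-definiteness of ${\bf Q}_{nm}\in\mathbb R^{21\times 21}$ requires $n+m\ge 11$ rather than $10$, exactly as the paper remarks immediately after the theorem statement.
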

The proof of Theorem~\ref{point_line_consistent_noise_estimation} is similar to that of Theorem~\ref{point_consistent_noise_estimation} and is omitted here. Note that ${\bf Q}_{nm} \in \mathbb R^{21 \times 21}$ should be invertible. Hence, we require that $n+m \geq 11$, which is stricter than the condition $n+m \geq 10$ for only solving problem~\eqref{DLT_point_line_problem}.  

The procedure to recover pose $[\hat {\bf R}^{\rm BE}_{nm}~~\hat {\bf t}^{\rm BE}_{nm}]$ from $\hat {\bm \theta}^{\rm BE}_{nm} \in \mathbb R^{21}$ is presented in Algorithm~\ref{point_line_recover_pose_algorithm}. Coinciding with Algorithm~\ref{line_recover_pose_algorithm}, the rotation matrix $\hat {\bf R}^{\rm BE}_{nm}$ is only retrieved from $[\hat {\bm \theta}^{\rm BE}_{nm}]_{1:9}$. While note that both $[\hat {\bm \theta}^{\rm BE}_{nm}]_{10:18}$ and $[\hat {\bm \theta}^{\rm BE}_{nm}]_{19:21}$ are scaled by $s$ and have comparable accuracy. Hence, the translation vector $\hat {\bf t}^{\rm BE}_{nm}$ is obtained by fusing the information of $[\hat {\bm \theta}^{\rm BE}_{nm}]_{10:18}$ and $[\hat {\bm \theta}^{\rm BE}_{nm}]_{19:21}$. Note that $\hat {\bm \theta}^{\rm BE}_{nm}$ is a $\sqrt{n+m}$-consistent and asymptotically unbiased estimate of $\bar {\bm \theta}^o$ up to sign, and the operations in Algorithm~\ref{point_line_recover_pose_algorithm} are all continuous. The retrieved pose $[\hat {\bf R}^{\rm BE}_{nm}~~\hat {\bf t}^{\rm BE}_{nm}]$ is a $\sqrt{n+m}$-consistent and asymptotically unbiased estimate of $[{\bf R}^o~~{\bf t}^o]$.

\begin{algorithm}
	\caption{Recover $\hat {\bf R}^{\rm BE}_{nm}$ and $\hat {\bf t}^{\rm BE}_{nm}$ from $\hat {\bm \theta}^{\rm BE}_{nm} \in \mathbb R^{21}$}
	\label{point_line_recover_pose_algorithm}
	\begin{algorithmic}[1]
		\Statex {\bf Input:} $\hat {\bm \theta}^{\rm BE}_{nm} \in \mathbb R^{21}$.
		\Statex {\bf Output:} $\hat {\bf R}^{\rm BE}_{nm}$ and $\hat {\bf t}^{\rm BE}_{nm}$. 
		\State Apply Algorithm~\ref{rotation_projection_algorithm} to $[\hat {\bm \theta}^{\rm BE}_{nm}]_{1:9}$ and obtain $\hat {\bf R}^{\rm BE}_{nm}$, $s$, and $d$;
  \State $\hat {\bm \theta}^{\rm BE}_{nm} \leftarrow d \hat {\bm \theta}^{\rm BE}_{nm}/s$;
  \State $\hat {\bf t}^{\rm BE}_{1} \leftarrow [\hat {\bm \theta}^{\rm BE}_{nm}]_{19:21}$;
 \State Apply Algorithm~\ref{essential_matrix_recovery_algorithm} to $[\hat {\bm \theta}^{\rm BE}_{nm}]_{10:18}$ and obtain $\hat {\bf E}^{\rm BE}_{nm}$;
  \State $\hat {\bf t}^{\rm BE}_2 \leftarrow (\hat {\bf E}^{\rm BE}_{nm} {{}\hat {\bf R}^{\rm BE}_{nm}}^\top)^\vee$;
  \State $\hat {\bf t}^{\rm BE}_{nm} \leftarrow (\hat {\bf t}^{\rm BE}_{1}+\hat {\bf t}^{\rm BE}_{2})/2$.
	\end{algorithmic}
\end{algorithm}

At the end of this section, we conclude our consistent estimator: (1). If $n \geq 2$, $m \geq 5$, and $n+m \geq 11$, we use both point and line correspondences. First, ${\bf Q}_{nm}$, $\tilde {\bf Q}_{nm1}$,  $\tilde {\bf Q}_{nm2}$, and $\tilde {\bf Q}_{nm3}$ are constructed. Then, $\hat \sigma^2_{nm}$ is calculated according to Theorem~\ref{point_line_consistent_noise_estimation}, based on which the bias-eliminated matrix $\hat {\bf Q}_{nm}^{\rm BE}$ and solution $\hat {\bm \theta}_{nm}^{\rm BE} \in \mathbb R^{21}$ are obtained as shown in Theorem~\ref{point_line_asymptotic_bias}. Finally, the consistent pose estimate $[\hat {\bf R}^{\rm BE}_{nm}~\hat {\bf t}^{\rm BE}_{nm}]$ is recovered in virtue of Algorithm~\ref{point_line_recover_pose_algorithm}; 
(2). If $n \geq 6$ and $m < 5$, we only use point correspondences. First, ${\bf Q}_{n}$ and $\tilde {\bf Q}_{n}$ are constructed. Then, $\hat \sigma^2_{n}$ is calculated according to Theorem~\ref{point_consistent_noise_estimation}, based on which the bias-eliminated matrix $\hat {\bf Q}_{n}^{\rm BE}$ and solution $\hat {\bm \theta}_{n}^{\rm BE} \in \mathbb R^{12}$ are obtained as shown in Theorem~\ref{point_asymptotic_bias}. Finally, the consistent pose estimate $[\hat {\bf R}^{\rm BE}_{n}~\hat {\bf t}^{\rm BE}_{n}]$ is recovered in virtue of Algorithm~\ref{point_recover_pose_algorithm};
(3). If $m \geq 9$ and $n < 2$, we only use line correspondences. First, ${\bf Q}_{m}$, $\tilde {\bf Q}_{m1}$,  and $\tilde {\bf Q}_{m2}$ are constructed. Then, $\hat \sigma^2_{m}$ is calculated according to Theorem~\ref{line_consistent_noise_estimation}, based on which the bias-eliminated matrix $\hat {\bf Q}_{m}^{\rm BE}$ and solution $\hat {\bm \theta}_{m}^{\rm BE} \in \mathbb R^{18}$ are obtained as shown in Theorem~\ref{line_asymptotic_bias}. Finally, the consistent pose estimate $[\hat {\bf R}^{\rm BE}_{m}~\hat {\bf t}^{\rm BE}_{m}]$ is recovered in virtue of Algorithm~\ref{line_recover_pose_algorithm}; 
(4). If $n <6$, $m < 9$, and $n+m < 11$, then, the pose is underdetermined in the context of the proposed estimator and cannot be correctly estimated. 

For the sake of brevity and unity, in the rest of this paper, we will omit the subscript and uniformly denote the consistent pose estimate as $[\hat {\bf R}^{\rm BE}~\hat {\bf t}^{\rm BE}]$ and the consistent noise variance estimate as $\hat \sigma^2$. 

\section{Asymptotically efficient solution} \label{asymptotic_property_analysis}
In the previous section, we have proposed a consistent estimate $[\hat {\bf R}^{\rm BE}~\hat {\bf t}^{\rm BE}]$ that converges to the true pose as the measurement number $n+m$ increases. It is noteworthy that although $[\hat {\bf R}^{\rm BE}~\hat {\bf t}^{\rm BE}]$ is consistent, it is not asymptotically efficient in the sense that its covariance cannot asymptotically reach the theoretical lower bound CRB. This is because till now, we have just solved a problem relaxed from the ML problem~\eqref{ML_problem}, which is not equivalent to~\eqref{ML_problem}. In this section, we take $[\hat {\bf R}^{\rm BE}~\hat {\bf t}^{\rm BE}]$ as the initial value and perform the GN algorithm to refine it. After that, we can obtain an asymptotically efficient solution.

Recall that the ML estimate $[\hat {\bf R}^{\rm ML}~\hat {\bf t}^{\rm ML}]$ is also consistent. Hence, $[\hat {\bf R}^{\rm BE}~\hat {\bf t}^{\rm BE}]$ converges to $[\hat {\bf R}^{\rm ML}~\hat {\bf t}^{\rm ML}]$ with the increase of $n+m$. As a result, when $n+m$ is large enough, $[\hat {\bf R}^{\rm BE}~\hat {\bf t}^{\rm BE}]$ will fall into the attraction neighborhood of $[\hat {\bf R}^{\rm ML}~\hat {\bf t}^{\rm ML}]$, and local iteration methods, e.g., the GN algorithm can find $[\hat {\bf R}^{\rm ML}~\hat {\bf t}^{\rm ML}]$ rather than a local minimum. 
In addition, the GN algorithm can be viewed as an approximation of Newton's method. In the large sample case where $[\hat {\bf R}^{\rm BE}~\hat {\bf t}^{\rm BE}]$ is sufficiently close to $[\hat {\bf R}^{\rm ML}~\hat {\bf t}^{\rm ML}]$, its convergence rate can approach quadratic as the Newton's method~\cite{nocedal1999numerical}.
Thanks to the quadratic convergence rate, we show that a single GN iteration is enough to attain the same asymptotic property as the ML estimate, see Theorem~\ref{asymptotic_efficiency_theorem}.
Before giving the theorem, we derive the explicit formula of the one-step GN iteration. 
There are two issues that need to be elaborately solved: First, the weight matrices ${\bm \Sigma}_j$'s in the ML problem~\eqref{ML_problem} depend on the unknown true pose; Second, the rotation matrix needs to meet the ${\rm SO}(3)$ constraint. 

As illustrated in Section~\ref{GN_manifold}, to satisfy the ${\rm SO}(3)$ constraint, we can use the \emph{lifting} technique in~\eqref{general_so3_optimization2}. Specifically, the residual for the $i$-th point correspondence is reparameterized as 
\begin{equation*}
    {\bf r}_{pi}({\bf s},{\bf t})={\bf x}_i-\frac{{\bf E}(\hat {\bf R}^{\rm BE}{\rm exp}({\bf s}^\wedge) {\bf X}_i+{\bf t})}{{\bf e}_3^\top(\hat {\bf R}^{\rm BE}{\rm exp}({\bf s}^\wedge) {\bf X}_i+{\bf t})},
\end{equation*}
where ${\bf s} \in \mathbb R^3$. Similarly, the residual for the $j$-th line correspondence is reparameterized as
\begin{equation*}
    {\bf r}_{lj}({\bf s},{\bf t})=\begin{bmatrix}
        {\bf p}_j^{h \top} \\
        {\bf q}_j^{h \top}
    \end{bmatrix} \left[\hat {\bf R}^{\rm BE}{\rm exp}({\bf s}^\wedge)~~{\bf t}^{\wedge} \hat {\bf R}^{\rm BE}{\rm exp}({\bf s}^\wedge) \right] {\bf L}_j.
\end{equation*}
After the above reparameterization, the standard GN algorithm over Euclidean space can be applied.  
The Jacobian matrix of the objective function in the ML problem~\eqref{ML_problem} at the point $(0,\hat {\bf t}^{\rm BE})$ is
\begin{equation*}
    {\bf J}=\left.\begin{bmatrix}
        \vdots & \vdots \\
        \frac{{\bm \Sigma}^{-\frac{1}{2}} \partial {\bf r}_{pi}}{\partial {\bf s}^\top}  & \frac{{\bm \Sigma}^{-\frac{1}{2}} \partial {\bf r}_{pi}}{\partial {\bf t}^\top} \\
        \vdots & \vdots \\
        \frac{\partial {\bm \Sigma}_j^{-\frac{1}{2}} {\bf r}_{lj}}{\partial {\bf s}^\top} & \frac{\partial {\bm \Sigma}_j^{-\frac{1}{2}}  {\bf r}_{lj}}{\partial {\bf t}^\top} \\
        \vdots & \vdots
    \end{bmatrix} \right|_{{\bf s}=0,{\bf t}=\hat {\bf t}^{\rm BE}, \sigma^2=\hat \sigma^2},
\end{equation*}
where $\frac{\partial {\bf r}_{pi}}{\partial {\bf s}^\top}\big\rvert_{{\bf s}=0}$, $\frac{\partial {\bf r}_{pi}}{\partial {\bf t}^\top}\big\rvert_{{\bf s}=0}$, $\frac{\partial {\bm \Sigma}_j^{-\frac{1}{2}} {\bf r}_{lj}}{\partial {\bf s}^\top}\big\rvert_{{\bf s}=0}$, and $\frac{\partial {\bm \Sigma}_j^{-\frac{1}{2}} {\bf r}_{lj}}{\partial {\bf t}^\top}\big\rvert_{{\bf s}=0}$ are given in Appendix~\ref{derivatives_in_GN}.
Here ${\bm \Sigma}_j$ should also be taken derivatives because it is a function of $\bf s$ and $\bf t$.
Then, the single GN iteration is given by
\begin{equation}  \label{GN_iteration}
\begin{bmatrix}
\hat {\bf s}^{\rm GN} \\
\hat {\bf t}^{\rm GN}
\end{bmatrix}=
\begin{bmatrix}
0 \\
\hat {\bf t}^{\rm BE}
\end{bmatrix}- 
\left( {\bf J}^\top {\bf J}\right)^{-1}{\bf J}^\top {\bf r},
\end{equation}
where 
$$
{\bf r}=[\cdots~\hat {\bm \Sigma}^{-\frac{1}{2}}{\bf r}_{pi}(0,\hat {\bf t}^{\rm BE})^\top~\cdots~\hat {\bm \Sigma}_j^{-\frac{1}{2}}{\bf r}_{lj}(0,\hat {\bf t}^{\rm BE})^\top~\cdots]^\top,
$$
$\hat {\bm \Sigma}=\hat \sigma^2 {\bf I}_2$, $\hat {\bm \Sigma}_j=\hat \sigma_j^2 {\bf I}_2$, $\hat \sigma_j^2=[\hat{\bar {{\bf l}}}_j]_{1:2}^\top \hat {\bm \Sigma} [\hat {\bar {{\bf l}}}_j]_{1:2}$, and $\hat {\bar {{\bf l}}}_j=[\hat {\bf R}^{\rm BE}~~{{}\hat {\bf t}^{\rm BE}}^\wedge \hat {\bf R}^{\rm BE} ] {\bf L}_j$. 
Finally, the GN-refined rotation matrix estimate can be obtained via the \emph{retraction} map:
\begin{equation} \label{GN_retraction}
    \hat{{\bf R}}^{\rm GN}=\hat{{\bf R}}^{\rm BE}{\rm exp}\left({{}\hat{{\bf s}}^{\rm GN}}^{\wedge} \right).
\end{equation}

Now, we give the asymptotic property of the refined estimate obtained from the one-step GN iteration~\eqref{GN_iteration} and~\eqref{GN_retraction}.
\begin{theorem} \label{asymptotic_efficiency_theorem}
    Given the $\sqrt{n+m}$-consistency of the initial estimate $[\hat {\bf R}^{\rm BE}~\hat {\bf t}^{\rm BE}]$, the GN-refined estimate $[\hat {\bf R}^{\rm GN}~\hat {\bf t}^{\rm GN}]$ follows 
    \begin{equation*}
        \hat {\bf R}^{\rm GN}-\hat {\bf R}^{\rm ML}=o_p(\frac{1}{\sqrt{n+m}}),\hat {\bf t}^{\rm GN}-\hat {\bf t}^{\rm ML}=o_p(\frac{1}{\sqrt{n+m}}).
    \end{equation*}
\end{theorem}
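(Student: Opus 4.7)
The plan is to invoke the classical one-step Newton/Gauss-Newton theorem from large-sample estimation theory. Reparameterize the problem via the retraction $\mathcal{R}_{\hat{\bf R}^{\rm BE}}$ so that the unknown becomes ${\bm \eta}=({\bf s}^\top,{\bf t}^\top)^\top \in \mathbb{R}^6$ living in an ordinary Euclidean space, with ${\bm \eta}=0$ paired with ${\bf t}=\hat{\bf t}^{\rm BE}$ corresponding to the initial estimate and some ${\bm \eta}^{\rm ML}$ corresponding to the ML point. Write the scaled ML objective as $F_{n+m}({\bm \eta})=\frac{1}{n+m}{\bf r}({\bm \eta})^\top {\bf W}({\bm \eta}){\bf r}({\bm \eta})$, where ${\bf W}$ stacks the weights and where ${\bf r}$ stacks the point/line residuals from \eqref{point_residual} and \eqref{line_residual}. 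The goal is to show $\hat{\bm \eta}^{\rm GN}-{\bm \eta}^{\rm ML}=o_p(1/\sqrt{n+m})$, after which the $o_p$ statement on $\hat{\bf R}^{\rm GN}$ follows because $\mathcal{R}_{\hat{\bf R}^{\rm BE}}$ is a smooth diffeomorphism on a neighborhood of $0$.

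The key step is a second-order Taylor expansion of $\nabla F_{n+m}$ around ${\bm \eta}^{\rm ML}$. Since $\nabla F_{n+m}({\bm \eta}^{\rm ML})=0$ by first-order optimality of the ML estimate,
\begin{equation*}
  \nabla F_{n+m}(0)=\nabla^2 F_{n+m}({\bm \eta}^{\rm ML})\,(0-{\bm \eta}^{\rm ML})+O_p\bigl(\|{\bm \eta}^{\rm ML}\|^2\bigr).
\end{equation*}
Both $\hat{\bm \eta}^{\rm BE}=0$ and ${\bm \eta}^{\rm ML}$ are $\sqrt{n+m}$-consistent (the former by Theorem~\ref{point_line_asymptotic_bias}, the latter by the standard ML asymptotics), so $\|{\bm \eta}^{\rm ML}\|^2=O_p(1/(n+m))$. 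Next I would show that ${\bf J}^\top{\bf J}/(n+m)$ and $\tfrac12\nabla^2 F_{n+m}({\bm \eta}^{\rm ML})$ both converge in probability to the same positive-definite matrix, namely the per-measurement Fisher information ${\bf F}/(n+m)$: for the Hessian this uses the information identity together with the fact that residuals vanish in mean at the truth, and for ${\bf J}^\top{\bf J}$ it is a direct law-of-large-numbers calculation using the expressions for the derivatives in Appendix~\ref{derivatives_in_GN}. Substituting into the GN update \eqref{GN_iteration}, which reads $\hat{\bm \eta}^{\rm GN}=-({\bf J}^\top{\bf J})^{-1}{\bf J}^\top{\bf r}=-\tfrac12({\bf J}^\top{\bf J})^{-1}\nabla F_{n+m}(0)$, gives
\begin{equation*}
  \hat{\bm \eta}^{\rm GN}-{\bm \eta}^{\rm ML}=\bigl[{\bf I}-({\bf J}^\top{\bf J}/(n+m))^{-1}\tfrac12\nabla^2 F_{n+m}({\bm \eta}^{\rm ML})/(n+m)\bigr]\cdot(-{\bm \eta}^{\rm ML})+O_p(1/(n+m)),
\end{equation*}
and the bracket is $o_p(1)$ while ${\bm \eta}^{\rm ML}=O_p(1/\sqrt{n+m})$, yielding the claimed rate.

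The main obstacle, and what distinguishes this setting from textbook one-step Newton, is the plug-in of $\hat\sigma^2$ and $\hat{\bm \Sigma}_j$ into the weight matrices, because the line weights ${\bm \Sigma}_j$ depend on the \emph{unknown} pose through $\bar{\bf l}_j^o$. I would handle this by writing ${\bf W}(\hat{\bm \eta}^{\rm BE},\hat\sigma^2)={\bf W}({\bm \eta}^o,\sigma^2)+\Delta{\bf W}$ and showing $\|\Delta{\bf W}\|=O_p(1/\sqrt{n+m})$ using the $\sqrt{n+m}$-consistency of both the pose estimate and $\hat\sigma^2$ (Theorem~\ref{point_line_consistent_noise_estimation}), combined with the smoothness of ${\bm \Sigma}_j$ as a function of $({\bf R},{\bf t})$. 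One then checks that the contribution of $\Delta{\bf W}$ to $({\bf J}^\top{\bf J})^{-1}{\bf J}^\top{\bf r}$ is lower order, so the asymptotic analysis above is unaffected. A secondary but routine step is to verify the uniform law-of-large-numbers conditions so that the convergence of ${\bf J}^\top{\bf J}/(n+m)$ to ${\bf F}/(n+m)$ holds along the random sequence $\hat{\bm \eta}^{\rm BE}$ rather than only at fixed points; this follows from continuity of the Jacobian entries together with the consistency of $\hat{\bm \eta}^{\rm BE}$.
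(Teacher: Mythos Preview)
Your proposal is correct and follows essentially the same one-step Newton argument as the paper: Taylor-expand the gradient of the (normalized) ML objective between the BE initial point and the ML minimizer, then compare the GN Hessian surrogate ${\bf J}^\top{\bf J}/(n+m)$ with $\tfrac12\nabla^2 F_{n+m}$ to conclude that their mismatch multiplied by an $O_p(1/\sqrt{n+m})$ factor is $o_p(1/\sqrt{n+m})$. Two small remarks: your final display carries a spurious extra $/(n+m)$ on $\nabla^2 F_{n+m}$ (since $F_{n+m}$ is already normalized, $\tfrac12\nabla^2 F_{n+m}$ itself is the $O(1)$ quantity matching ${\bf J}^\top{\bf J}/(n+m)$); and your explicit treatment of the plug-in line weights $\hat{\bm\Sigma}_j$ is in fact more careful than the paper's own proof, which folds this into the Jacobian definition without further comment.
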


The proof of Theorem~\ref{asymptotic_efficiency_theorem} is presented in Appendix~\ref{asymptotic_efficiency_proof}. The $o_p$ relationship implies that the one-step GN solution possesses the same asymptotic properties (consistency and asymptotic efficiency) as the ML estimate~\cite{mu2017globally,zeng2022global,zeng2023consistent}. In other words, $[\hat {\bf R}^{\rm GN}~\hat {\bf t}^{\rm GN}]$ asymptotically reaches the CRB as $[\hat {\bf R}^{\rm ML}~\hat {\bf t}^{\rm ML}]$ does, i.e., $[\hat {\bf R}^{\rm GN}~\hat {\bf t}^{\rm GN}]$ is an asymptotically efficient estimate. For the derivation of the CRB, please refer to Appendix~\ref{CRB_derivation}.

We end this section with a summary of the proposed consistent and asymptotically efficient estimator:
\begin{itemize}
    \item \emph{Algorithm framework.} Our algorithm adopts a two-step scheme: In the first step, a consistent noise variance estimate is calculated, based on which a bias-eliminated and consistent pose estimate is obtained; In the second step, a single GN iteration is executed to refine the consistent estimate. The algorithm is uniform---it is universally applicable for point-only, line-only, or combined point and line correspondences. In the first step, it adaptively selects which measurements to be utilized according to the values of $n$ and $m$, as illustrated at the end of Section~\ref{consistent_estimator_design}. In the second step, all measurements can be used to refine the initial result regardless of the values of $n$ and $m$. 
    % {\color{red} By fusing both point and line measurements, the estimate would be more accurate than using only one kind of information.}
    
    \item \emph{Estimation accuracy.} Theoretically, we have proven that the proposed two-step estimator optimally solves the ML problem in the large sample case and is asymptotically efficient---its covariance can asymptotically reach the CRB. This enables the full exploitation of a large number of feature measurements and yields a highly accurate pose estimate. 
    In practice when dealing with real images, detected 2D points in pixel coordinates and 3D lines are normalized, and the recovery of the pose from unconstrained vectors of different sizes is elaborately devised to make the algorithm more stable and robust. 
    % {\color{red} Tests on open source datasets show that when the number of point/line correspondences reaches the order of hundreds, the proposed algorithm features the lowest estimation error.}
    
    \item \emph{Time complexity.} The capability of real-time implementation is an important evaluation metric for an algorithm used in robotic applications. Since the solutions in the first step can be calculated analytically (recall that both noise variance estimate and pose estimate are obtained via eigendecomposition) and only a single GN iteration is performed in the second step, it can be verified that each step in our estimator costs $O(n+m)$ time. Therefore, the total time complexity of our algorithm is $O(n+m)$, i.e., the cost time increases linearly w.r.t. the number of point/line correspondences. 
    % {\color{red} Tests on open source datasets show that even when $n+m$ reaches the order of thousands, the cost time of our algorithm is millisecond level, which demonstrates that it has the potential of real-time operating in robots with insufficient computational power.}
\end{itemize}

\section{Experiments} \label{experiments}
In this section, we compare our algorithms, referred to as \texttt{AOPnP} (for only point correspondences), \texttt{AOPnL} (for only line correspondences), and \texttt{AOPnPL} (for combined point and line correspondences), with some well-known and state-of-the-art camera pose solvers, including PnP algorithms (\texttt{EPnP}~\cite{lepetit2009epnp}, \texttt{MLPnP}~\cite{urban2016mlpnp}, \texttt{DLS}~\cite{hesch2011direct}, \texttt{SQPnP}~\cite{terzakis2020consistently}), PnL algorithms (\texttt{ASPnL}~\cite{xu2016pose}, \texttt{SRPnL}~\cite{wang2019camera}, \texttt{Ro\_PnL}~\cite{liu2020globally}, \texttt{AlgLS}~\cite{mirzaei2011globally}), and PnPL algorithms (\texttt{EPnPL}~\cite{vakhitov2016accurate}, \texttt{OPnPL}~\cite{vakhitov2016accurate}, \texttt{EPnPLU}~\cite{vakhitov2021uncertainty}, \texttt{DLSLU}~\cite{vakhitov2021uncertainty}). 
The compared algorithms are implemented using available open source codes.

We conduct three kinds of experiments: In Subsection~\ref{simulations}, simulations with synthetic data are designed to verify the correctness of our theoretical claims, including asymptotic unbiasedness, consistency, asymptotic efficiency, and linear time complexity of the proposed estimators; In Subsection~\ref{static_real_image_tests}, static real image tests are implemented with datasets ETH3D~\cite{schops2017multi} and VGG~\cite{werner2002new} for PnP, PnL, and PnPL estimators, respectively. This corresponds to the applications of absolute pose estimation for a robot in a known environment. In Subsection~\ref{visual_odometry}, visual odometry experiments are performed with the EuRoC MAV dataset~\cite{euroc}. This corresponds to the applications of robot localization and navigation in an unknown environment. 

\subsection{Simulations with synthetic data} \label{simulations}
Throughout the simulations, the Euler angles are set as $[\pi/3~\pi/3~\pi/3]^\top$, and the translation vector is $[2~2~2]^\top$. For the camera intrinsic parameters, the focal length is set as $f=50$mm ($800$ pixels), and the size of the image plane is $640 \times 480$ pixels. The intrinsic matrix is given by
\begin{equation*}
{\bf K} = \begin{bmatrix}
800 & 0 & 320 \\
0 & 800 & 240 \\
0 & 0 & 1
\end{bmatrix}~{\rm pixels}.
\end{equation*}
For data generation, we first randomly generate $n$ points and $2m$ endpoints of $m$ lines in the image plane. Then, these points are endowed with a random depth between $[2,10]$m, yielding their 3D coordinates in the camera frame. Further, the coordinates in the camera frame are transformed into the world frame using the true pose value, by which $n$ 3D points and $m$ 3D lines have been generated. Finally, $n$ 2D points and $2m$ 2D endpoints in the image plane are corrupted by i.i.d. Gaussian noises with zero mean and $\sigma$ (pixels) standard deviation.

In terms of estimation accuracy, we compare the mean squared error (MSE) of each estimator. The MSE is approximated as
\begin{align*}
    {\rm MSE}(\hat {\bf R}) & = \frac{1}{K}\sum\limits_{k=1}^{K} {\|\hat {\bf R}(\omega_k)-{\bf R}^o\|}_{\rm F}^2, \\
    {\rm MSE}(\hat {\bf t}) & = \frac{1}{K}\sum\limits_{k=1}^{K} {\|\hat {\bf t}(\omega_k)-{\bf t}^o\|}^2,
\end{align*}
where $K$ is the number of Monte Carlo tests, and $(\hat {\bf R}(\omega_k),\hat {\bf t}(\omega_k))$ is the estimated pose in the $k$-th test. Throughout the simulations, we set $K=1000$.
The scalar bias is approximated as
\begin{align*}
	\Delta {\bf R} & = \left|\frac{1}{K} \sum_{k=1}^{K} \hat {\bf R}(\omega_k)-{\bf R}^o \right|, ~~{\rm Bias}(\hat {\bf R})=\sum_{i=1}^{3} \sum_{j=1}^{3} [\Delta {\bf R}]_{ij}\\
	\Delta {\bf t} & =\left| \frac{1}{K} \sum_{k=1}^{K} \hat {\bf t}(\omega_k)-{\bf t}^o \right|,~~{\rm Bias}(\hat {\bf t})=\sum_{i=1}^{3} [\Delta {\bf t}]_{i}.
\end{align*}

\emph{(1) Consistency of noise variance estimators.} 
To devise consistent pose estimators, we first derive consistent estimators for measurement noise variance, as shown in Theorems~\ref{point_consistent_noise_estimation},~\ref{line_consistent_noise_estimation}, and~\ref{point_line_consistent_noise_estimation}. 
We let $n=m=10,30,100,300,1000$ and $\sigma=5,10$ pixels.
Figure~\ref{noise_mse} presents the MSEs of noise variance estimates, where double logarithmic axes have been used. We see that the MSEs decline linearly w.r.t. the measurement number, which validates that the proposed noise variance estimators are all $\sqrt{n+m}$-consistent. The MSEs using points are lower than those using lines. In addition, the MSEs by combining points and lines are the lowest, which coincides with intuition. 

\begin{figure}[!htbp]
	\centering
	\begin{subfigure}[b]{0.24\textwidth}
		\centering
		\includegraphics[width=\textwidth]{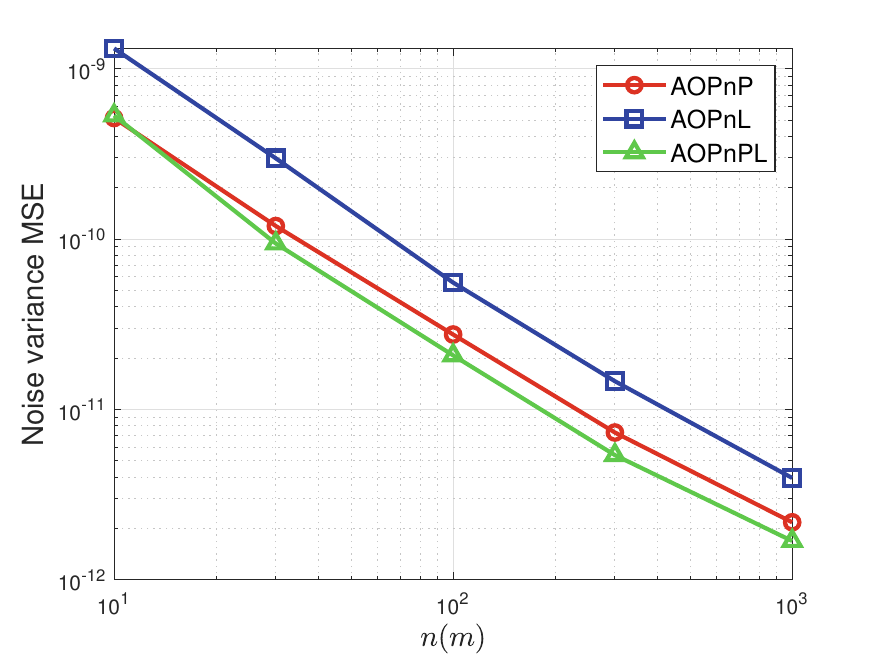}
		\caption{$\sigma=5$ pixels}
		\label{noise_mse_5px}
	\end{subfigure}
	\begin{subfigure}[b]{0.24\textwidth}
		\centering
		\includegraphics[width=\textwidth]{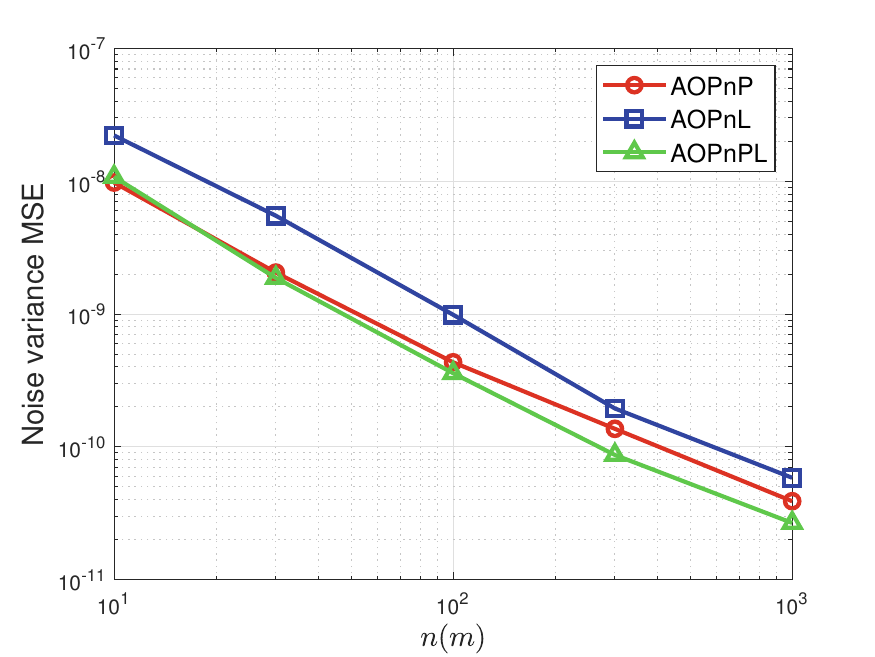}
		\caption{$\sigma=10$ pixels}
		\label{noise_mse_10px}
	\end{subfigure}
	\caption{MSEs of noise variance estimates.}
	\label{noise_mse}
\end{figure}

\emph{(2) Asymptotic unbiasedness of pose estimators.}
As stated before, in~\eqref{point_geometry_matrix_form},~\eqref{line_geometry_matrix_form}, and~\eqref{point_line_geometry_matrix_form}, the regressor is correlated with the noise term. Without bias elimination, the estimates derived from these equations are not asymptotically unbiased. Based on the estimate of noise variance, the proposed bias elimination method can eliminate the asymptotic bias. These claims are verified in Figure~\ref{estimate_bias}, where we use ``w.o.'' and ``BE'' for the abbreviations of ``without'' and ``bias elimination'', respectively. Without bias elimination, the biases of the estimators converge to a nonzero value, although the PnPL estimator achieves a much smaller bias compared to the PnP and PnL estimators. The biases of the proposed bias-eliminated estimators all converge to $0$, implying that they are asymptotically unbiased. 

\begin{figure*}[!htbp]
	\centering
	\begin{subfigure}[b]{0.24\textwidth}
		\centering
		\includegraphics[width=\textwidth]{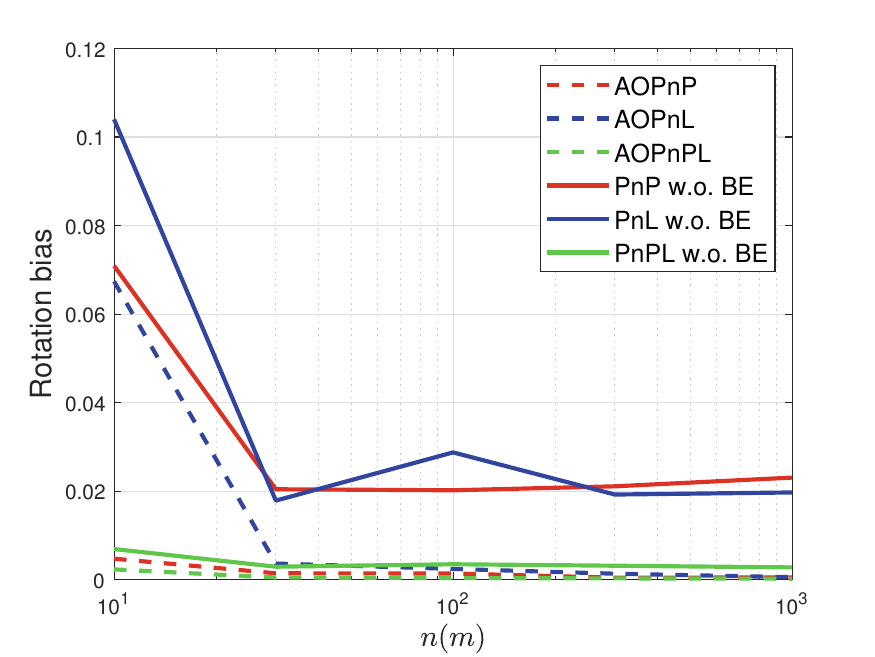}
		\caption{$\sigma=5$ pixels ($\bf R$)}
		\label{R_bias_5px}
	\end{subfigure}
	\begin{subfigure}[b]{0.24\textwidth}
		\centering
		\includegraphics[width=\textwidth]{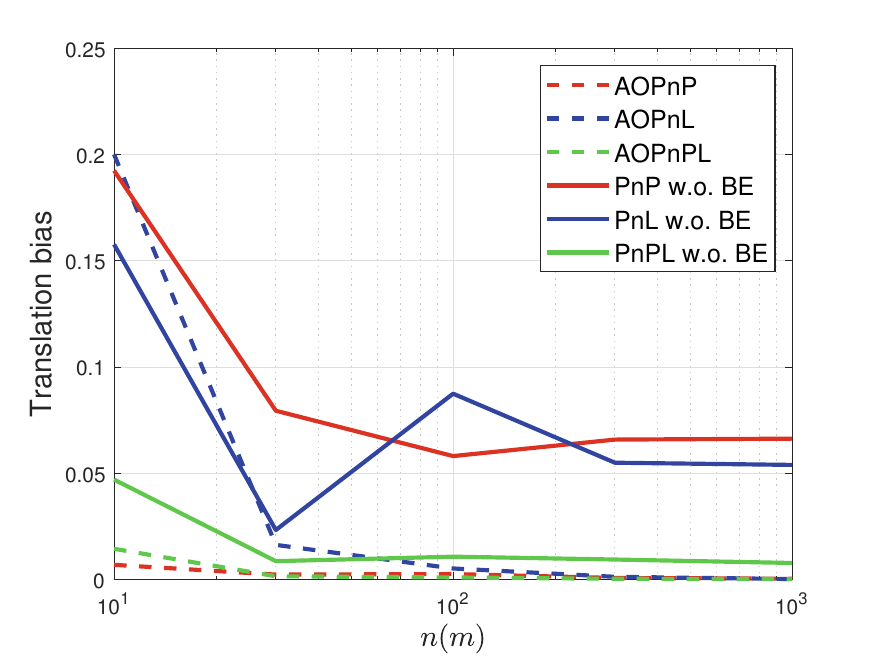}
		\caption{$\sigma=5$ pixels ($\bf t$)}
		\label{t_bias_5px}
	\end{subfigure}
 	\begin{subfigure}[b]{0.24\textwidth}
		\centering
		\includegraphics[width=\textwidth]{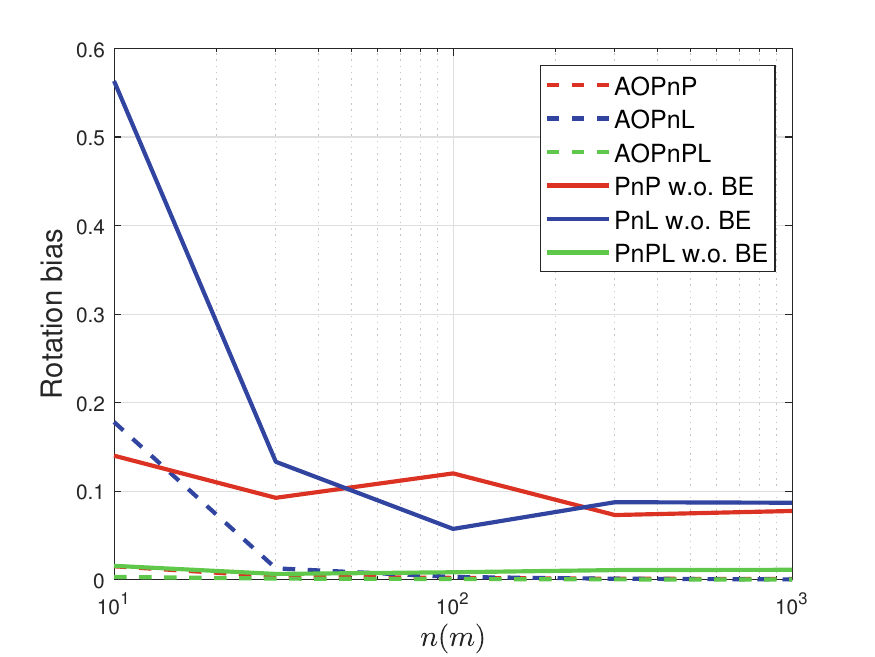}
		\caption{$\sigma=10$ pixels ($\bf R$)}
		\label{R_bias_10px}
	\end{subfigure}
	\begin{subfigure}[b]{0.24\textwidth}
		\centering
		\includegraphics[width=\textwidth]{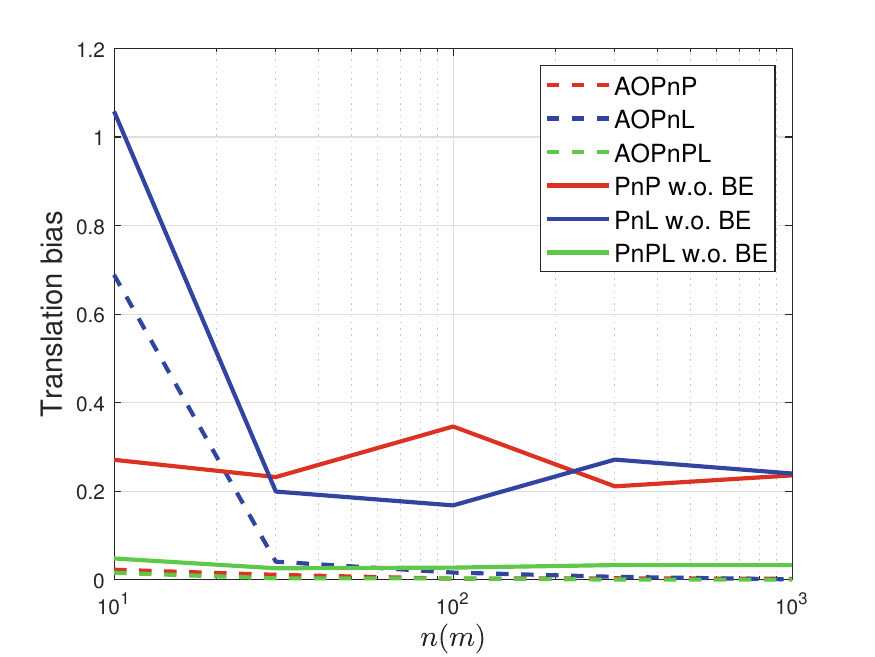}
		\caption{$\sigma=10$ pixels ($\bf t$)}
		\label{t_bias_10px}
	\end{subfigure}
	\caption{Biases of pose estimates.}
	\label{estimate_bias}
\end{figure*}

\emph{(3) Consistency and asymptotic efficiency of pose estimators.}
We have proved that the proposed estimator is $\sqrt{n+m}$-consistent in its first step, based on which a single GN iteration suffices to reach the CRB, i.e., the proposed two-step estimator is asymptotically efficient. The MSE comparison among different PnP pose estimators is presented in Figure~\ref{pnp_mse}. It is obvious from Figure~\ref{pnp_t_mse_10px} that the MSEs of the \texttt{DLS}, \texttt{EPnP}, and \texttt{SQPnP} estimators do not converge to $0$. This is because they are not asymptotically unbiased, and their asymptotic MSEs are dominated by asymptotic biases.  
We can see that when $n \geq 30$, our estimator \texttt{AOPnP} can reach the CRB, coinciding with our theoretical claims. The CRB converges to $0$, which shows that the more point correspondences, the more accurate the estimate is. It is noteworthy that the \texttt{MLPnP} estimator seems to be also asymptotically efficient. This is because it performs multiple GN iterations. The attraction neighborhood of the global minimum of~\eqref{ML_problem} is relatively large in this scenario. As a result, although the initial value is not consistent, it can still fall into the attraction neighborhood, and multiple GN iterations can find the global minimum. In Appendix~\ref{MSE_of_MLPnP}, we set $\sigma=50$ pixels. In this case, the initial value of the \texttt{MLPnP} falls beyond the attraction neighborhood, and the \texttt{MLPnP} estimate cannot reach the CRB. 

\begin{figure*}[!htbp]
	\centering
	\begin{subfigure}[b]{0.24\textwidth}
		\centering
		\includegraphics[width=\textwidth]{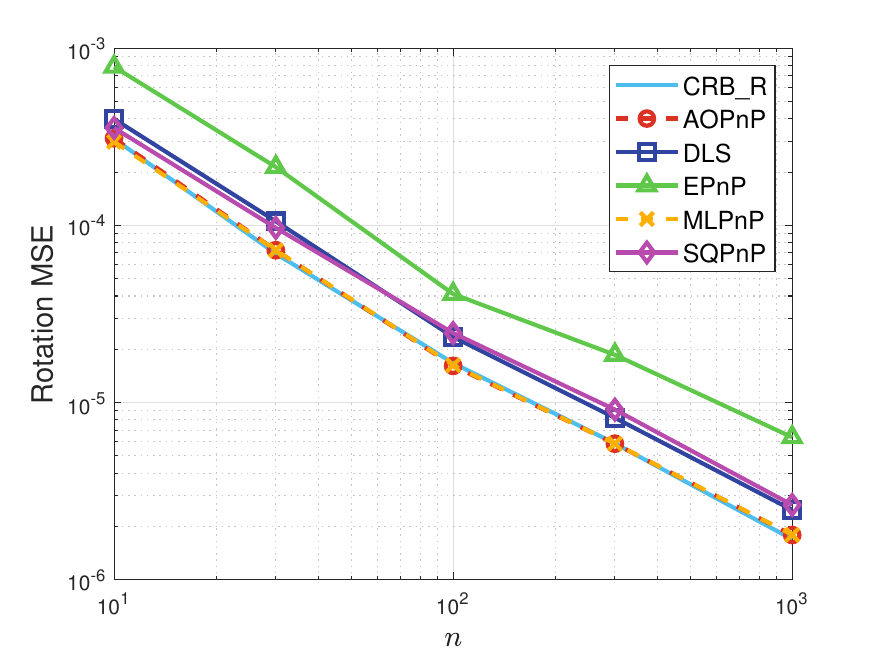}
		\caption{$\sigma=5$ pixels ($\bf R$)}
		\label{pnp_R_mse_5px}
	\end{subfigure}
	\begin{subfigure}[b]{0.24\textwidth}
		\centering
		\includegraphics[width=\textwidth]{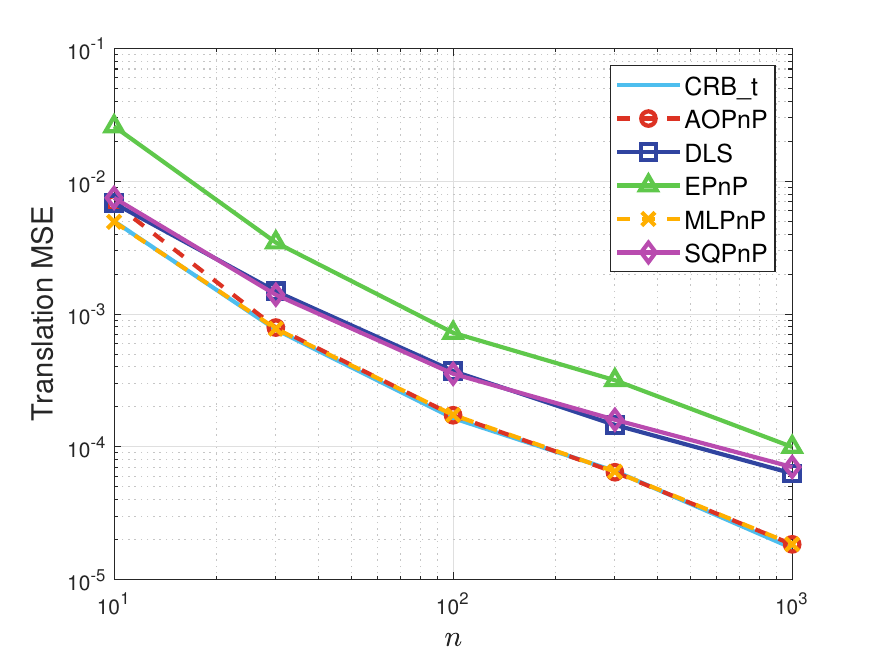}
		\caption{$\sigma=5$ pixels ($\bf t$)}
		\label{pnp_t_mse_5px}
	\end{subfigure}
 	\begin{subfigure}[b]{0.24\textwidth}
		\centering
		\includegraphics[width=\textwidth]{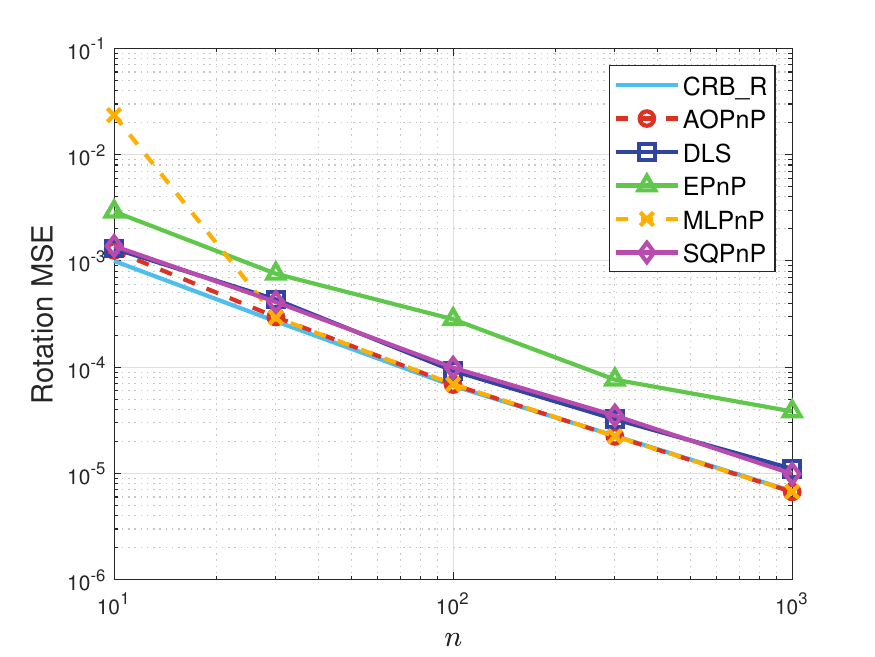}
		\caption{$\sigma=10$ pixels ($\bf R$)}
		\label{pnp_R_mse_10px}
	\end{subfigure}
	\begin{subfigure}[b]{0.24\textwidth}
		\centering
		\includegraphics[width=\textwidth]{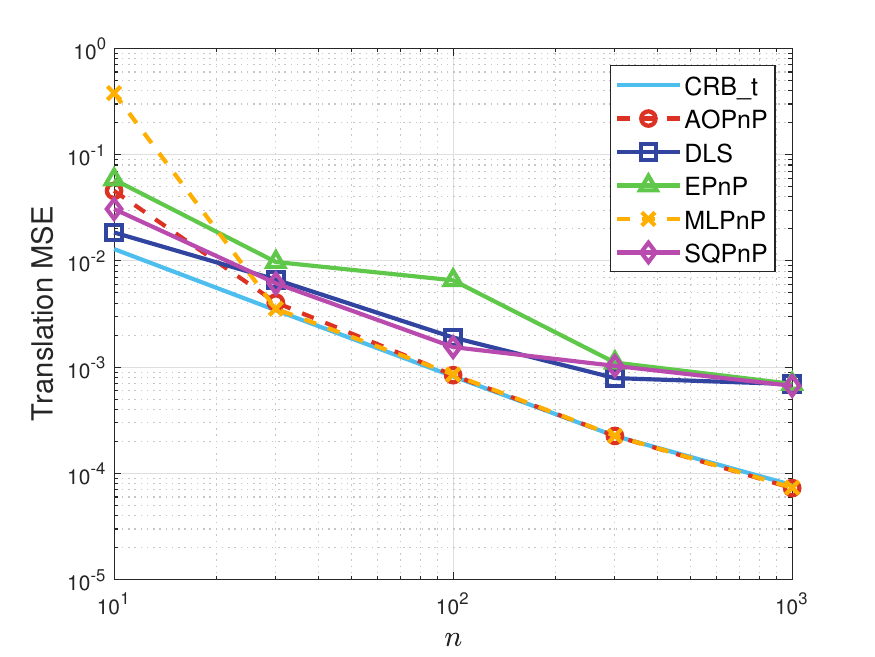}
		\caption{$\sigma=10$ pixels ($\bf t$)}
		\label{pnp_t_mse_10px}
	\end{subfigure}
	\caption{MSEs of PnP pose estimates.}
	\label{pnp_mse}
\end{figure*}

The MSE comparison among different PnL pose estimators is presented in Figure~\ref{pnl_mse}. We see that the proposed \texttt{AOPnL} estimator can asymptotically reach the CRB. In addition, it outperforms the other estimators significantly. It seems that the compared PnL estimators are less stable than previous PnP estimators. When $\sigma=10$ pixels, the MSEs of the \texttt{ASPnL}, \texttt{SRPnL}, and \texttt{AlgLS} estimators fluctuate w.r.t. the line number. 

\begin{figure*}[!htbp]
	\centering
	\begin{subfigure}[b]{0.24\textwidth}
		\centering
		\includegraphics[width=\textwidth]{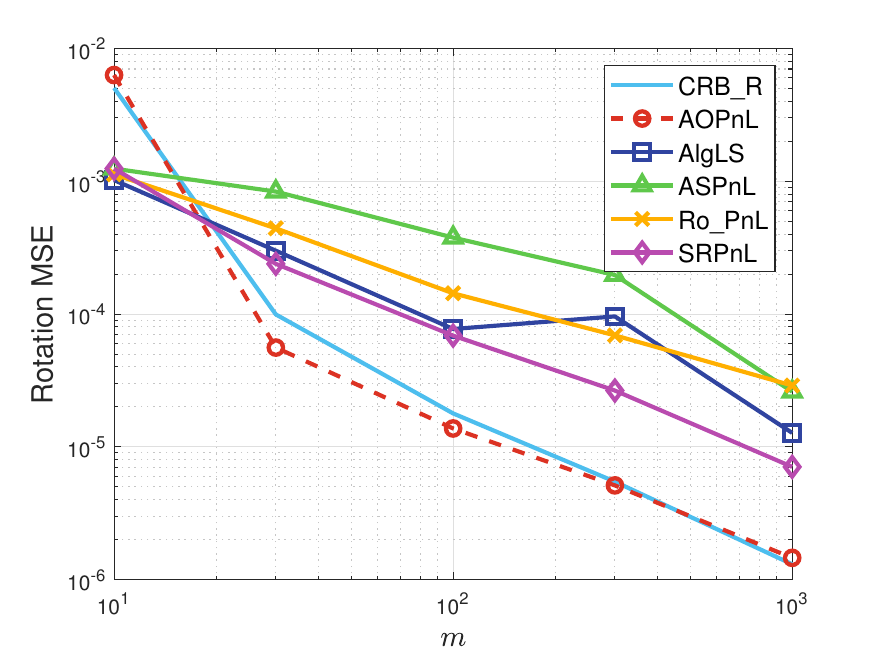}
		\caption{$\sigma=5$ pixels ($\bf R$)}
		\label{pnl_R_mse_5px}
	\end{subfigure}
	\begin{subfigure}[b]{0.24\textwidth}
		\centering
		\includegraphics[width=\textwidth]{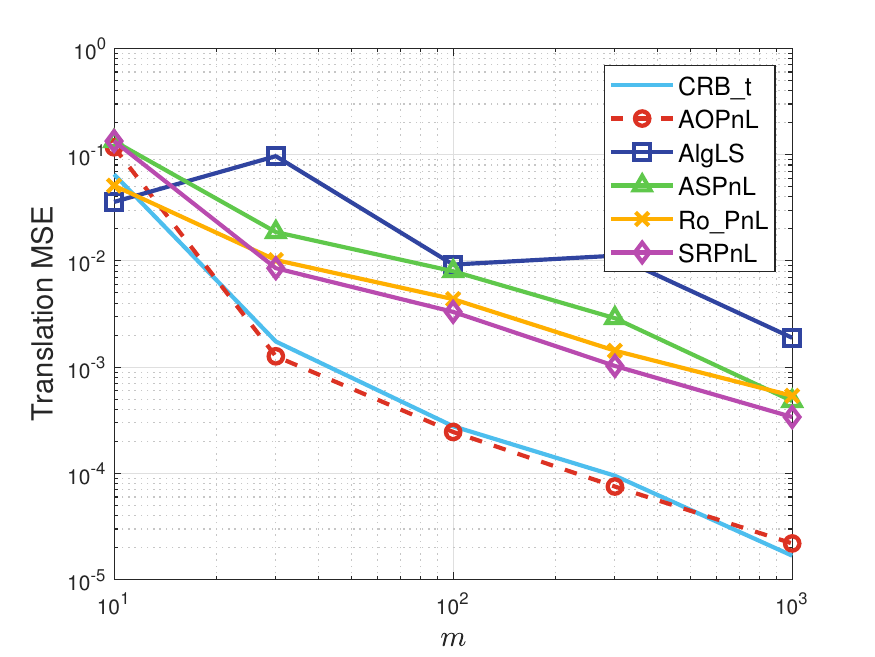}
		\caption{$\sigma=5$ pixels ($\bf t$)}
		\label{pnl_t_mse_5px}
	\end{subfigure}
 	\begin{subfigure}[b]{0.24\textwidth}
		\centering
		\includegraphics[width=\textwidth]{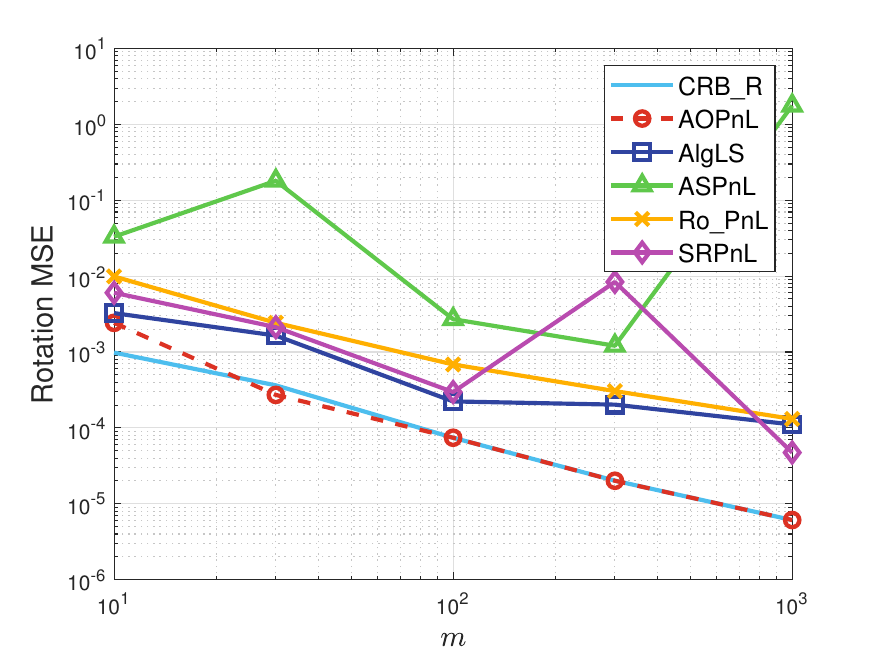}
		\caption{$\sigma=10$ pixels ($\bf R$)}
		\label{pnl_R_mse_10px}
	\end{subfigure}
	\begin{subfigure}[b]{0.24\textwidth}
		\centering
		\includegraphics[width=\textwidth]{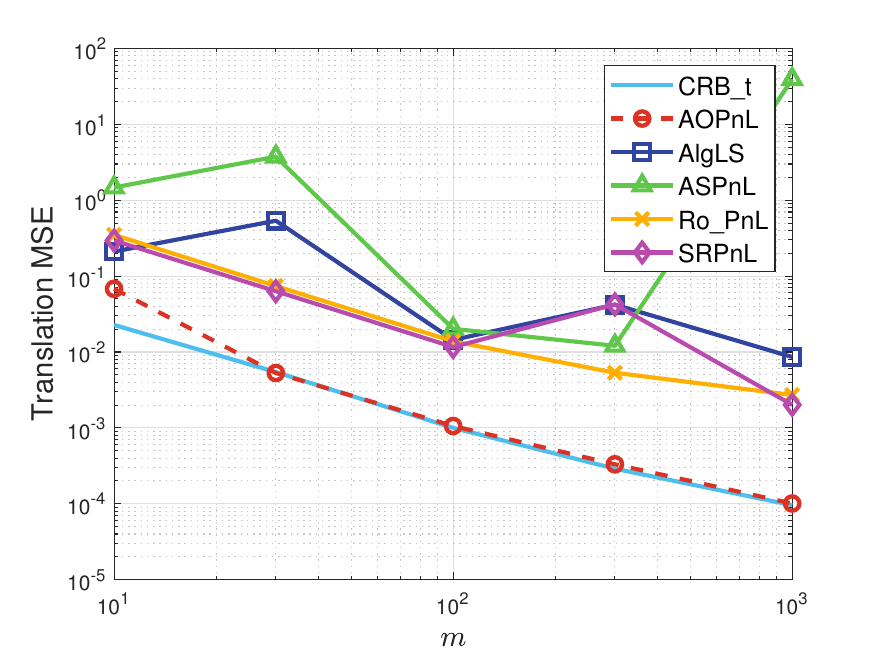}
		\caption{$\sigma=10$ pixels ($\bf t$)}
		\label{pnl_t_mse_10px}
	\end{subfigure}
	\caption{MSEs of PnL pose estimates.}
	\label{pnl_mse}
\end{figure*}

Finally, we compare different PnPL estimators which utilize both point and line correspondences. The result is shown in Figure~\ref{pnpl_mse}. The proposed estimator \texttt{AOPnPL} can asymptotically reach the CRB. It is noteworthy that when the measurement number is relatively small, the \texttt{AOPnPL} estimator may outperform the CRB. This is because the CRB is a lower bound for unbiased estimators, while the proposed estimator is not necessarily unbiased in a finite sample case. For the comparison estimators, they are less accurate than ours. Especially in the case of a large measurement number and noise intensity, the proposed \texttt{AOPnPL} can outperform the other estimators more than 10 times. 

\begin{figure*}[!htbp]
	\centering
	\begin{subfigure}[b]{0.24\textwidth}
		\centering
		\includegraphics[width=\textwidth]{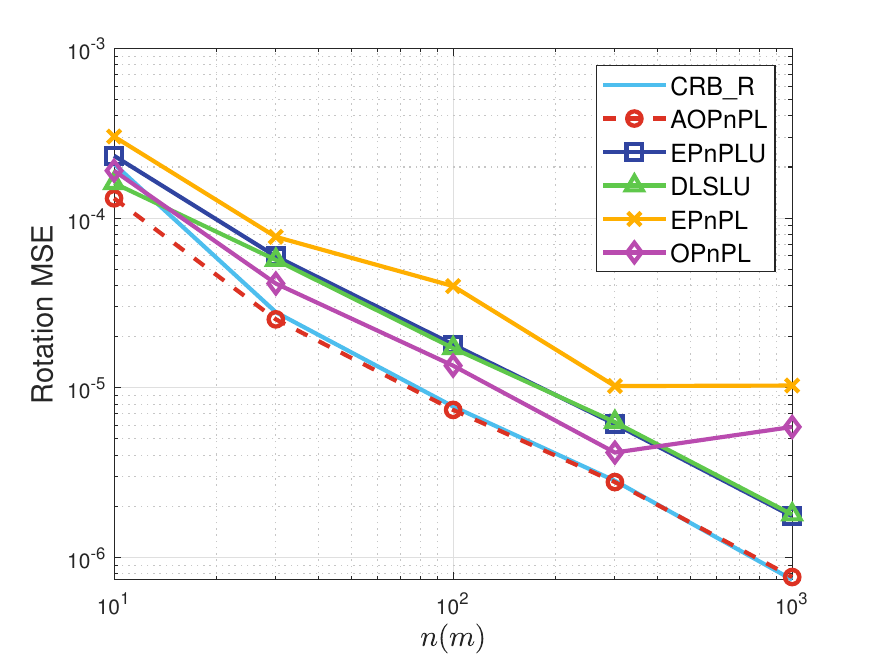}
		\caption{$\sigma=5$ pixels ($\bf R$)}
		\label{pnpl_R_mse_5px}
	\end{subfigure}
	\begin{subfigure}[b]{0.24\textwidth}
		\centering
		\includegraphics[width=\textwidth]{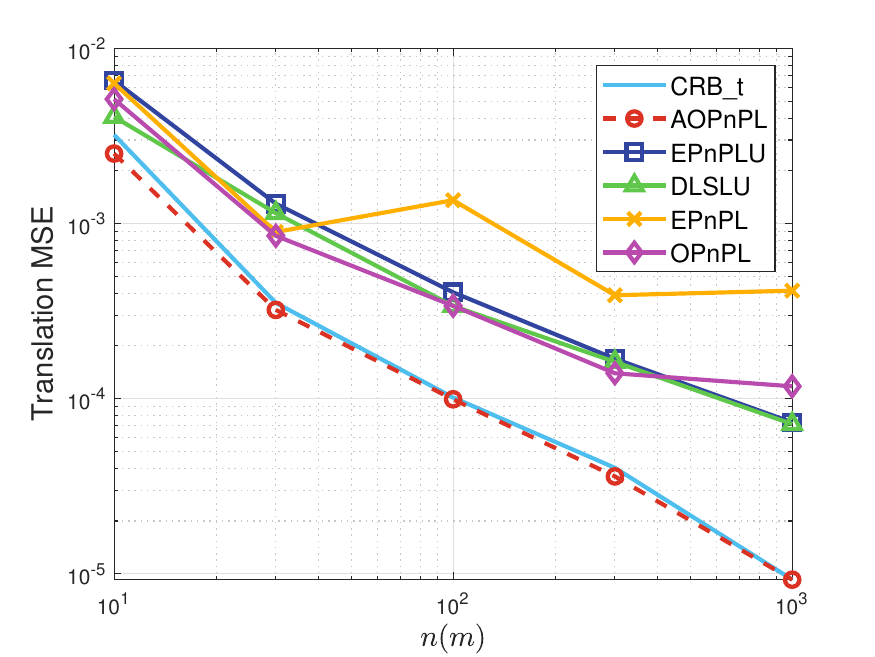}
		\caption{$\sigma=5$ pixels ($\bf t$)}
		\label{pnpl_t_mse_5px}
	\end{subfigure}
 	\begin{subfigure}[b]{0.24\textwidth}
		\centering
		\includegraphics[width=\textwidth]{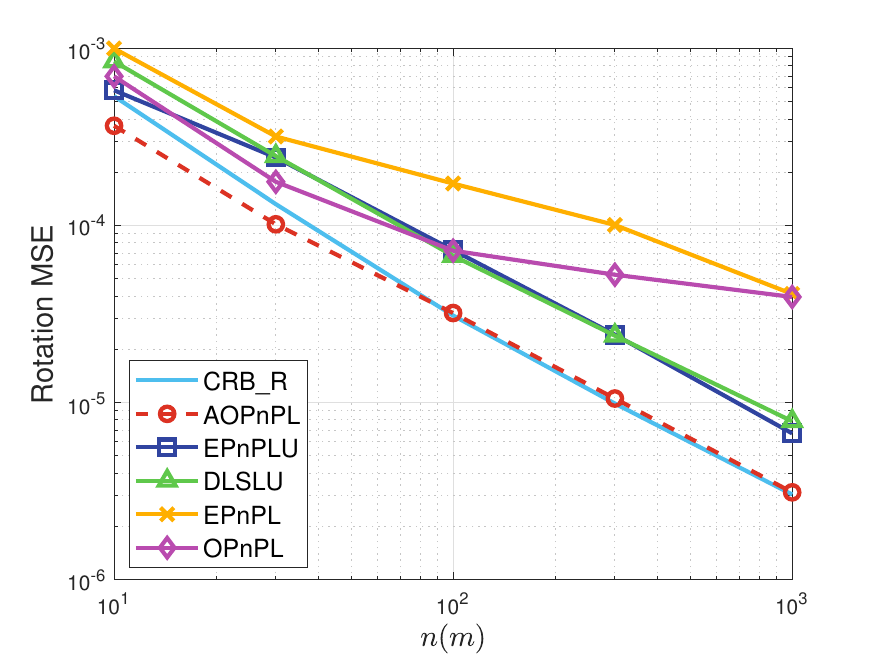}
		\caption{$\sigma=10$ pixels ($\bf R$)}
		\label{pnpl_R_mse_10px}
	\end{subfigure}
	\begin{subfigure}[b]{0.24\textwidth}
		\centering
		\includegraphics[width=\textwidth]{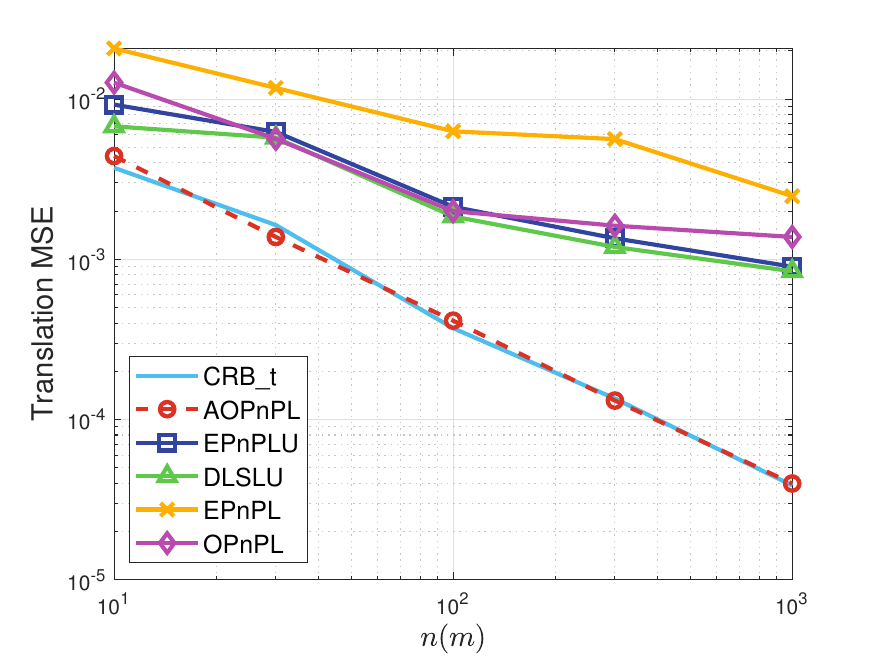}
		\caption{$\sigma=10$ pixels ($\bf t$)}
		\label{pnpl_t_mse_10px}
	\end{subfigure}
	\caption{MSEs of PnPL pose estimates.}
	\label{pnpl_mse}
\end{figure*}

The MSE comparison of our proposed estimators using different kinds of measurements is shown in Figure~\ref{pnp_pnl_pnpl_mse}. We see that using only point correspondences or only line correspondences leads to comparable MSEs. This is because the intensity of the noises added to points and endpoints of lines is the same, and each point or line correspondence yields two independent equations. By fusing point and line measurements, the \texttt{AOPnPL} estimator can achieve an MSE smaller than half of that of the \texttt{AOPnP} and \texttt{AOPnL} estimators. 

\begin{figure*}[!htbp]
	\centering
	\begin{subfigure}[b]{0.24\textwidth}
		\centering
		\includegraphics[width=\textwidth]{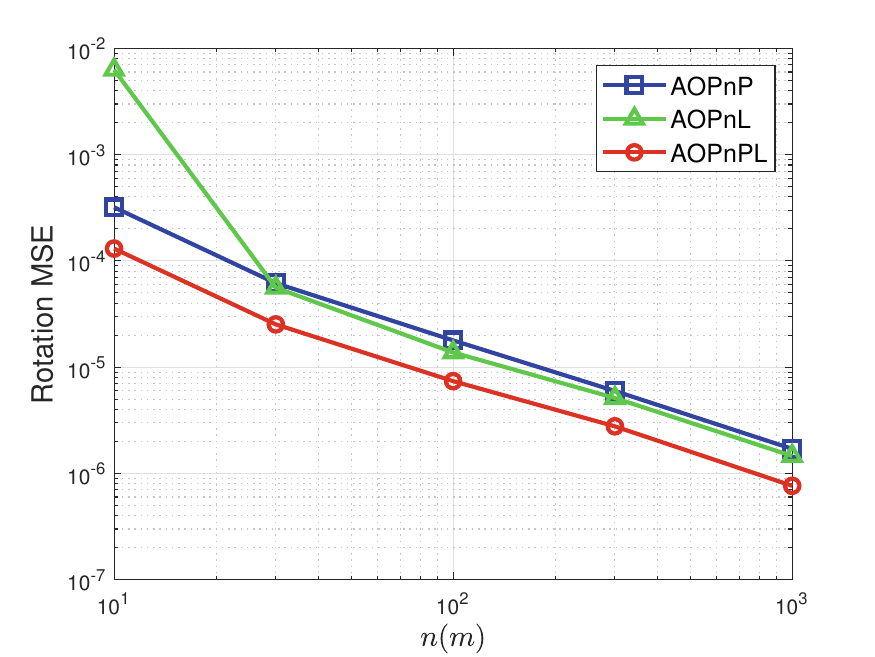}
		\caption{$\sigma=5$ pixels ($\bf R$)}
		\label{pnp_pnl_pnpl_R_mse_5px}
	\end{subfigure}
	\begin{subfigure}[b]{0.24\textwidth}
		\centering
		\includegraphics[width=\textwidth]{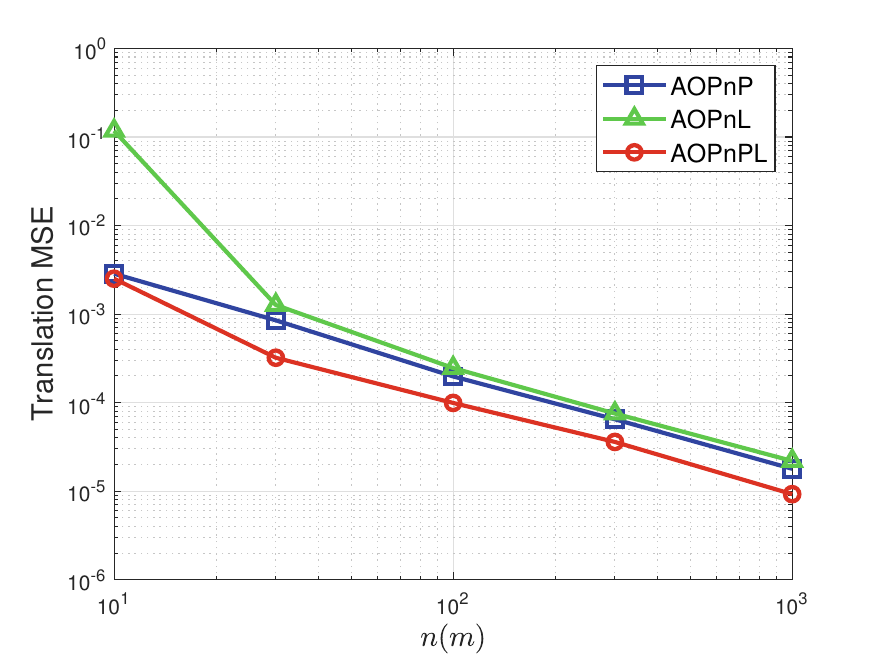}
		\caption{$\sigma=5$ pixels ($\bf t$)}
		\label{pnp_pnl_pnpl_t_mse_5px}
	\end{subfigure}
 	\begin{subfigure}[b]{0.24\textwidth}
		\centering
		\includegraphics[width=\textwidth]{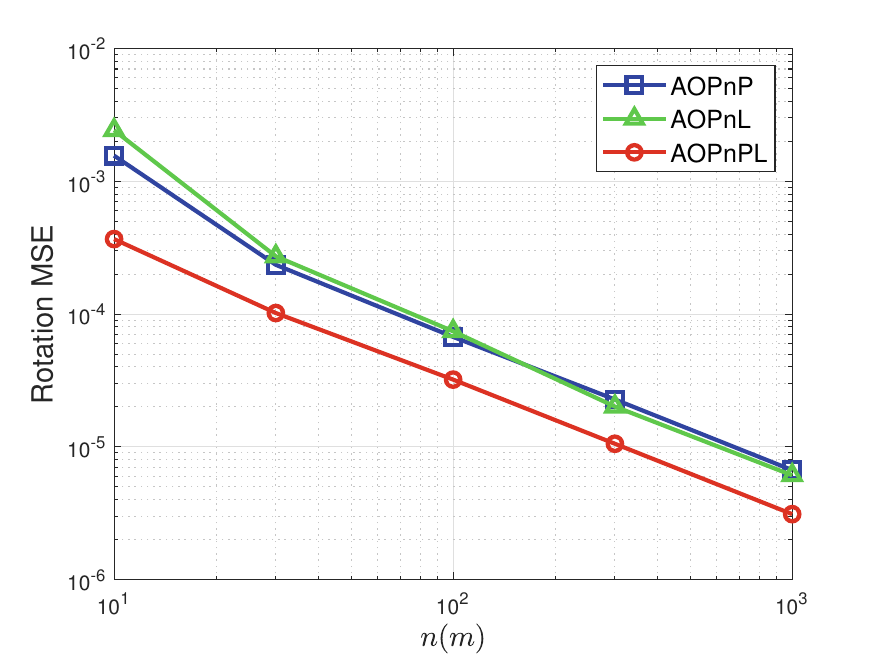}
		\caption{$\sigma=10$ pixels ($\bf R$)}
		\label{pnp_pnl_pnpl_R_mse_10px}
	\end{subfigure}
	\begin{subfigure}[b]{0.24\textwidth}
		\centering
		\includegraphics[width=\textwidth]{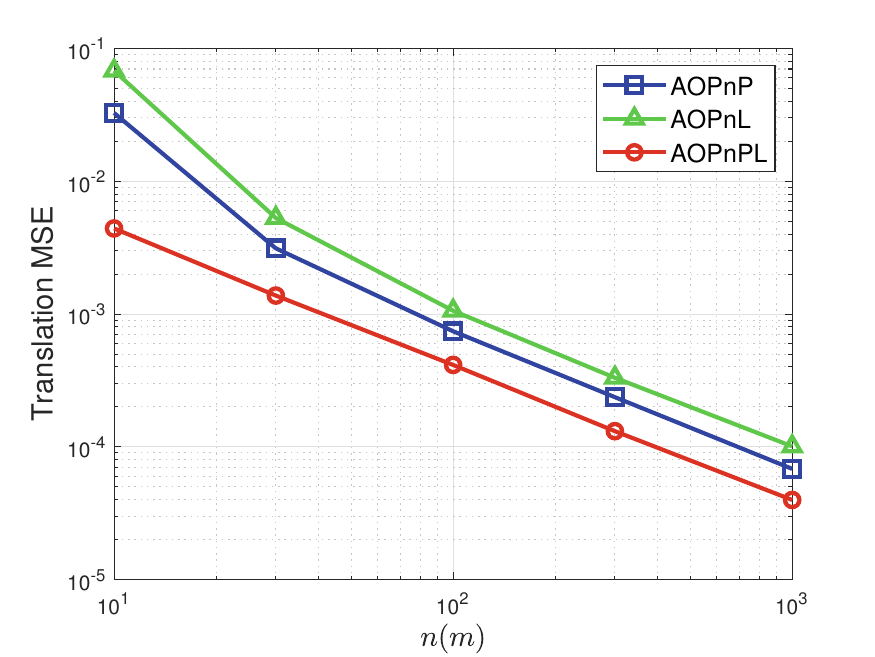}
		\caption{$\sigma=10$ pixels ($\bf t$)}
		\label{pnp_pnl_pnpl_t_mse_10px}
	\end{subfigure}
	\caption{MSEs of different kinds of measurements.}
	\label{pnp_pnl_pnpl_mse}
\end{figure*}

\emph{(4) Linear time complexity of pose estimators.}
Since the solution in the first step can be calculated analytically, and only a single GN iteration is performed in the second step, the total time complexity of our algorithm is $O(n+m)$, i.e., the cost time increases linearly w.r.t. the measurement number. The accumulated CPU time (over $1000$ Monte Carlo tests) of all tested algorithms is presented in Figure~\ref{CPU_time_comparison}. Apart from the \texttt{SQPnP} using Python implementation, all algorithms are executed with MATLAB codes. We see that the CPU time of our proposed algorithm increases linearly w.r.t. the measurement number. Among all tested algorithms, our proposed one is at an intermediate level. Some estimators, such as \texttt{Ro\_PnL} and \texttt{ASPnL}, may be unable to be implemented in real time when the measurement number is large. 

\begin{figure*}[!htbp]
	\centering
	\begin{subfigure}[b]{0.24\textwidth}
		\centering
		\includegraphics[width=\textwidth]{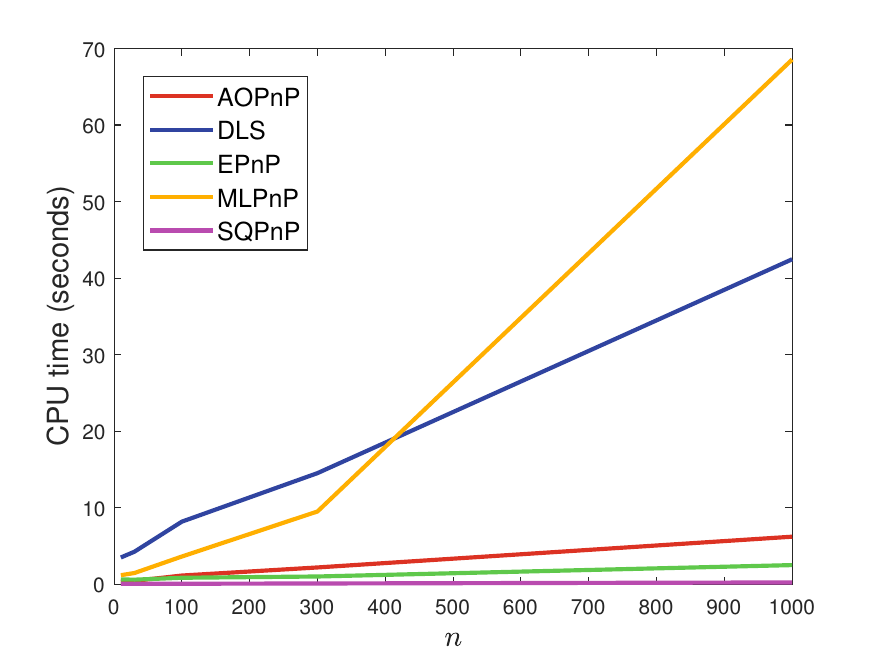}
		\caption{PnP estimators}
		\label{pnp_CPU_time}
	\end{subfigure}
 \begin{subfigure}[b]{0.24\textwidth}
		\centering
		\includegraphics[width=\textwidth]{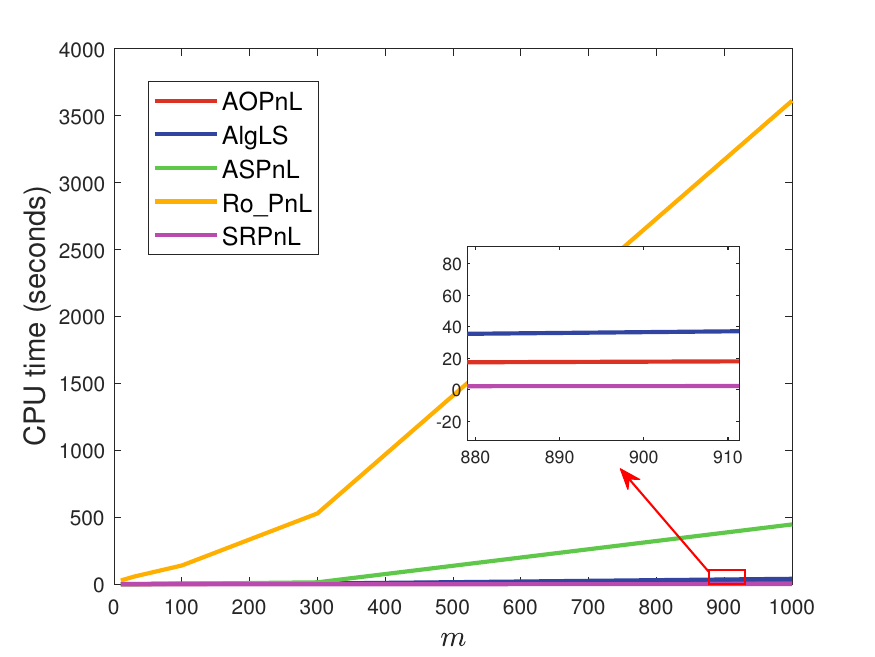}
		\caption{PnL estimators}
		\label{pnl_CPU_time}
	\end{subfigure}
	\begin{subfigure}[b]{0.24\textwidth}
		\centering
		\includegraphics[width=\textwidth]{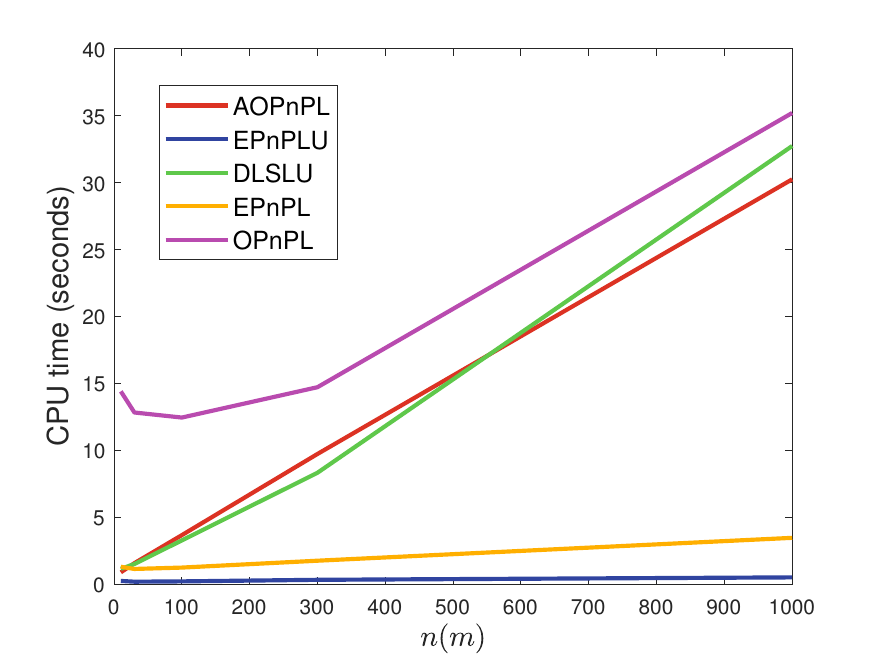}
		\caption{PnPL estimators}
		\label{pnpl_CPU_time}
	\end{subfigure}
  \begin{subfigure}[b]{0.24\textwidth}
		\centering
		\includegraphics[width=\textwidth]{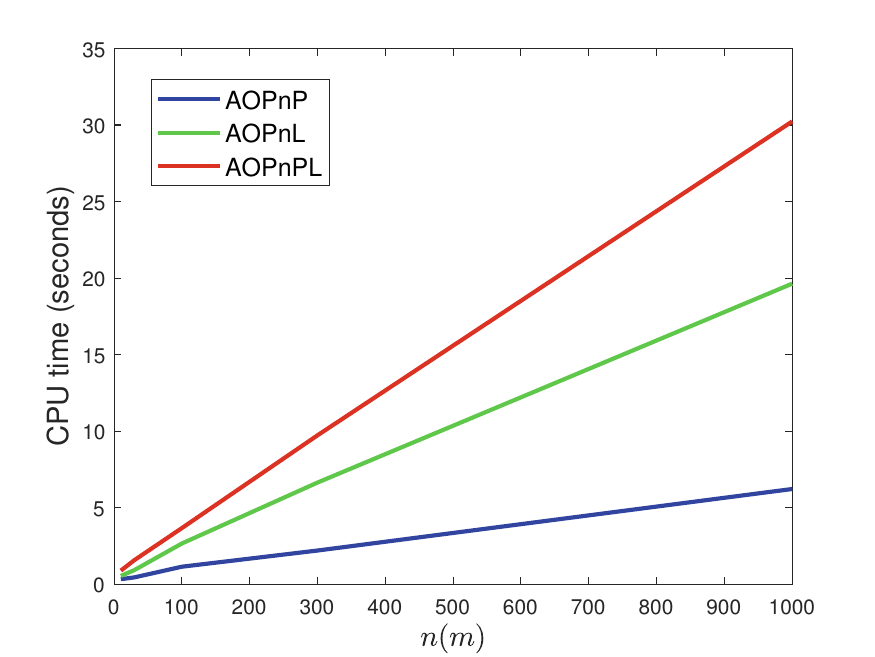}
		\caption{Different estimators}
		\label{pnp_pnl_pnpl_CPU_time}
	\end{subfigure}
	\caption{Accumulated CPU time comparison over $1000$ Monte Carlo tests.}
	\label{CPU_time_comparison}
\end{figure*}

\subsection{Static real image tests} \label{static_real_image_tests}
In this subsection, we conduct static real image tests, which correspond to the applications of absolute robot pose estimation in a pre-reconstructed map. 
First, we compare PnP estimators with the ETH3D dataset~\cite{schops2017multi}. The dataset has $13$ scenes, each of which contains dozens of images. 
Figure~\ref{ETH3D_bar_plot} gives the mean estimation errors of PnP estimators in all scenes. In most scenes, our estimator \texttt{AOPnP} achieves the smallest errors. Resembling the simulation, the \texttt{MLPnP} algorithm owns a comparable accuracy with \texttt{AOPnP}, while the other estimators have much larger estimation errors. 

\begin{figure*}[!htbp]
	\centering
	\begin{subfigure}[b]{0.49\textwidth}
		\centering
		\includegraphics[width=\textwidth]{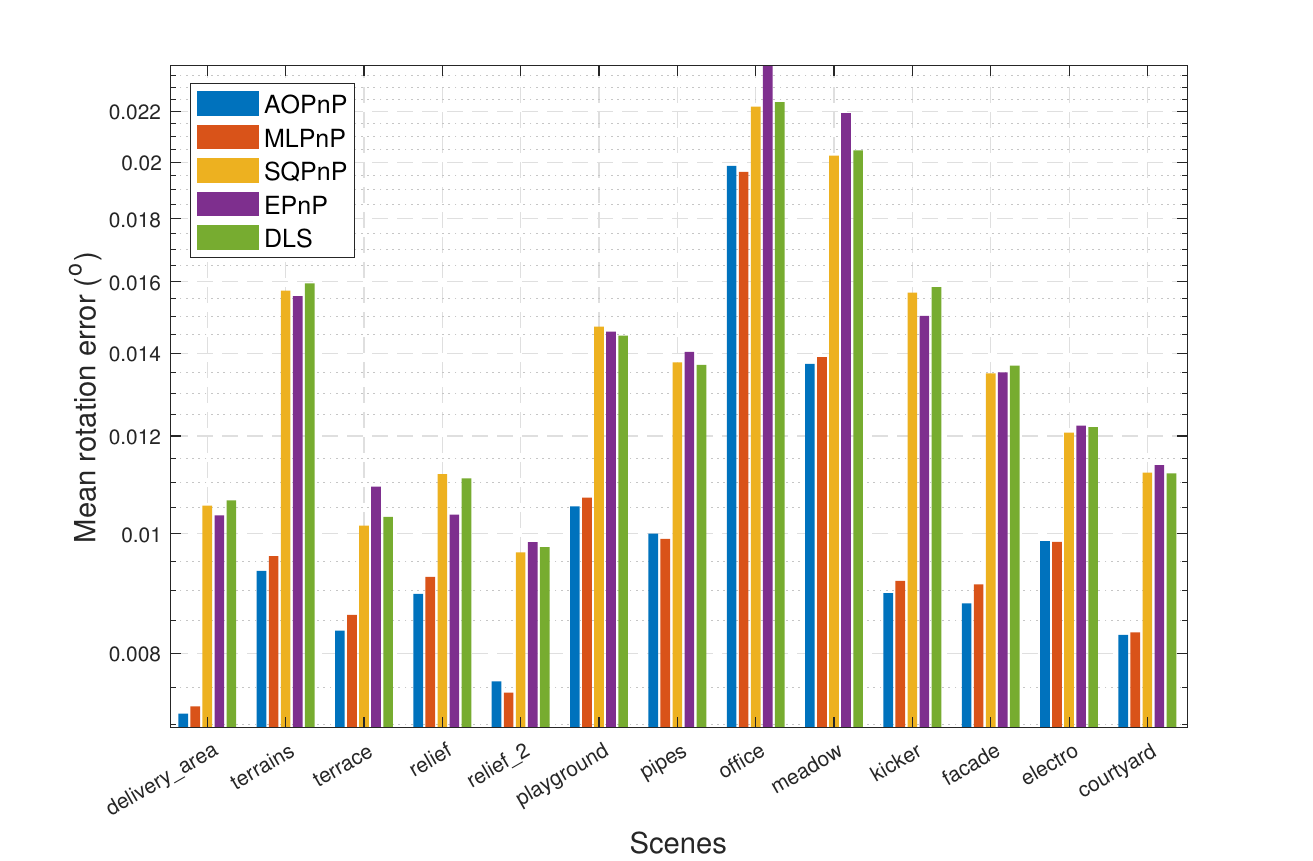}
		\caption{Rotation errors}
		\label{ETH3D_bar_plot_R}
	\end{subfigure}
  \begin{subfigure}[b]{0.49\textwidth}
		\centering
		\includegraphics[width=\textwidth]{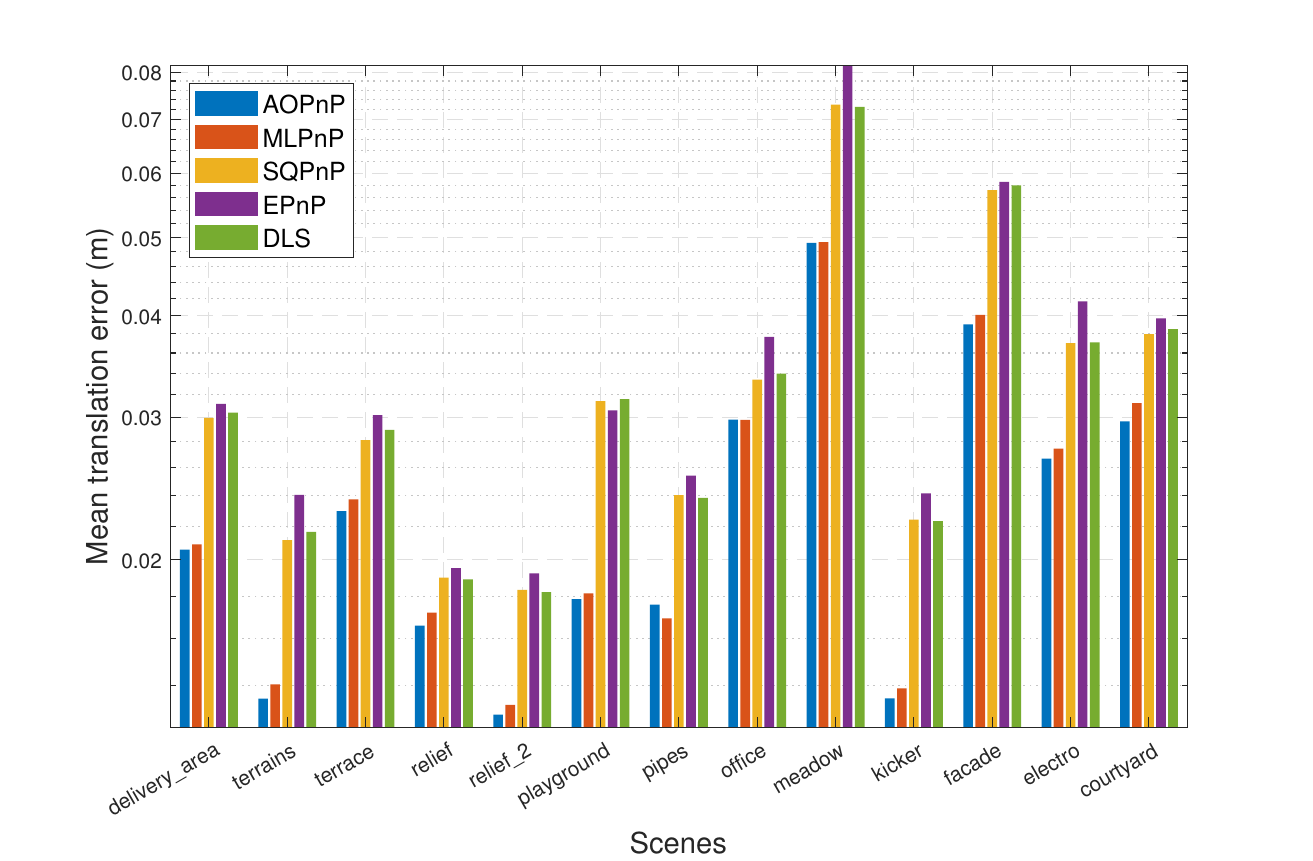}
		\caption{Translation errors}
		\label{ETH3D_bar_plot_t}
	\end{subfigure}
	\caption{Mean PnP pose estimation error comparison in $13$ scenes of the ETH3D dataset.}
	\label{ETH3D_bar_plot}
\end{figure*}

We select four scenes where our estimator has the smallest mean errors and plot the error distribution in Figure~\ref{ETH3D_box_plot}. We find that the reason why the \texttt{AOPnP} estimator performs best is two-fold: First, it produces less severe outlier estimates; Second, it has lower quartiles.

\begin{figure*}[!htbp]
	\centering
	\begin{subfigure}[b]{0.24\textwidth}
		\centering
		\includegraphics[width=\textwidth]{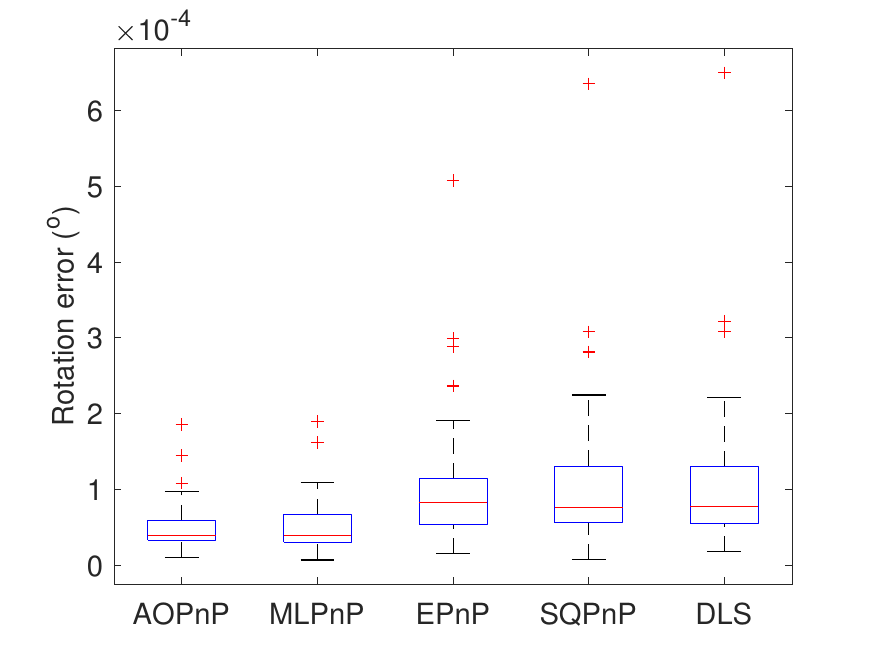}
		\caption{Delivery area ($\bf R$)}
		\label{ETH3D_box_plot_R_delivery}
	\end{subfigure}
  \begin{subfigure}[b]{0.24\textwidth}
		\centering
		\includegraphics[width=\textwidth]{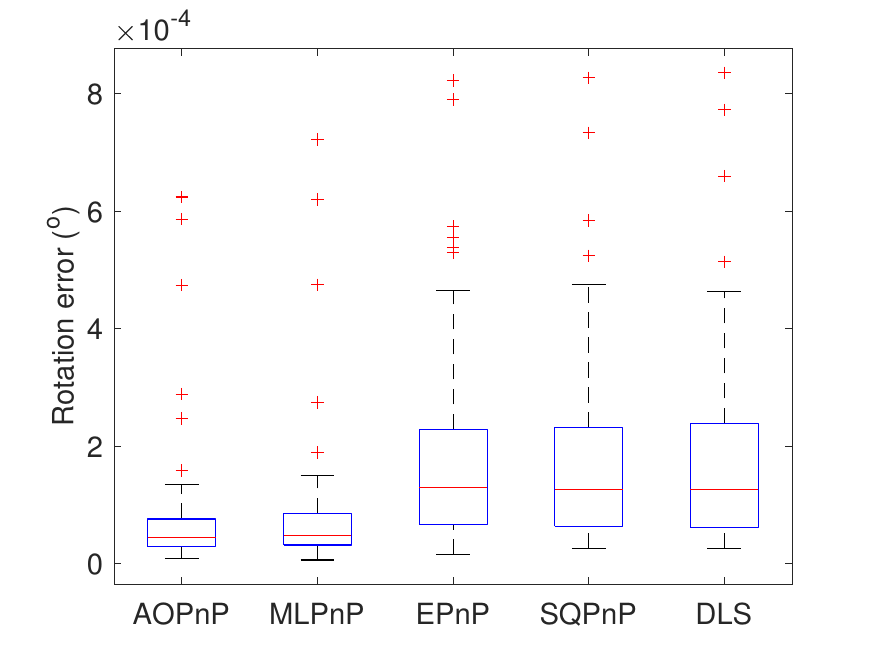}
		\caption{Facade ($\bf R$)}
		\label{ETH3D_box_plot_R_facade}
	\end{subfigure}
 \begin{subfigure}[b]{0.24\textwidth}
		\centering
		\includegraphics[width=\textwidth]{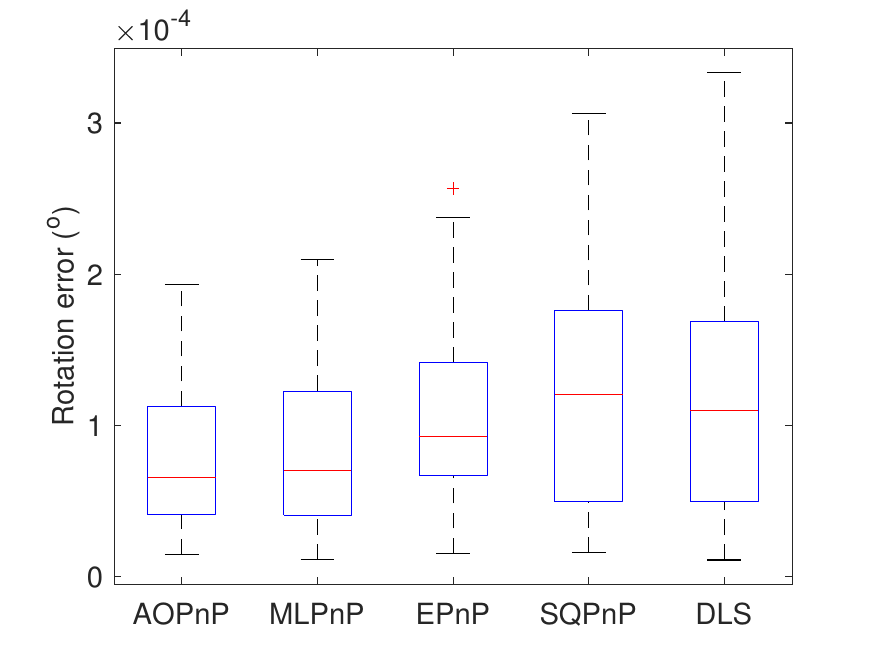}
		\caption{Relief ($\bf R$)}
		\label{ETH3D_box_plot_R_relief}
	\end{subfigure}
  \begin{subfigure}[b]{0.24\textwidth}
		\centering
		\includegraphics[width=\textwidth]{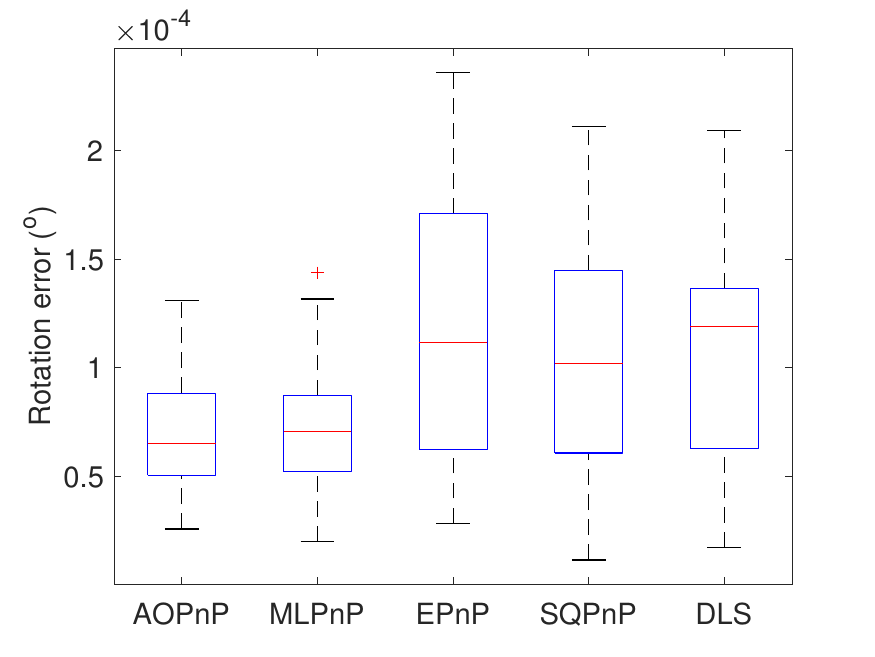}
		\caption{Terrace ($\bf R$)}
		\label{ETH3D_box_plot_R_terrace}
	\end{subfigure}

 \begin{subfigure}[b]{0.24\textwidth}
		\centering
		\includegraphics[width=\textwidth]{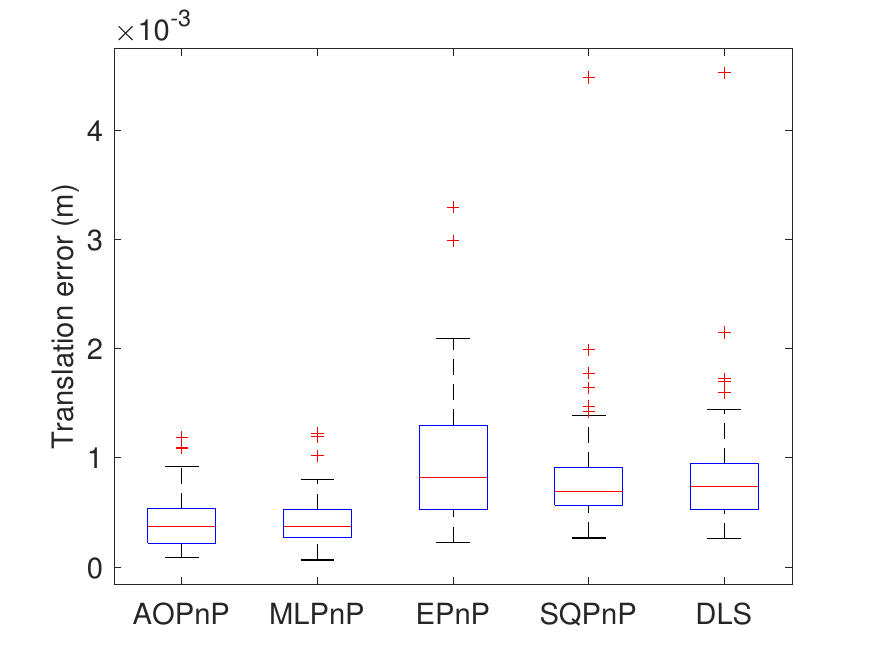}
		\caption{Delivery area ($\bf t$)}
		\label{ETH3D_box_plot_t_delivery}
	\end{subfigure}
  \begin{subfigure}[b]{0.24\textwidth}
		\centering
		\includegraphics[width=\textwidth]{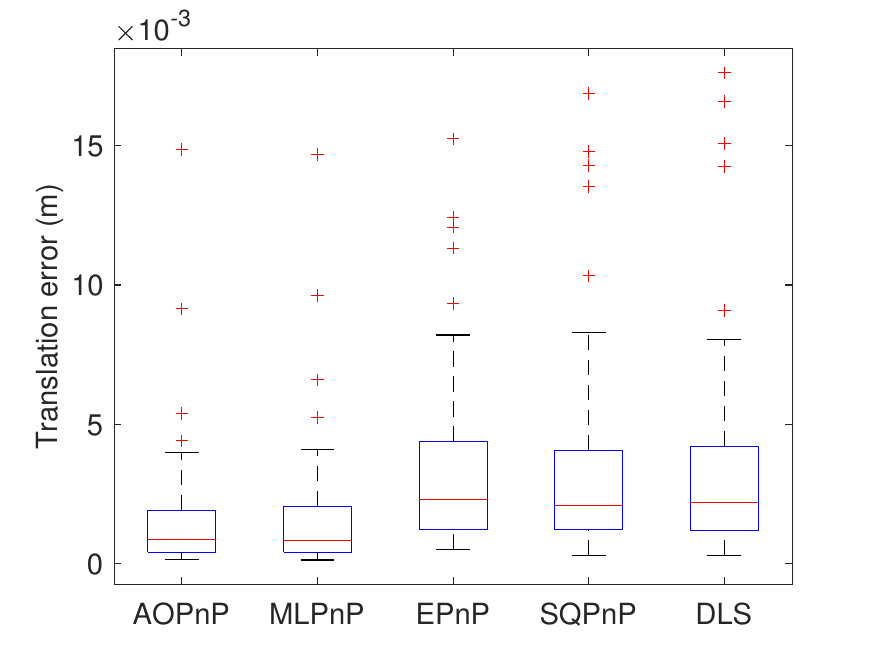}
		\caption{Facade ($\bf t$)}
		\label{ETH3D_box_plot_t_facade}
	\end{subfigure}
 \begin{subfigure}[b]{0.24\textwidth}
		\centering
		\includegraphics[width=\textwidth]{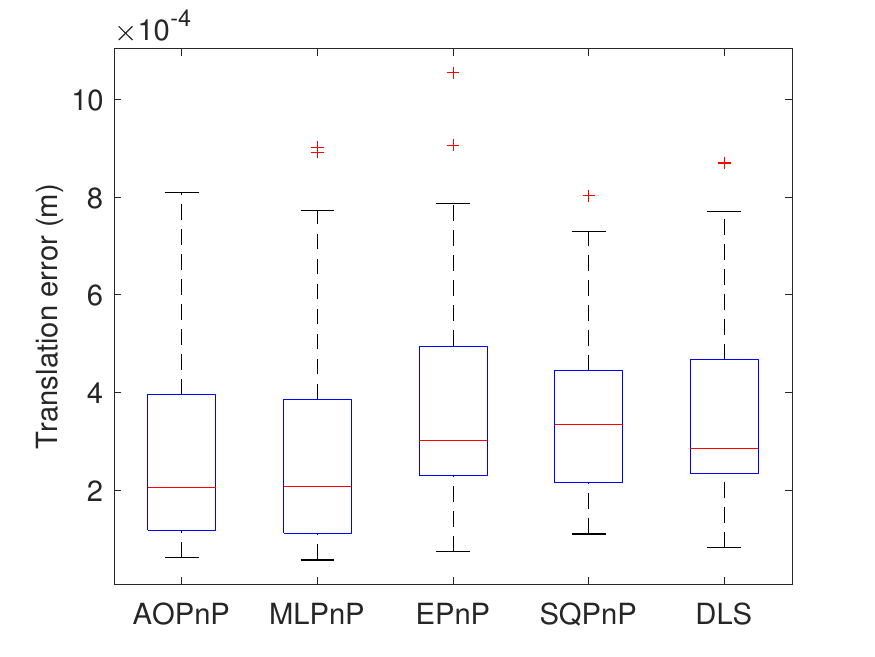}
		\caption{Relief ($\bf t$)}
		\label{ETH3D_box_plot_t_relief}
	\end{subfigure}
  \begin{subfigure}[b]{0.24\textwidth}
		\centering
		\includegraphics[width=\textwidth]{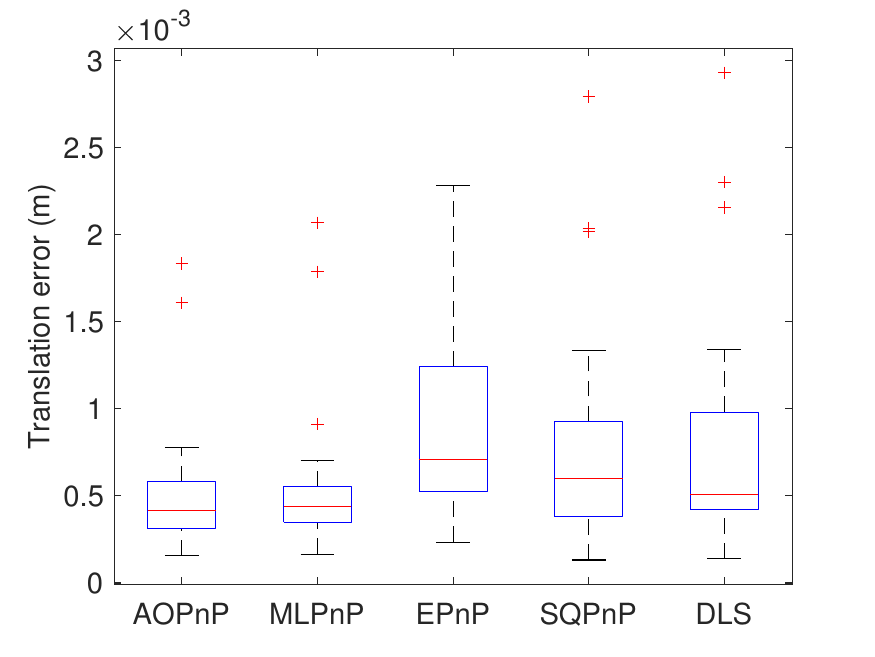}
		\caption{Terrace ($\bf t$)}
		\label{ETH3D_box_plot_t_terrace}
	\end{subfigure}
	\caption{Error distribution of the scenes of delivery area, facade, relief, and terrace.}
	\label{ETH3D_box_plot}
\end{figure*}

To test the relationship between the MSEs and point number $n$, we randomly select four images that have abundant point correspondences. In each trial, we randomly select a certain number of points to infer the camera pose, and the MSEs are calculated by $100$ Monte Carlo trials. The result is plotted in Figure~\ref{ETH3D_consistency_plot}. We see that the MSE of our \texttt{AOPnP} solver consistently declines as $n$ increases. Moreover, it significantly outperforms the \texttt{EPnP}, \texttt{DLS}, and \texttt{SQPnP} estimators when $n$ is relatively large. Although the \texttt{MLPnP} estimator has a similar MSE to \texttt{AOPnP}, noting that it has high time complexity, our algorithm performs best by taking both accuracy and efficiency into account.

\begin{figure*}[!htbp]
	\centering
	\begin{subfigure}[b]{0.24\textwidth}
		\centering
		\includegraphics[width=\textwidth]{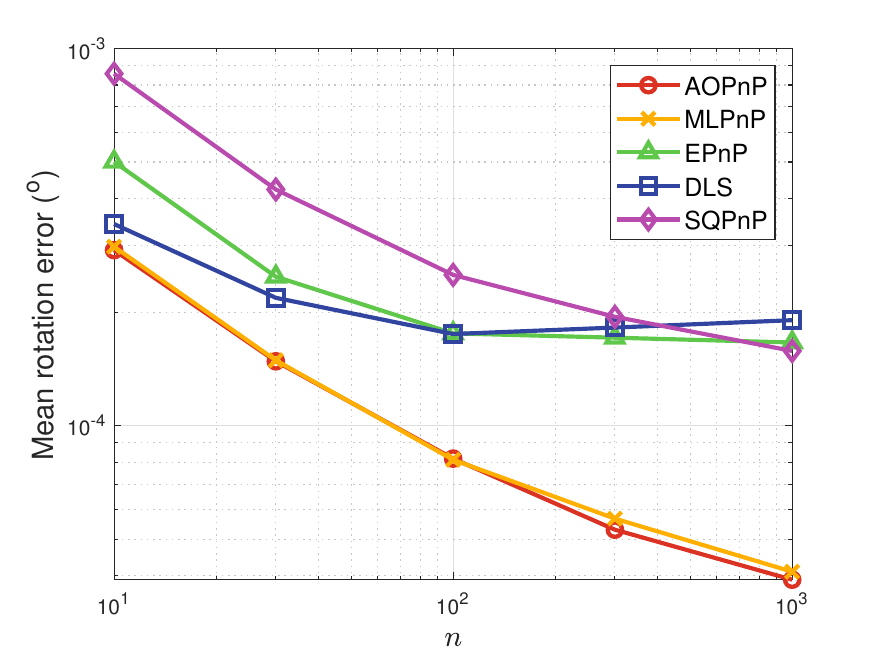}
		\caption{Facade 14 ($\bf R$)}
		\label{ETH3D_consistency_plot_R_facade14}
	\end{subfigure}
  \begin{subfigure}[b]{0.24\textwidth}
		\centering
		\includegraphics[width=\textwidth]{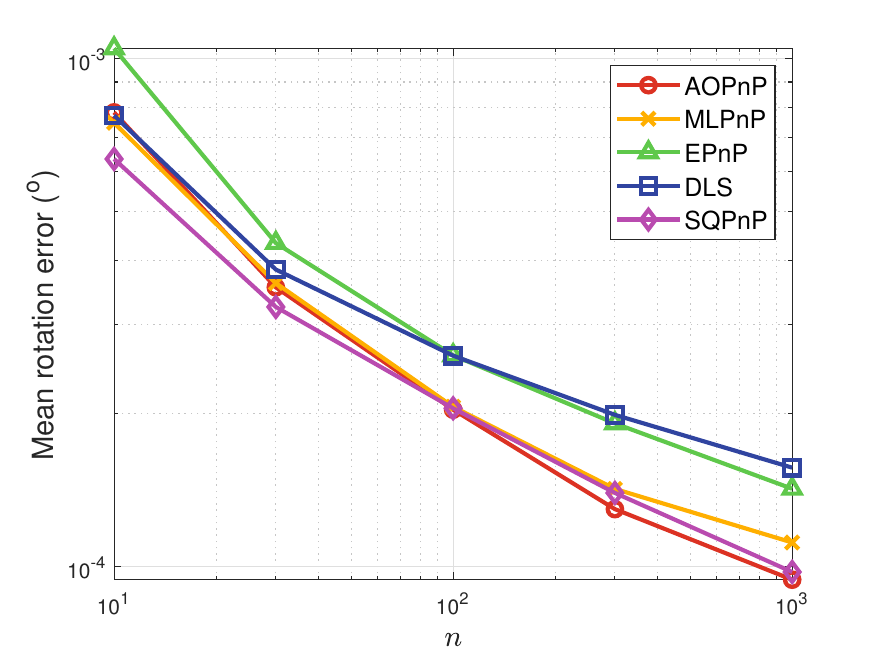}
		\caption{Kicker 26 ($\bf R$)}
		\label{ETH3D_consistency_plot_R_kicker26}
	\end{subfigure}
 \begin{subfigure}[b]{0.24\textwidth}
		\centering
		\includegraphics[width=\textwidth]{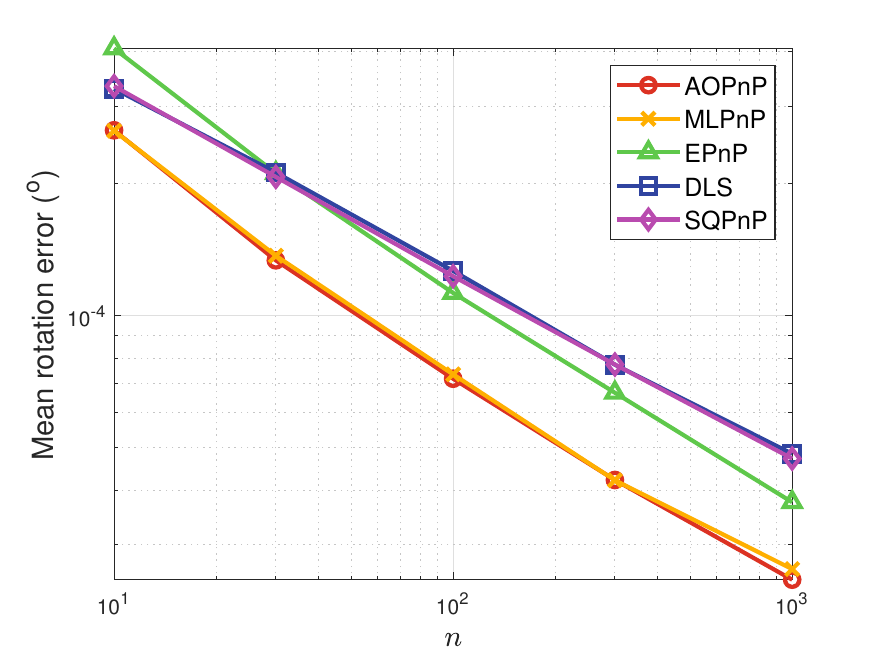}
		\caption{Delivery area 10 ($\bf R$)}
		\label{ETH3D_consistency_plot_R_delivery10}
	\end{subfigure}
  \begin{subfigure}[b]{0.24\textwidth}
		\centering
		\includegraphics[width=\textwidth]{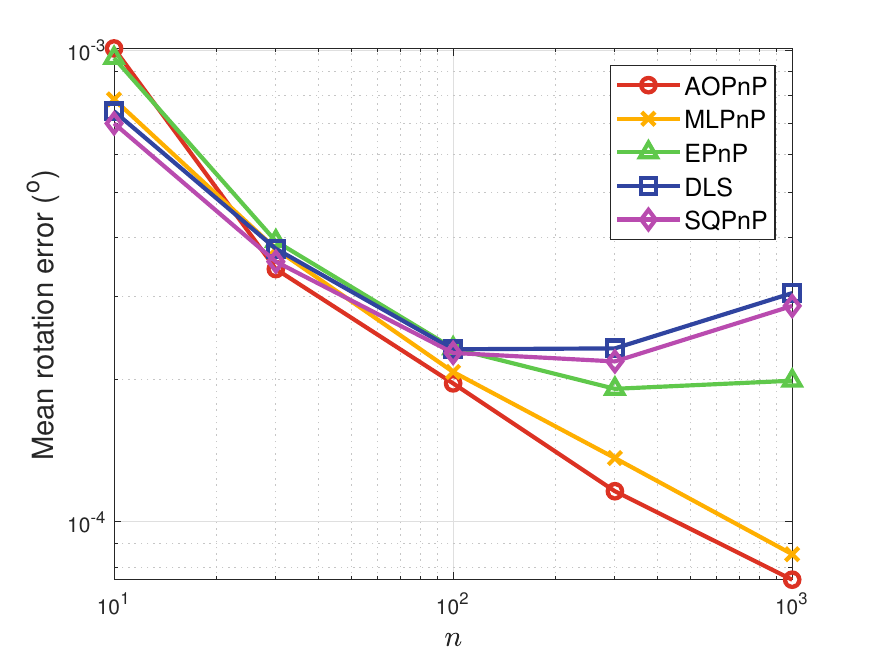}
		\caption{Terrains 41 ($\bf R$)}
		\label{ETH3D_consistency_plot_R_terrains41}
	\end{subfigure}

 \begin{subfigure}[b]{0.24\textwidth}
		\centering
		\includegraphics[width=\textwidth]{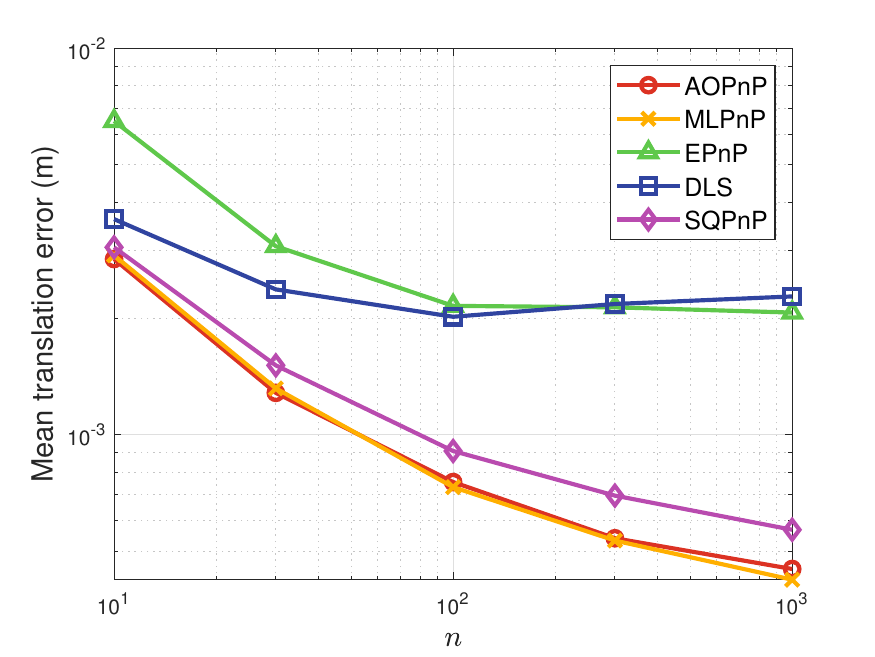}
		\caption{Facade 14 ($\bf t$)}
		\label{ETH3D_consistency_plot_t_facade14}
	\end{subfigure}
  \begin{subfigure}[b]{0.24\textwidth}
		\centering
		\includegraphics[width=\textwidth]{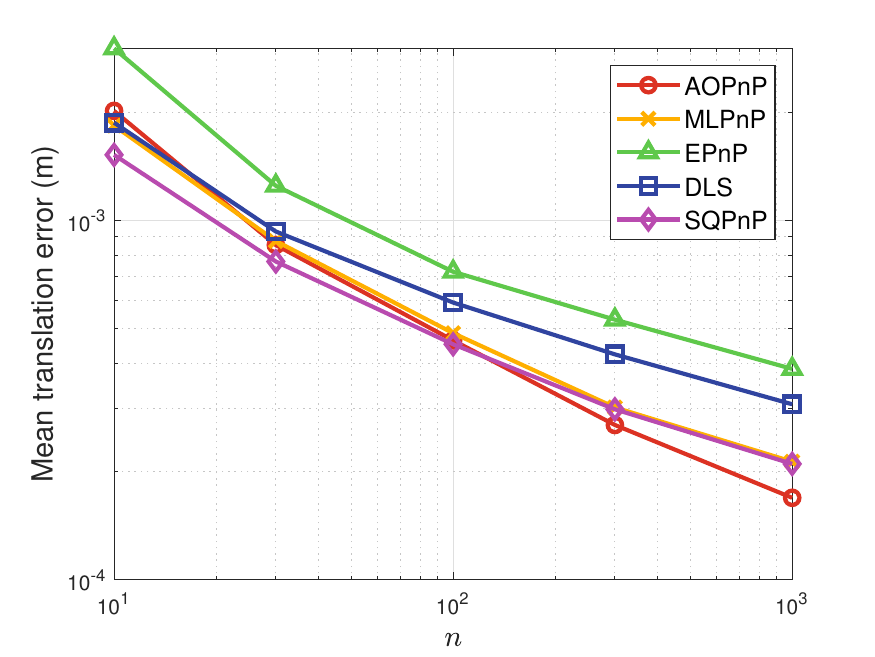}
		\caption{Kicker 26 ($\bf t$)}
		\label{ETH3D_consistency_plot_t_kicker26}
	\end{subfigure}
 \begin{subfigure}[b]{0.24\textwidth}
		\centering
		\includegraphics[width=\textwidth]{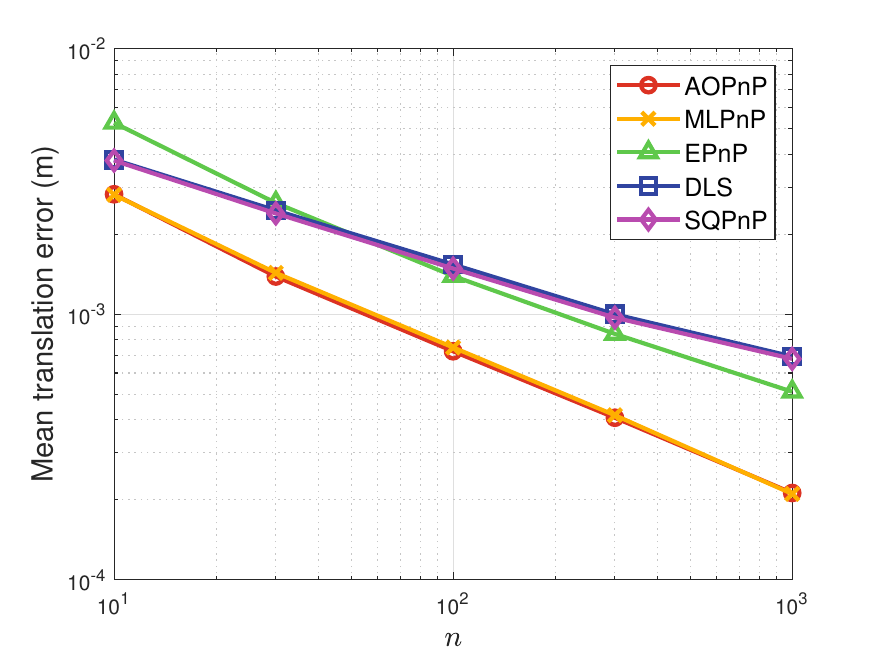}
		\caption{Delivery area 10 ($\bf t$)}
		\label{ETH3D_consistency_plot_t_delivery10}
	\end{subfigure}
  \begin{subfigure}[b]{0.24\textwidth}
		\centering
		\includegraphics[width=\textwidth]{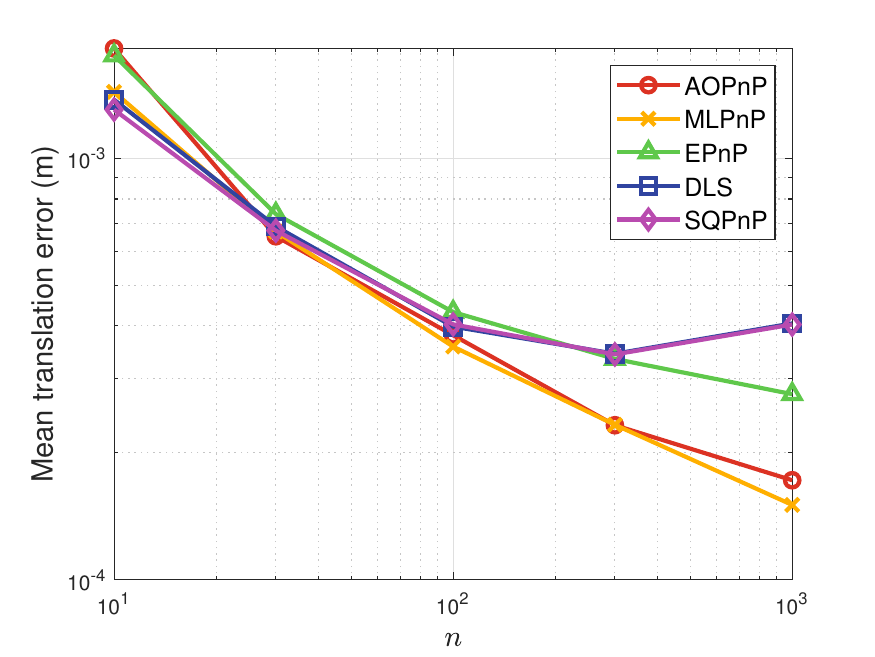}
		\caption{Terrains 41 ($\bf t$)}
		\label{ETH3D_consistency_plot_t_terrains41}
	\end{subfigure}
	\caption{Consistency test of four images with abundant point features. ``Facade 14'' means the fourteenth image in the facade scene. }
	\label{ETH3D_consistency_plot}
\end{figure*}

Second, we compare PnL estimators with the VGG dataset~\cite{werner2002new}. 
Figure~\ref{VGG_bar_plot} presents the mean estimation errors of PnL estimators in the tested scenes. Our estimator \texttt{AOPnL} achieves the smallest errors except for the Model House scene. Moreover, the magnitude of superiority is significant and sometimes can reach $3$ orders. The reason why \texttt{AOPnL} has large average errors in the Model House scene is that there are two images with relatively few lines and \texttt{AOPnL} produces abnormal pose estimates. Since each scene only contains several images, we do not plot the corresponding error distribution.

\begin{figure*}[!htbp]
	\centering
	\begin{subfigure}[b]{0.49\textwidth}
		\centering
		\includegraphics[width=\textwidth]{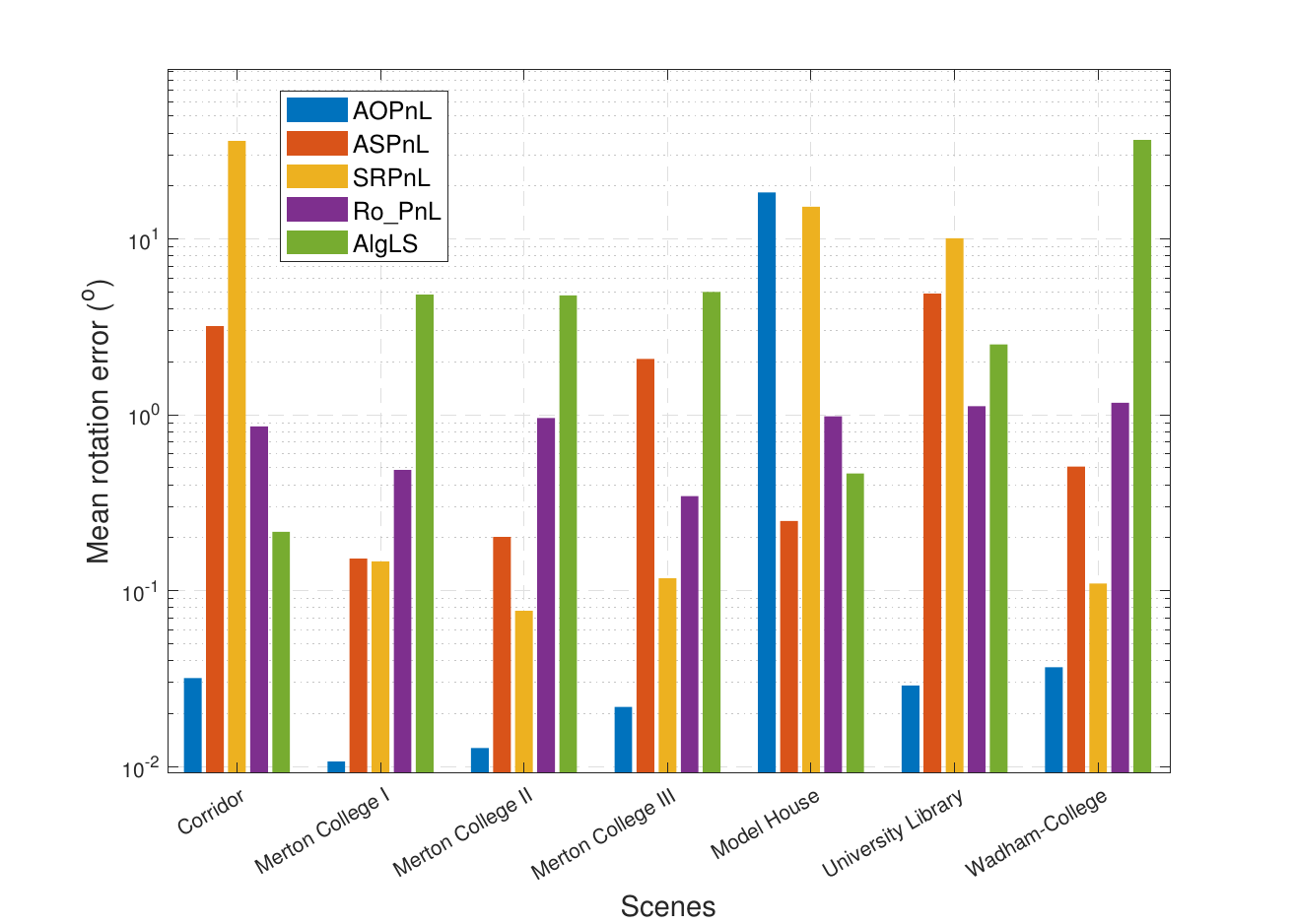}
		\caption{Rotation errors}
		\label{VGG_bar_plot_R}
	\end{subfigure}
  \begin{subfigure}[b]{0.49\textwidth}
		\centering
		\includegraphics[width=\textwidth]{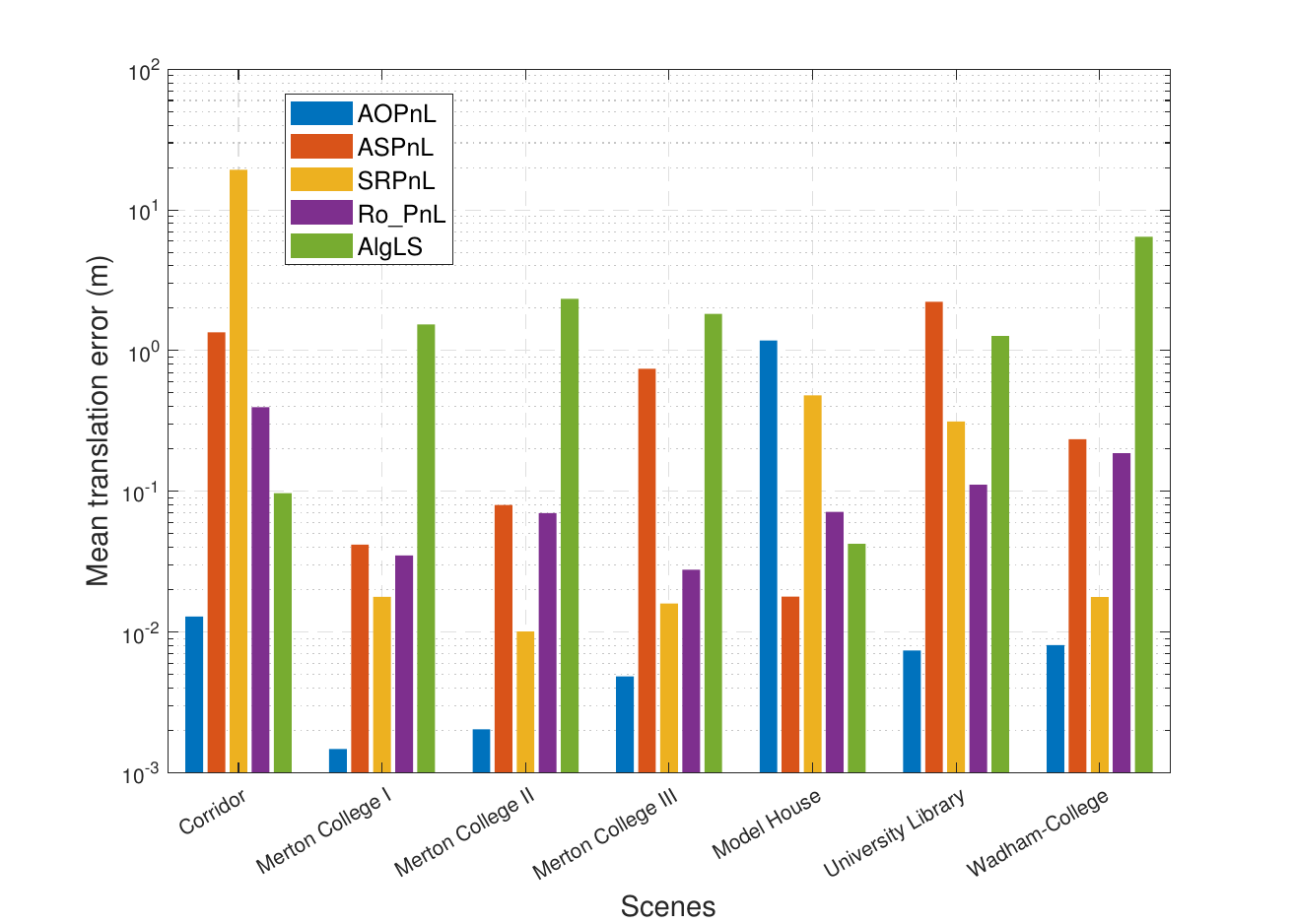}
		\caption{Translation errors}
		\label{VGG_bar_plot_t}
	\end{subfigure}
	\caption{Mean PnL pose estimation error comparison in $7$ scenes of the VGG dataset.}
	\label{VGG_bar_plot}
\end{figure*}

We select four images and reproject 3D lines onto the images using estimated poses to show the estimation precision qualitatively. As shown in Figure~\ref{VGG_line_reprojection}, with the estimated poses by our estimator, the projected lines coincide with the given 2D lines well, whereas the other estimators generate deviated lines, especially the \texttt{Ro\_PnL} and \texttt{ASPnL} methods with lines in the sky or on the grass. The result of the \texttt{AlgLS} estimator is omitted because its estimation errors are too large and the lines cannot be displayed in the images. 

\begin{figure*}[!htbp]
	\centering
	\begin{subfigure}[b]{\dimexpr0.23\textwidth+25pt\relax}
		\centering
  \makebox[25pt]{\raisebox{40pt}{\rotatebox[origin=c]{90}{AOPnL}}}%
		\includegraphics[width=\dimexpr\linewidth-25pt\relax]{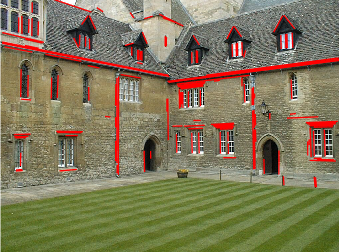}
  \makebox[25pt]{\raisebox{40pt}{\rotatebox[origin=c]{90}{Ro\_PnL}}}%
  \includegraphics[width=\dimexpr\linewidth-25pt\relax]{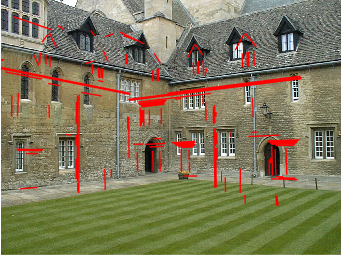}
\makebox[25pt]{\raisebox{40pt}{\rotatebox[origin=c]{90}{ASPnL}}}%
\includegraphics[width=\dimexpr\linewidth-25pt\relax]{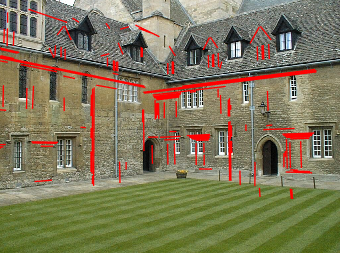}
\makebox[25pt]{\raisebox{40pt}{\rotatebox[origin=c]{90}{SRPnL}}}%
\includegraphics[width=\dimexpr\linewidth-25pt\relax]{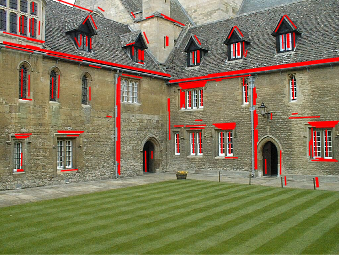}
\caption*{Merton-College-I}
	\end{subfigure}\hfill
  \begin{subfigure}[b]{0.23\textwidth}
		\centering
 \includegraphics[width=\textwidth]{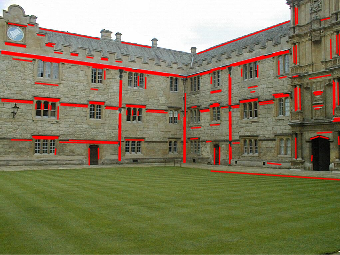}	
  \includegraphics[width=\textwidth]{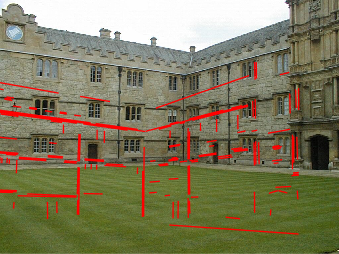}
\includegraphics[width=\textwidth]{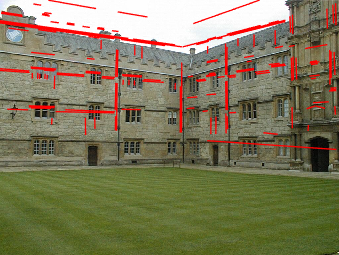}
\includegraphics[width=\textwidth]{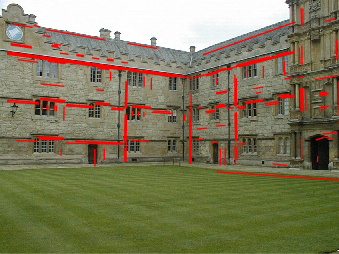}
\caption*{Merton-College-II}
	\end{subfigure}\hfill
 \begin{subfigure}[b]{0.23\textwidth}
		\centering
\includegraphics[width=\textwidth]{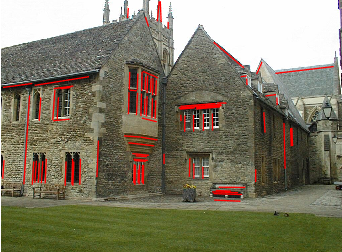}		
\includegraphics[width=\textwidth]{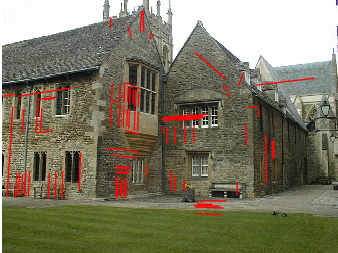}
\includegraphics[width=\textwidth]{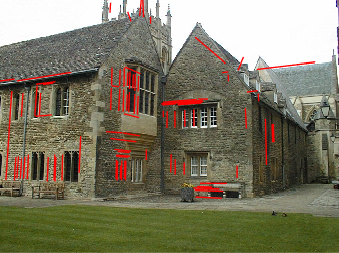}
\includegraphics[width=\textwidth]{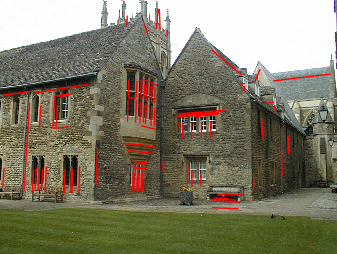}
	\caption*{Merton-College-III}
 \end{subfigure}\hfill
  \begin{subfigure}[b]{0.23\textwidth}
		\centering
\includegraphics[width=\textwidth]{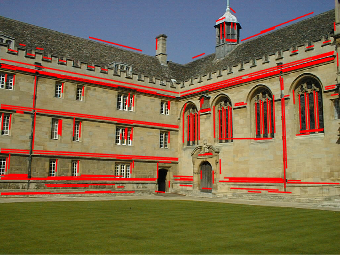}		
\includegraphics[width=\textwidth]{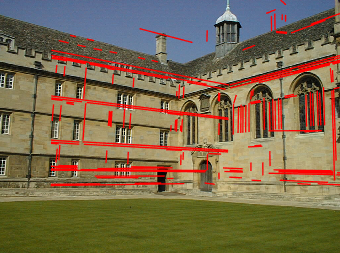}
\includegraphics[width=\textwidth]{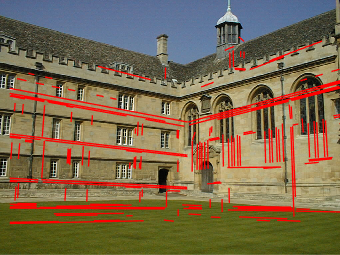}
\includegraphics[width=\textwidth]{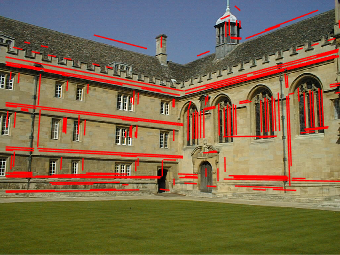}
	\caption*{Wadham-College}
 \end{subfigure}\hfill
	\caption{Line reprojection onto images with estimated poses.}
	\label{VGG_line_reprojection}
\end{figure*}

We use the four images in Figure~\ref{VGG_line_reprojection} to test the relationship between the MSEs and line number $m$. In each trial, we randomly select a certain number of lines to infer the camera pose, and the MSEs are calculated by $100$ Monte Carlo trials. The result is plotted in Figure~\ref{VGG_consistency_plot}. The MSE of our \texttt{AOPnL} solver declines as $m$ increases. In addition, it is much smaller than that of the other methods. The \texttt{Ro\_PnL} and \texttt{SRPnL} estimators seem not so stable and the \texttt{AlgLS} estimator has the largest MSE.
 
\begin{figure*}[!htbp]
	\centering
	\begin{subfigure}[b]{0.24\textwidth}
		\centering
		\includegraphics[width=\textwidth]{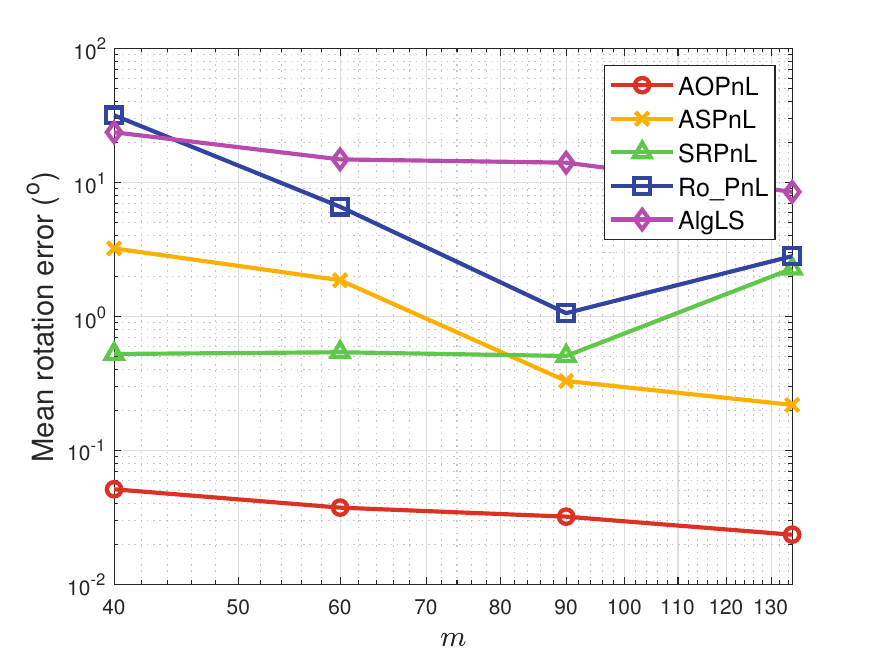}
		\caption{Merton I ($\bf R$)}
		\label{VGG_consistency_plot_R_Merton1}
	\end{subfigure}
  \begin{subfigure}[b]{0.24\textwidth}
		\centering
		\includegraphics[width=\textwidth]{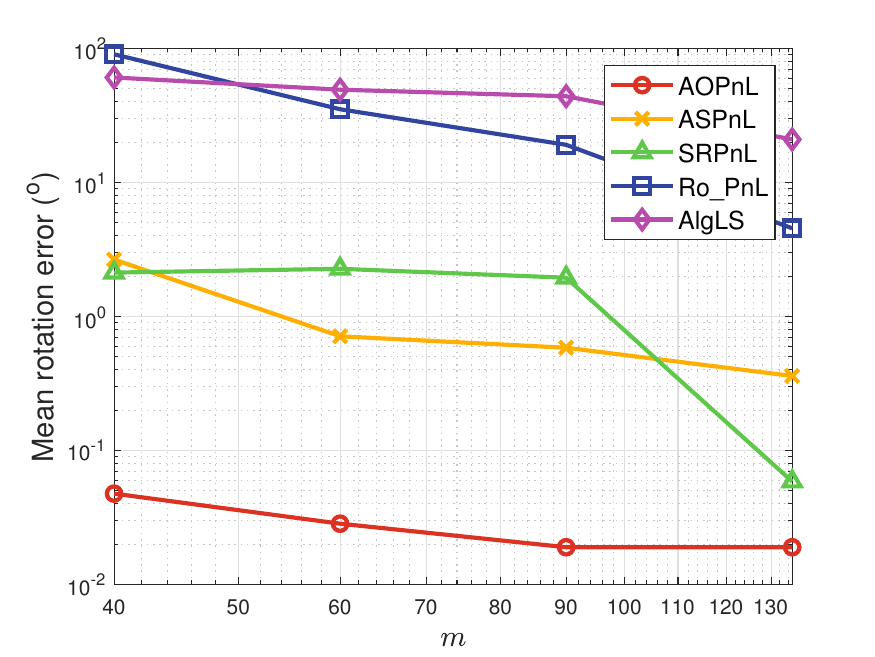}
		\caption{Merton II ($\bf R$)}
		\label{VGG_consistency_plot_R_Merton2}
	\end{subfigure}
 \begin{subfigure}[b]{0.24\textwidth}
		\centering
		\includegraphics[width=\textwidth]{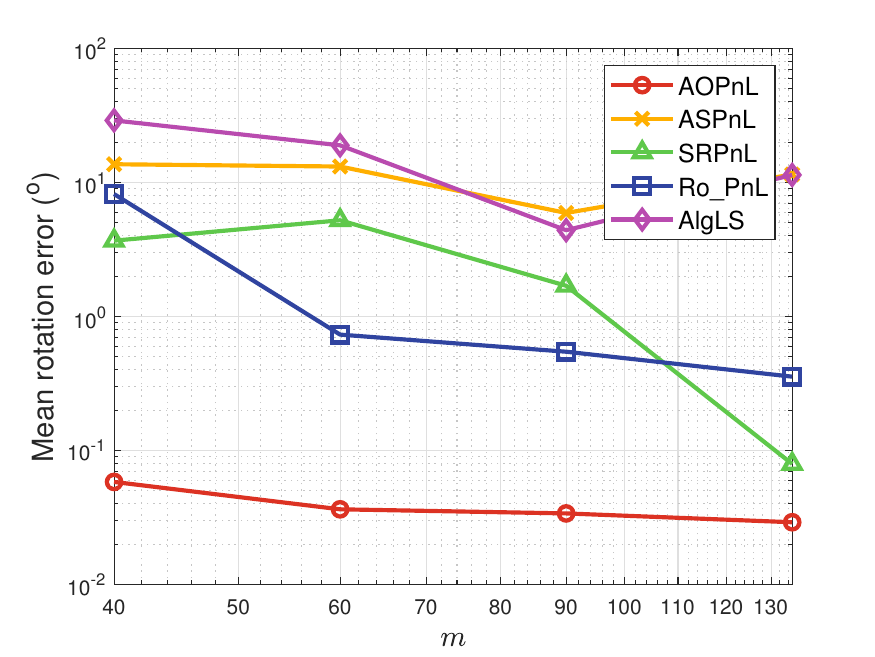}
		\caption{Merton III ($\bf R$)}
		\label{VGG_consistency_plot_R_Merton3}
	\end{subfigure}
  \begin{subfigure}[b]{0.24\textwidth}
		\centering
		\includegraphics[width=\textwidth]{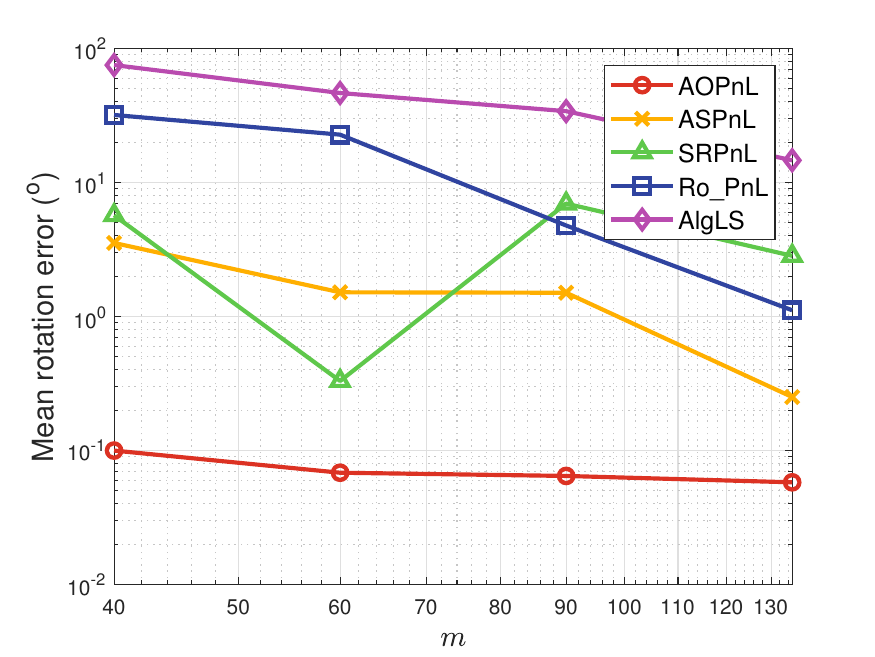}
		\caption{Wadham ($\bf R$)}
		\label{VGG_consistency_plot_R_Wadham}
	\end{subfigure}

 \begin{subfigure}[b]{0.24\textwidth}
		\centering
		\includegraphics[width=\textwidth]{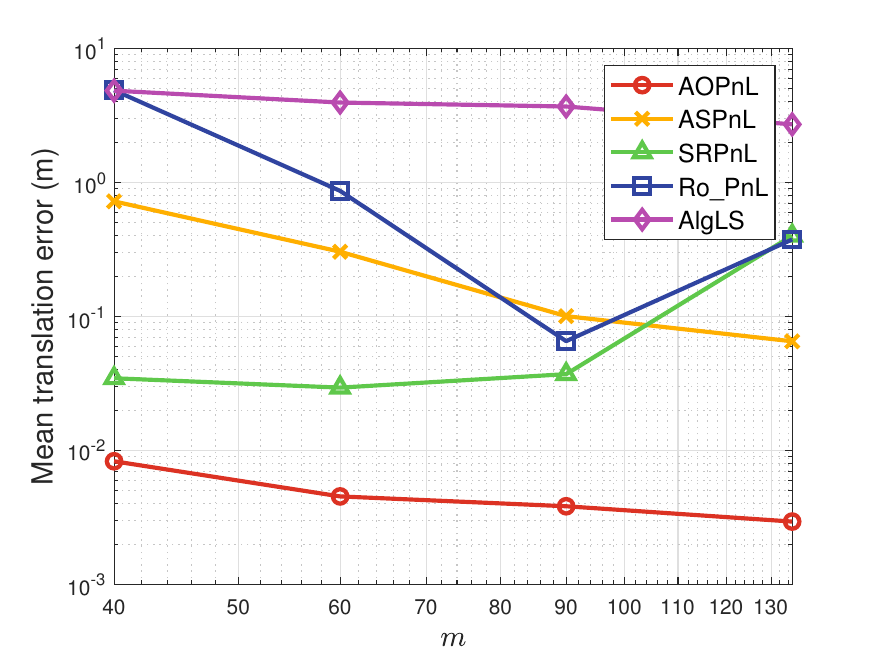}
		\caption{Merton I ($\bf t$)}
		\label{VGG_consistency_plot_t_Merton1}
	\end{subfigure}
  \begin{subfigure}[b]{0.24\textwidth}
		\centering
		\includegraphics[width=\textwidth]{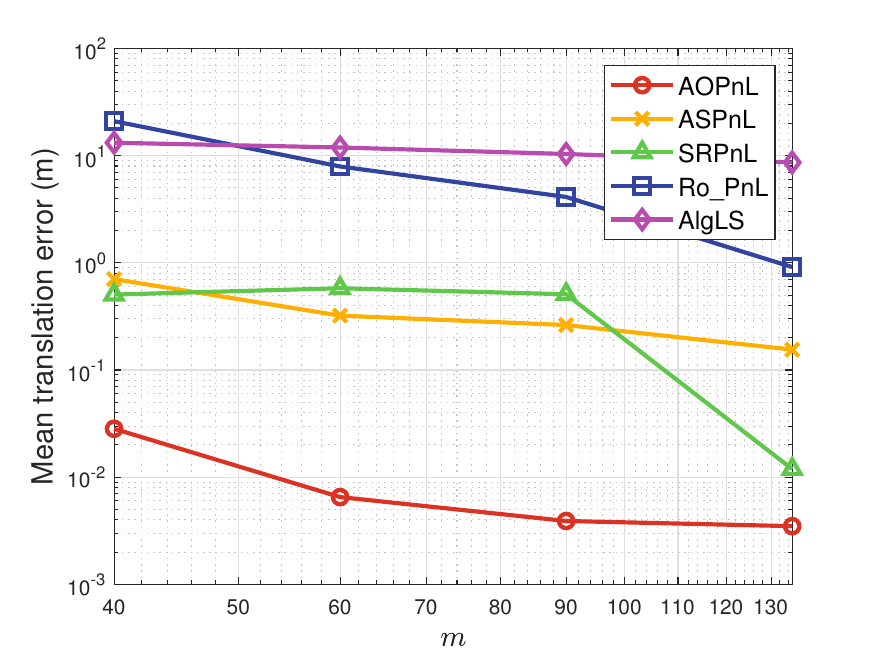}
		\caption{Merton II ($\bf t$)}
		\label{VGG_consistency_plot_t_Merton2}
	\end{subfigure}
 \begin{subfigure}[b]{0.24\textwidth}
		\centering
		\includegraphics[width=\textwidth]{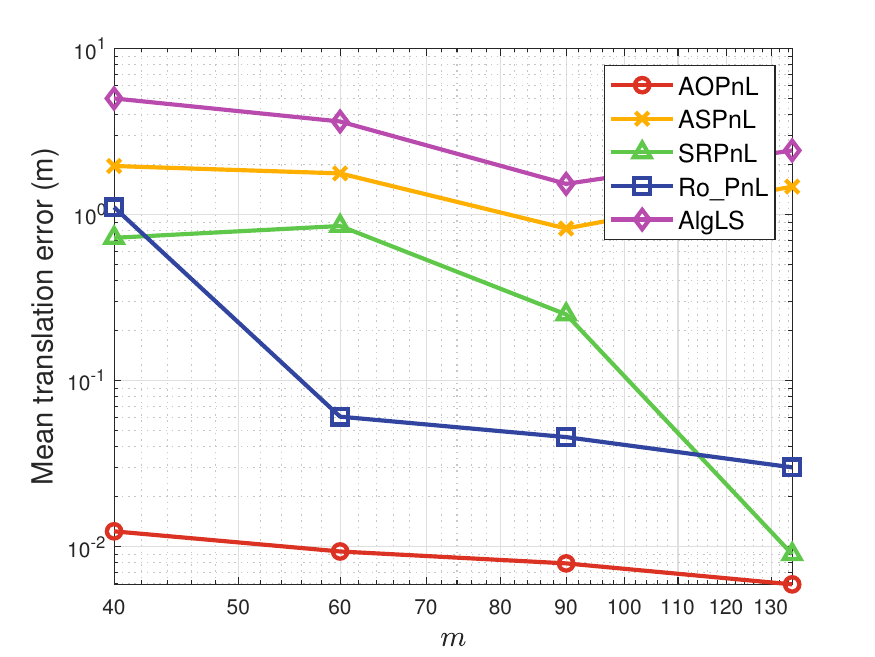}
		\caption{Merton III ($\bf t$)}
		\label{VGG_consistency_plot_t_Merton3}
	\end{subfigure}
  \begin{subfigure}[b]{0.24\textwidth}
		\centering
		\includegraphics[width=\textwidth]{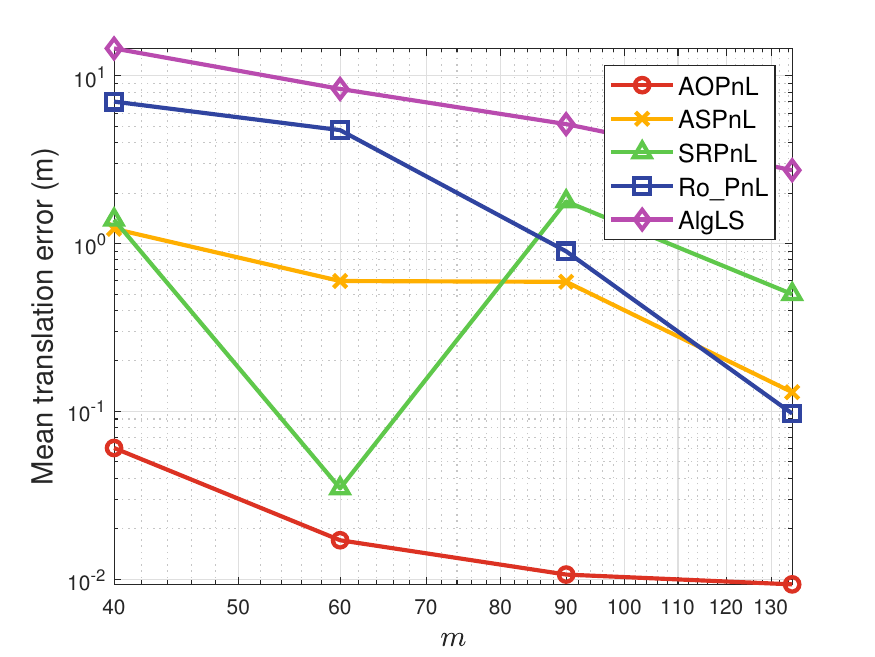}
		\caption{Wadham ($\bf t$)}
		\label{VGG_consistency_plot_t_Wadham}
	\end{subfigure}
	\caption{Consistency test of four images with abundant line features.}
	\label{VGG_consistency_plot}
\end{figure*}

Finally, we compare PnPL estimators with the VGG dataset. 
Figure~\ref{VGG_bar_plot_PnPL} presents the mean estimation errors of PnPL estimators in the tested scenes. In practice, we find that the noise variances of points and lines can be quite different, hence we estimate them separately. Note that the \texttt{EPnPLU} and \texttt{DLSU} estimators need extra prior information about uncertainties of 2D projections and the average scene depth. We use the true scene depth and our estimated noise variance as their inputs. We see that the proposed \texttt{AOPnL} estimator performs best, especially for the estimation of orientation. Even though the \texttt{EPnPLU} and \texttt{DLSU} methods utilize the true depth scene, our estimator has better accuracy.

\begin{figure*}[!htbp]
	\centering
	\begin{subfigure}[b]{0.49\textwidth}
		\centering
		\includegraphics[width=\textwidth]{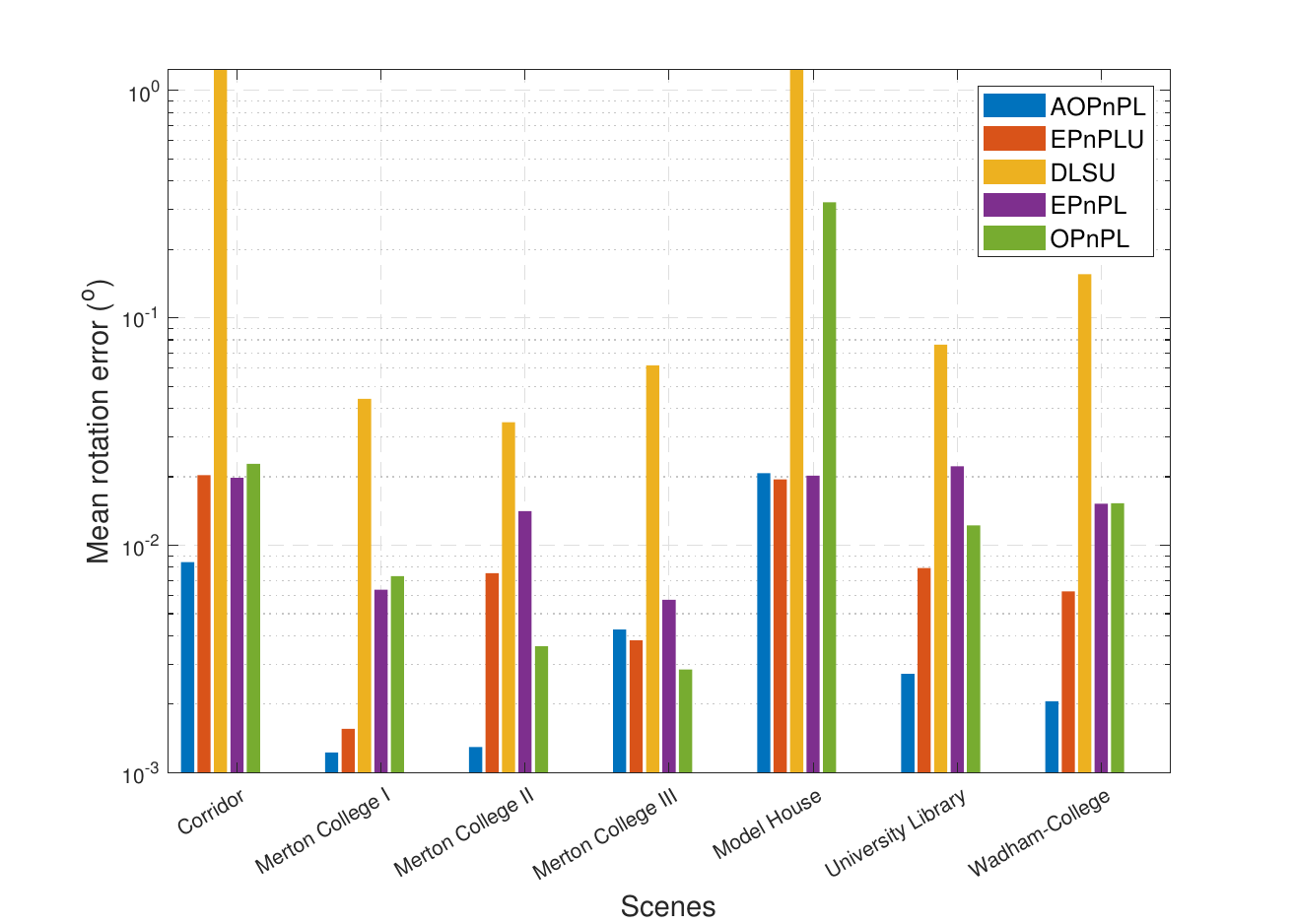}
		\caption{Rotation errors}
		\label{VGG_bar_plot_R_PnPL}
	\end{subfigure}
  \begin{subfigure}[b]{0.49\textwidth}
		\centering
		\includegraphics[width=\textwidth]{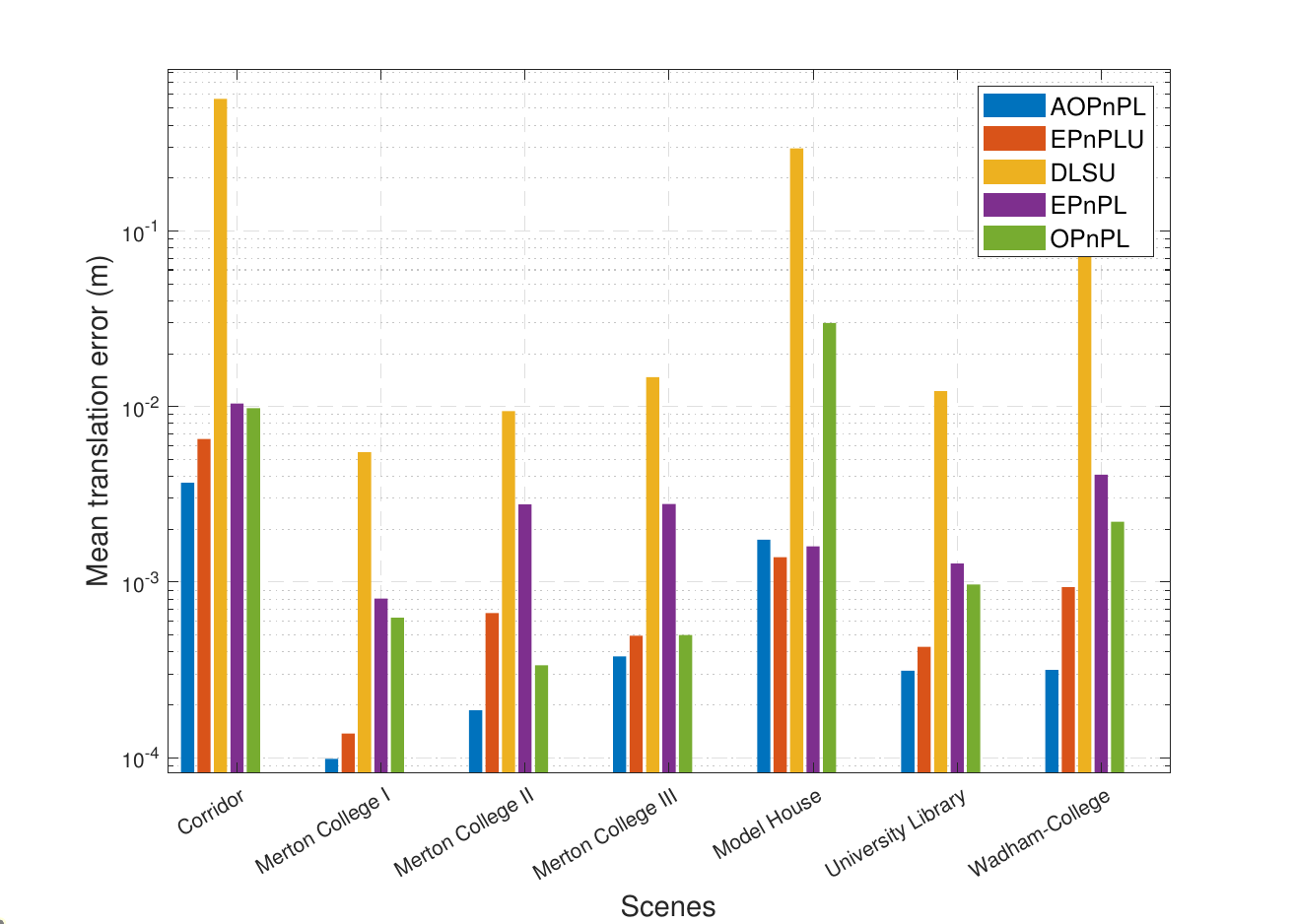}
		\caption{Translation errors}
		\label{VGG_bar_plot_t_PnPL}
	\end{subfigure}
	\caption{Mean PnPL pose estimation error comparison in $7$ scenes of the VGG dataset.}
	\label{VGG_bar_plot_PnPL}
\end{figure*}

\subsection{Visual odometry with real image sequences} \label{visual_odometry}
In the last subsection, we have tested the static camera pose estimation which corresponds to robot localization in a known 3D map. A more common scenario is that a robot explores and navigates in an unknown environment, which involves SLAM or odometry. In this subsection, we embed PnP estimators into a stereo visual odometry pipeline and compare the accuracy and efficiency using the EuRoC MAV dataset~\cite{euroc}. 
The odometry mainly includes three components: feature extraction and tracking, triangulation, and PnP solver. 
Harris corners are extracted on the left image first, and then their correspondences on the right image are found by Lucas-Kanade (L-K) optical flow. To reject outliers, inverse L-K optical flow is employed. Feature tracking is only performed on the left camera, which is also done by L-K optical flow. Fundamental-matrix-based RANSAC is adopted to improve the quality of point correspondences. Once the correspondences are established, the depth of feature points can be obtained by triangulation, using the extrinsics between stereo cameras. The points tracked on the left camera can be triangulated based on the relative pose given by the previous estimation. With the triangulated 3D points and their 2D correspondences, we utilize iterative-PnP-based RANSAC to filter outliers. Finally, the inliers will be fed into a PnP algorithm to obtain the frame-to-frame pose estimation. The PnP solver would be replaced by different algorithms, while the other components remain unchanged.

We select four sequences labeled ``easy'' in the EuRoC dataset since they exhibit less dynamic motion, allowing the extraction of high-quality feature points and their correspondences. 
% The re-projection error and confidence threshold of ransac-F and ransac-PnP are set as 0.7(pixel)-0.99 and 0.05(m)-0.995 respectively. The threshold for rejecting false optical flow matches is set to 0.5 pixels. 
The maximum number of points extracted in one frame is limited to 1000, and the distance between every two points should be greater than 15 pixels, which can promote uniform distribution of features in the image. The extrinsics between stereo cameras are given by dataset defaults. All algorithms are run on an Intel-13500H mobile CPU upon WSL2 (Windows Subsystem Linux).

Figure~\ref{real_time_APE} plots the real-time absolute pose error (APE) at each instant, and Table~\ref{table_APE} gives the APE RMSE where ``/'' represents that the corresponding PnP algorithm fails to generate a reasonable odometry. Other than the MH01 sequence, the proposed \texttt{AOPnP} algorithm consistently has the smallest APE RMSE, and the superiority is significant for the VR201 sequence. The APE RMSE of \texttt{AOPnP} is slightly inferior to the \texttt{MLPnP} estimator in the MH01 sequence. 
It is noteworthy that the \texttt{DLS} and \texttt{EPnP} algorithms both fail in all sequences. This is because their pose estimation precision is relatively low and the accumulated drift rapidly increases to an abnormal value. The instantaneous APE figure supports the RMSE result.
The estimated trajectory is plotted in Figure~\ref{trajectory_estimation}. Overall, the trajectory based on \texttt{AOPnP} coincides with the ground truth best, especially in the VR201 sequence. The \texttt{MLPnP} algorithm has a comparable performance with the proposed one in the other three sequences, which is better than the \texttt{SQPnP} algorithm. 

\begin{figure*}[!htbp]
	\centering
	\begin{subfigure}[b]{0.42\textwidth}
		\centering
		\includegraphics[width=\textwidth]{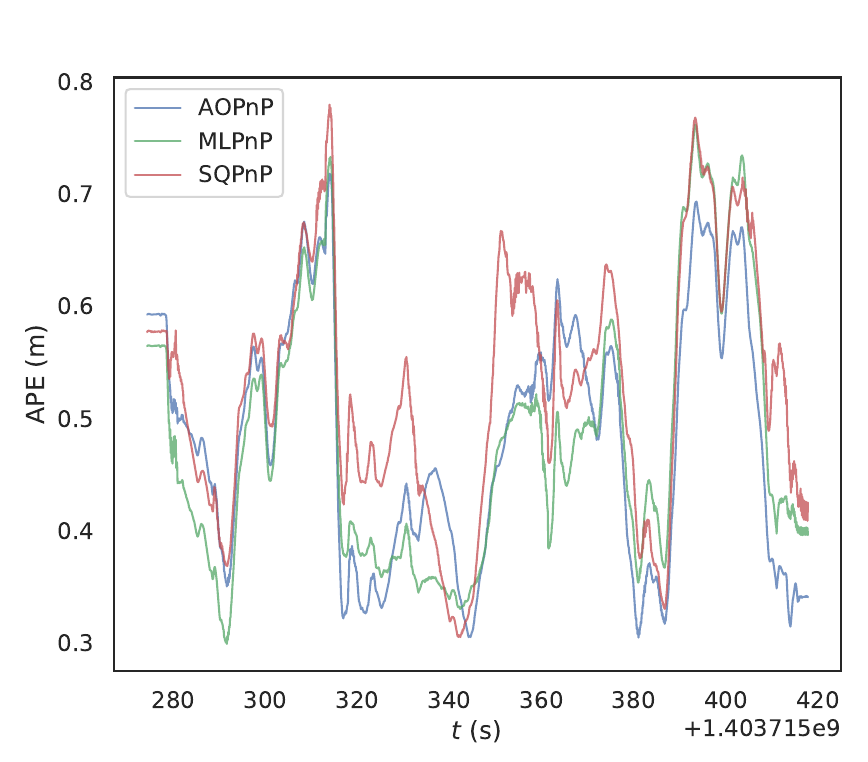}
  \caption{VR101}
	\end{subfigure} \hspace{5mm}
	\begin{subfigure}[b]{0.42\textwidth}
		\centering
		\includegraphics[width=\textwidth]{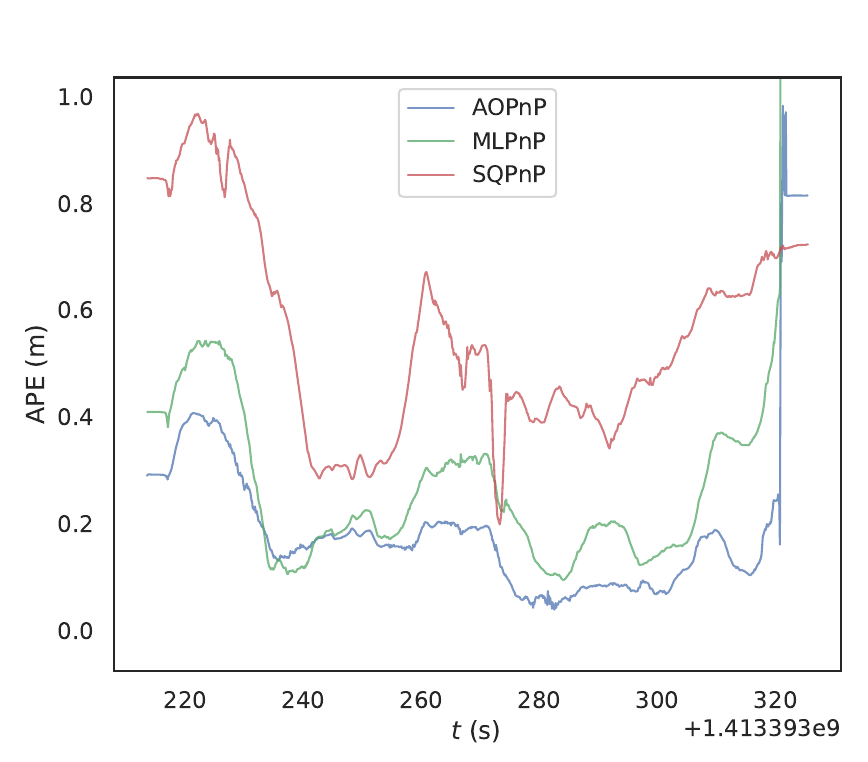}
  \caption{VR201}
	\end{subfigure}

 	\begin{subfigure}[b]{0.42\textwidth}
		\centering
		\includegraphics[width=\textwidth]{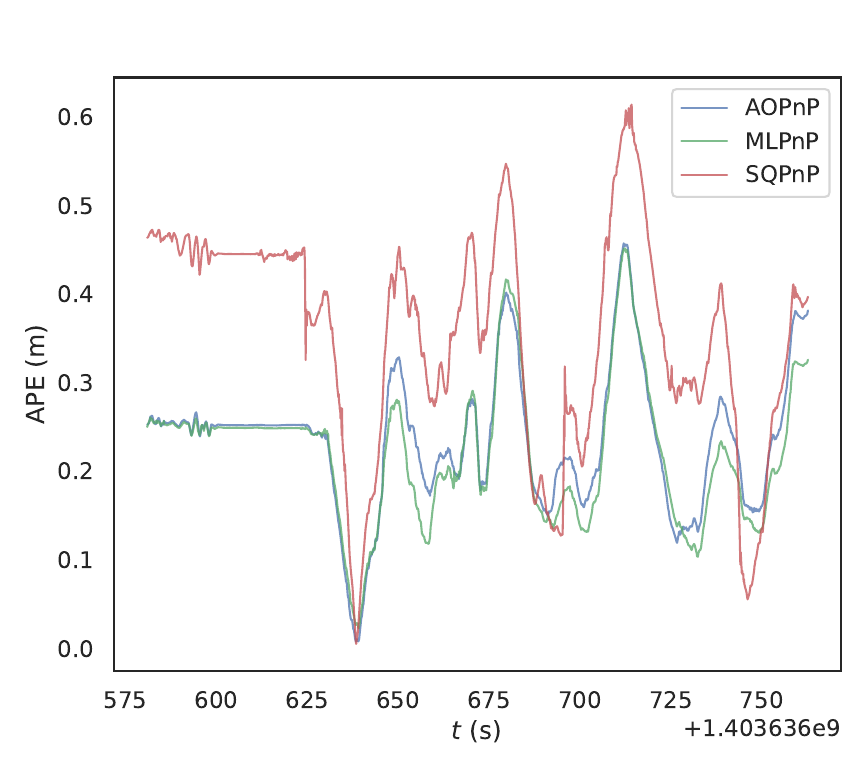}
  \caption{MH01}
	\end{subfigure} \hspace{5mm}
	\begin{subfigure}[b]{0.42\textwidth}
		\centering
		\includegraphics[width=\textwidth]{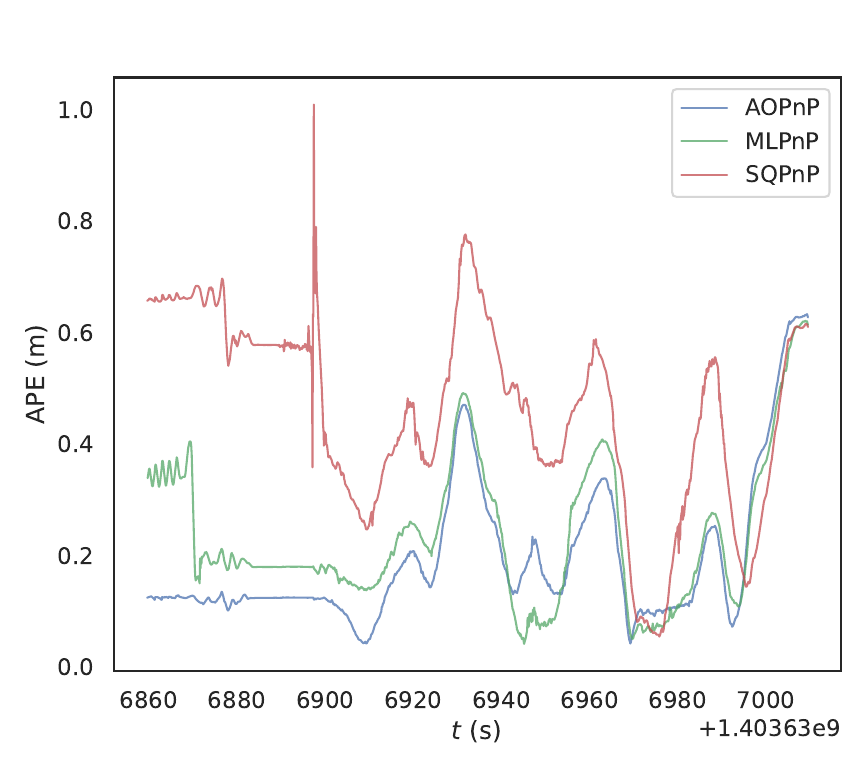}
  \caption{MH02}
	\end{subfigure}
	\caption{Real-time APE in stereo visual odometry evaluation.}
	\label{real_time_APE}
\end{figure*}

\begin{table}[!htbp]
	\centering
	\caption{APE RMSE w.r.t. the translation part (unit: m) in stereo visual odometry evaluation.}
	\begin{tabular}{cccccc}
		\hline \hline
		&   AOPnP &   DLS        & EPnP       & MLPnP & SQPnP \\ \hline
		
  VR101           & \textbf{0.49491}  &      /                            &             /                          &  0.49492   &    0.5393    \\ 
  VR201           &  \textbf{0.2535} &      /                            &         /                              &  0.5226   &    0.5863    \\
  MH01           & 0.2471  &       /                           &      /                                 &   \textbf{0.2349}  &    0.3745    \\
  MH02           &  \textbf{0.2363} &      /                            &       /                                &   0.2692  &    0.4903    \\
  \hline\hline
	\end{tabular}\label{table_APE}
\end{table}

\begin{figure*}[!htbp]
	\centering
	\begin{subfigure}[b]{0.42\textwidth}
		\centering
		\includegraphics[width=\textwidth]{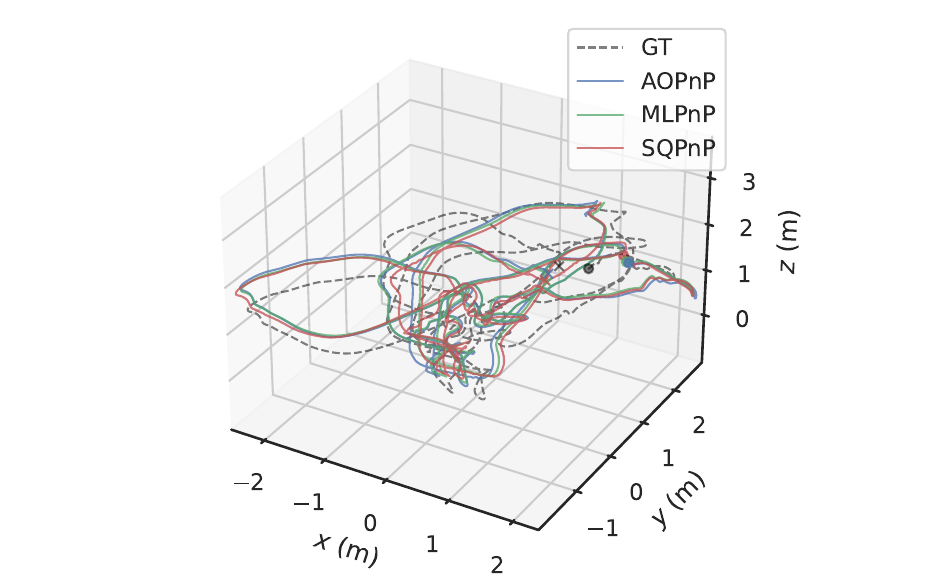}
  \caption{VR101}
	\end{subfigure} \hspace{5mm}
	\begin{subfigure}[b]{0.42\textwidth}
		\centering
		\includegraphics[width=\textwidth]{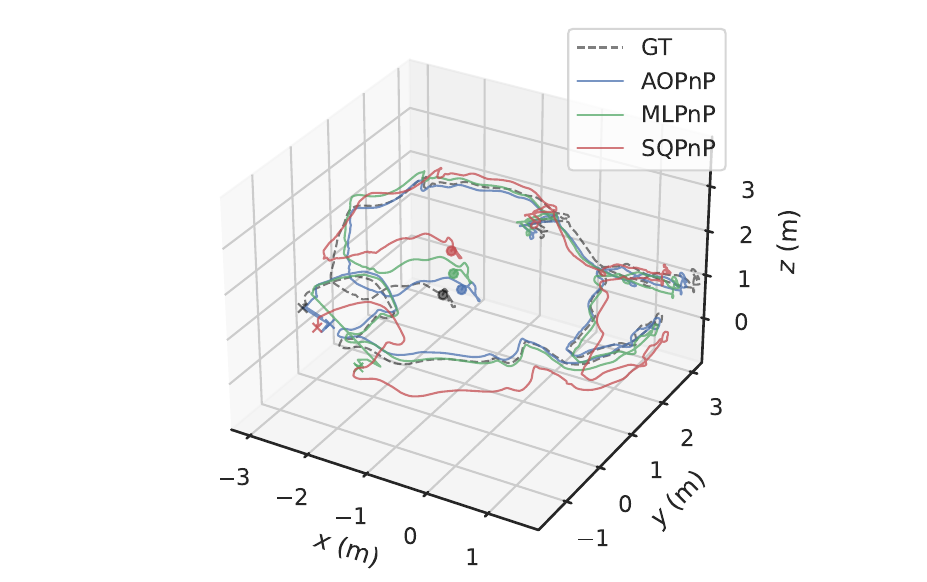}
  \caption{VR201}
	\end{subfigure}

 	\begin{subfigure}[b]{0.42\textwidth}
		\centering
		\includegraphics[width=\textwidth]{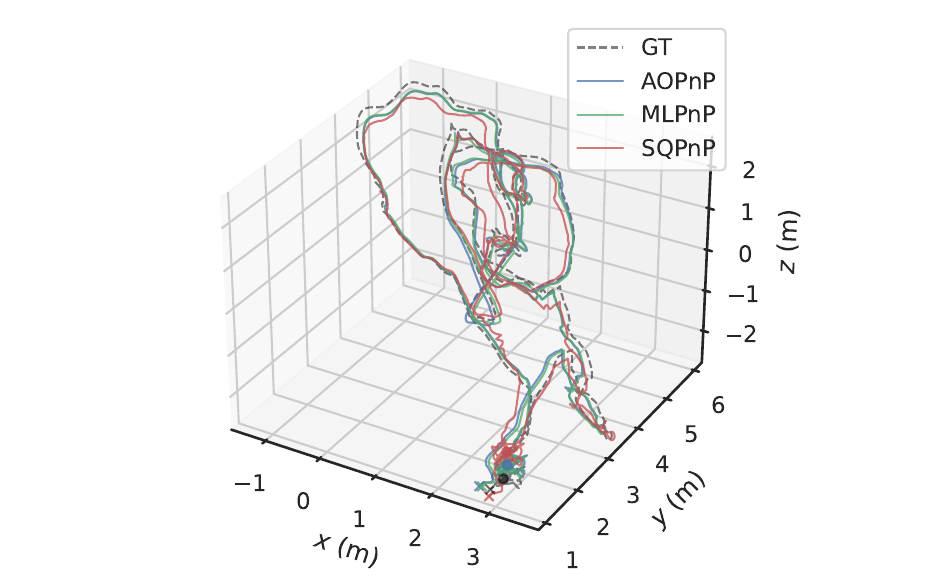}
  \caption{MH01}
	\end{subfigure} \hspace{5mm}
	\begin{subfigure}[b]{0.42\textwidth}
		\centering
		\includegraphics[width=\textwidth]{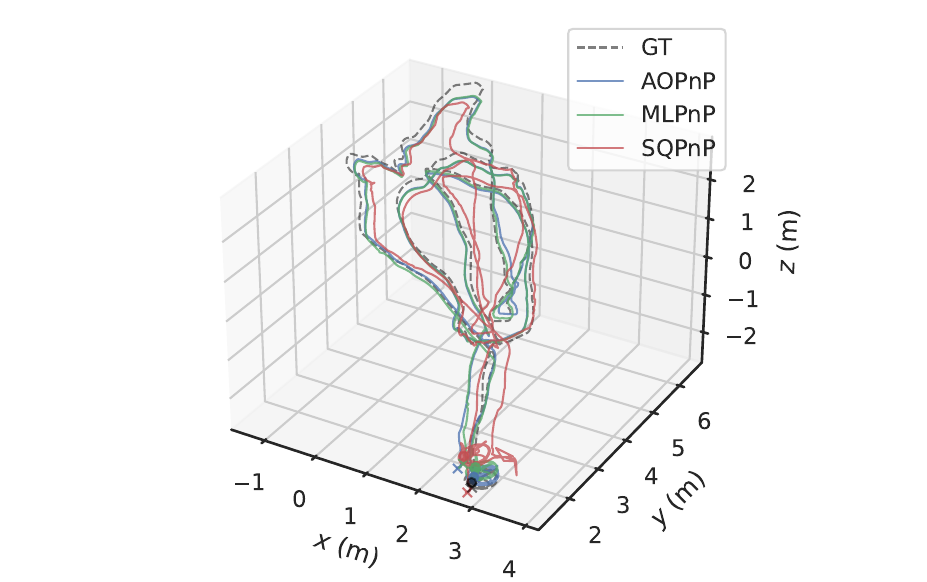}
  \caption{MH02}
	\end{subfigure}
	\caption{Estimated trajectory in stereo visual odometry evaluation.}
	\label{trajectory_estimation}
\end{figure*}

Table~\ref{table_time} lists the CPU time consumed by each PnP algorithm in the odometry process. The \texttt{SQPnP} algorithm owns the lowest time complexity since it invokes only a small number of local searches. Compared with \texttt{MLPnP}, the proposed \texttt{AOPnP} estimator is about four times faster, which can save time for other pipelines in a comprehensive SLAM or odometry system while maintaining a relatively high accuracy. 

\begin{table}[!htbp]
	\centering
	\caption{CPU time (unit: s) in stereo visual odometry evaluation.}
	\begin{tabular}{cccc}
		\hline \hline
		&   AOPnP &    MLPnP & SQPnP \\ \hline
		
  VR101           & 0.321                            &  1.258   &    \textbf{0.089}    \\ 
  VR201           &  0.281                            &  1.176   &    \textbf{0.080}    \\
  MH01           & 0.399                           &   1.499  &    \textbf{0.101}    \\
  MH02           &  0.617                          &   2.215  &    \textbf{0.239}    \\
  \hline\hline
	\end{tabular}\label{table_time}
\end{table}

In summary, in the dynamic odometry evaluation, our proposed \texttt{AOPnP} estimator features the best estimation accuracy while possessing a relatively low time complexity. When combined with a feature extraction and tracking module that can yield plenty of high-quality correspondences, it is well-suited for precision-demanding and real-time SLAM applications.

\section{Conclusion and future work} \label{conclusion}
In this paper, we proposed a uniform camera-robot pose estimation algorithm that can take points, lines, or combined points and lines as inputs. By virtue of noise variance estimation and bias elimination, the algorithm is asymptotically unbiased and consistent in the first step. Then, an ensuing single GN iteration in the second step makes the estimate to be asymptotically efficient. Thanks to the simple form in each step, our estimator scales linearly with respect to the feature number. Three kinds of experiments showed the advantages of the proposed algorithm in terms of estimation accuracy and time complexity. To fully unleash the potential of the proposed algorithm, it is desirable to equip it with a frontend that can extract and match a relatively large number of high-quality features. 

In future work, a more concise pose parameterization method may be adopted to improve the estimation accuracy when the feature number is few while maintaining the properties of consistency and asymptotic efficiency. In addition, how to achieve consistency against outlier feature correspondences in some outlier-prone scenarios is also an interesting problem. 

\appendices

\section{Proof of Theorems~\ref{point_asymptotic_bias} and~\ref{line_asymptotic_bias}} \label{proof_of_point_consistency}
    The proof is mainly based on the following lemma:
    \begin{lemma}[{\cite[Lemma 4]{zeng2022global}}] \label{lemma_noise_aver}
	Let $\{X_i\}$ be a sequence of independent random variables with $\mathbb E[X_i]=0$ and $\mathbb E\left[X_i^2 \right]  \leq \varphi <\infty$ for all $i$. Then, there holds $\sum_{i=1}^{n}X_i/n=O_p(1/\sqrt{n})$.
\end{lemma}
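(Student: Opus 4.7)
The plan is to prove Lemma~\ref{lemma_noise_aver} via a classical $L^2$ argument: bound the second moment of the rescaled sample average $\sqrt{n}\cdot \frac{1}{n}\sum_i X_i$ uniformly in $n$, then invoke Markov's (equivalently Chebyshev's) inequality to conclude stochastic boundedness. This is the standard route for producing a $\sqrt{n}$-rate of convergence when only second moments are assumed.

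First I would set $S_n \triangleq \frac{1}{n}\sum_{i=1}^{n} X_i$ and compute its first two moments. Since $\mathbb{E}[X_i]=0$ for every $i$, linearity gives $\mathbb{E}[S_n]=0$. Independence converts the variance of the sum into the sum of variances, so $\mathrm{Var}(S_n) = \frac{1}{n^{2}} \sum_{i=1}^{n}\mathrm{Var}(X_i) \leq \varphi/n$, using that $\mathrm{Var}(X_i) \leq \mathbb{E}[X_i^{2}] \leq \varphi$. Rescaling by $\sqrt{n}$, this produces the bound $\mathbb{E}\bigl[(\sqrt{n}\,S_n)^{2}\bigr] \leq \varphi$, which is uniform in $n$---this uniformity is the whole point of the computation.

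Next I would apply Markov's inequality to the nonnegative random variable $(\sqrt{n}\,S_n)^{2}$: for any $M > 0$, $\mathbb{P}\bigl(|\sqrt{n}\,S_n| > M\bigr) \leq \varphi/M^{2}$. Given any $\varepsilon > 0$, choosing $M = \sqrt{\varphi/\varepsilon}$ yields $\mathbb{P}\bigl(|\sqrt{n}\,S_n| > M\bigr) \leq \varepsilon$ for all $n$. By the definition of $O_p$ in Section~\ref{preliminaries_probability_statistics}, this proves $\sqrt{n}\,S_n = O_p(1)$, equivalently $\frac{1}{n}\sum_{i=1}^{n} X_i = O_p(1/\sqrt{n})$, which is the claim.

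There is no substantive obstacle here, but two small points deserve attention. The uniform second-moment bound $\varphi$ is crucial: without the supremum $\sup_i \mathbb{E}[X_i^{2}] < \infty$, the tail bound $\varphi/M^{2}$ could degrade with $n$ and the stochastic boundedness would fail. Also, full independence is stronger than what the proof uses---pairwise uncorrelatedness already suffices to split the variance of the sum---so the statement could be slightly strengthened, though the given form is already what the applications in Theorems~\ref{point_asymptotic_bias} and~\ref{line_asymptotic_bias} require.
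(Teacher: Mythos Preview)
Your proof is correct and is the standard second-moment/Chebyshev argument for establishing $O_p(1/\sqrt{n})$ rates under a uniform second-moment bound. Note, however, that the paper does not supply its own proof of Lemma~\ref{lemma_noise_aver}: the lemma is quoted verbatim from~\cite[Lemma~4]{zeng2022global} and used as a black box in the proofs of Theorems~\ref{point_asymptotic_bias} and~\ref{line_asymptotic_bias}, so there is no in-paper argument to compare against. Your write-up is exactly the elementary justification one would expect the cited reference to contain.
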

To prove Theorem~\ref{point_asymptotic_bias}, let
\begin{equation*}
    \Delta {\bf A}_n\triangleq{\bf A}_n-{\bf A}_n^o=\begin{bmatrix}
        [{\bm \epsilon}_{{\bf x}_1}^{h \wedge}]_{1:2}\overbar{\bf X}_1 & [{\bm \epsilon}_{{\bf x}_1}^{h \wedge}]_{1:2} \\
        \vdots & \vdots \\
        [{\bm \epsilon}_{{\bf x}_n}^{h \wedge}]_{1:2}\overbar{\bf X}_n & [{\bm \epsilon}_{{\bf x}_n}^{h \wedge}]_{1:2}
    \end{bmatrix}.
\end{equation*}
Then, we have 
\begin{align*}
    {\bf Q}_n -{\bf Q}_n^o & = \frac{1}{n}( \Delta {\bf A}_n^\top {\bf A}_n^o+{\bf A}_n^{o \top}\Delta {\bf A}_n+\Delta {\bf A}_n^\top \Delta {\bf A}_n) \\
    & = \frac{1}{n}( \Delta {\bf A}_n^\top \Delta {\bf A}_n) + O_p(1/\sqrt{n}) \\
    & = \frac{\sigma^2}{n}( \tilde {\bf A}_n^\top \tilde {\bf A}_n) + O_p(1/\sqrt{n}) \\
    & = \hat \sigma^2_n \tilde{\bf Q}_n + O_p(1/\sqrt{n}),
\end{align*}
where the second and third ``$=$'' are based on Lemma~\ref{lemma_noise_aver}, and the last ``$=$'' attributes to the boundedness of $\tilde{\bf Q}_n$. Hence,~\eqref{Qn_decomposition} holds. 
For the other claim, note that (1). ${\bf Q}_n^{\rm BE}$ converges to ${\bf Q}_n^o$ with a rate of $1/\sqrt{n}$; (2). eigenvectors are continuously changing w.r.t. the varying of a matrix; (3). $\bar {\bm \theta}^o$ is a unit vector corresponding to the smallest eigenvalue of ${\bf Q}_n^o$. Therefore, $\hat {\bm \theta}^{\rm BE}_n$ is a $\sqrt{n}$-consistent estimate of $\bar {\bm \theta}^o$ up to sign, which completes the proof.

To prove Theorem~\ref{line_asymptotic_bias}, let
\begin{equation*}
    \Delta {\bf A}_m\triangleq{\bf A}_m-{\bf A}_m^o=\begin{bmatrix}
        [{\bm \epsilon}_{{\bf p}_1}^h~{\bm \epsilon}_{{\bf q}_1}^h]^\top \overbar {\bf L}_1 \\
        \vdots \\
        [{\bm \epsilon}_{{\bf p}_m}^h~{\bm \epsilon}_{{\bf q}_m}^h]^\top \overbar {\bf L}_m
    \end{bmatrix}.
\end{equation*}
Then, we have 
\begin{align*}
    {\bf Q}_m -{\bf Q}_m^o & = \frac{1}{m}( \Delta {\bf A}_m^\top {\bf A}_m^o+{\bf A}_m^{o \top}\Delta {\bf A}_m+\Delta {\bf A}_m^\top \Delta {\bf A}_m) \\
    & = \frac{1}{m}( \Delta {\bf A}_m^\top \Delta {\bf A}_m) + O_p(1/\sqrt{m}) \\
    & = \frac{\sigma^2}{m}( \tilde {\bf A}_{m1}^\top \tilde {\bf A}_{m1}+\tilde {\bf A}_{m2}^\top \tilde {\bf A}_{m2}) + O_p(1/\sqrt{m}) \\
    & = \hat \sigma^2_m (\tilde{\bf Q}_{m1}+\tilde{\bf Q}_{m2}) + O_p(1/\sqrt{m}),
\end{align*}
where the second and third ``$=$'' are based on Lemma~\ref{lemma_noise_aver}, and the last ``$=$'' attributes to the boundedness of $\tilde{\bf Q}_{m1}$ and $\tilde{\bf Q}_{m2}$. Hence,~\eqref{Qm_decomposition} holds. 
The other claim can be verified similarly to the point case and is omitted here.

\section{Derivatives in the GN iteration} \label{derivatives_in_GN}
Let 
\begin{align*}
    {\bf r}_{pi}^{\rm den}({\bf s},{\bf t}) & \triangleq {\bf e}_3^\top(\hat {\bf R}^{\rm BE}{\rm exp}({\bf s}^\wedge) {\bf X}_i+{\bf t}), \\
    {\bf r}_{pi}^{\rm num}({\bf s},{\bf t}) & \triangleq {\bf E}(\hat {\bf R}^{\rm BE}{\rm exp}({\bf s}^\wedge) {\bf X}_i+{\bf t}).
\end{align*}
Then, we have 
\begin{align*}
    \frac{\partial {\bf r}_{pi}}{\partial {\bf s}^\top}\big\rvert_{{\bf s}=0} &= \frac{\left({\bf r}_{pi}^{\rm num}(0,{\bf t}){\bf e}_3^\top-{\bf r}_{pi}^{\rm den}(0,{\bf t}){\bf E}\right)({\bf X}_i^\top \otimes \hat {\bf R}^{\rm BE}) {\bm \Psi}}{{\bf r}_{pi}^{\rm den}(0,{\bf t})^2}, \\
    \frac{\partial {\bf r}_{pi}}{\partial {\bf t}^\top}\big\rvert_{{\bf s}=0} &= \frac{{\bf r}_{pi}^{\rm num}(0,{\bf t}){\bf e}_3^\top-{\bf r}_{pi}^{\rm den}(0,{\bf t}){\bf E}}{{\bf r}_{pi}^{\rm den}(0,{\bf t})^2},
\end{align*}
where 
\begin{equation*}
{\bm \Psi}\triangleq \frac{\partial {\rm vec}(\exp(\bf s^{\wedge}))}{
	\partial {\bf s}^\top} \Big\rvert_{\bf s=0}.
\end{equation*}

Note that ${\bm \Sigma}_j^{-\frac{1}{2}} {\bf r}_{lj}=[[{\bf r}_{lj}]_1/\sigma_j~~[{\bf r}_{lj}]_2/\sigma_j]^\top$, where $\sigma_j=\sigma \|[\bar {\bf l}_j]_{1:2}\|$. We have
{\small
\begin{align*}
    \frac{\partial {\bm \Sigma}_j^{-\frac{1}{2}} {\bf r}_{lj}}{\partial {\bf s}^\top}\big\rvert_{{\bf s}=0} & = \begin{bmatrix}
        \frac{1}{\sigma_j} & 0 \\
        0 & \frac{1}{\sigma_j}
    \end{bmatrix}
    \frac{\partial {\bf r}_{lj}}{\partial {\bf s}^\top}\big\rvert_{{\bf s}=0} + {\rm diag}({\bf r}_{lj}) \begin{bmatrix}
        \frac{\partial 1/\sigma_j}{\partial {\bf s}^\top}\big\rvert_{{\bf s}=0} \\
        \frac{\partial 1/\sigma_j}{\partial {\bf s}^\top}\big\rvert_{{\bf s}=0}
    \end{bmatrix}, \\
    \frac{\partial {\bm \Sigma}_j^{-\frac{1}{2}} {\bf r}_{lj}}{\partial {\bf t}^\top}\big\rvert_{{\bf s}=0} & = \begin{bmatrix}
        \frac{1}{\sigma_j} & 0 \\
        0 & \frac{1}{\sigma_j}
    \end{bmatrix}
    \frac{\partial {\bf r}_{lj}}{\partial {\bf t}^\top}\big\rvert_{{\bf s}=0} + {\rm diag}({\bf r}_{lj}) \begin{bmatrix}
        \frac{\partial 1/\sigma_j}{\partial {\bf t}^\top}\big\rvert_{{\bf s}=0} \\
        \frac{\partial 1/\sigma_j}{\partial {\bf t}^\top}\big\rvert_{{\bf s}=0}
    \end{bmatrix}.
\end{align*}}

    Let $\overbar {\bf L}_{j1} \triangleq [{\bf L}_{j}]_{1:3}^\top \otimes {\bf I}_3$ and $\overbar {\bf L}_{j2} \triangleq [{\bf L}_{j}]_{4:6}^\top \otimes {\bf I}_3$. We obtain
\begin{align*}
    \frac{\partial {\bf r}_{lj}}{\partial {\bf s}^\top}\big\rvert_{{\bf s}=0} &= [{\bf p}_j^{h}~{\bf q}_j^{h}]^\top \left(\overbar {\bf L}_{j1}+{\bf t}^\wedge \overbar {\bf L}_{j2}\right) ({\bf I}_3 \otimes \hat {\bf R}^{\rm BE}) {\bm \Psi}, \\
    \frac{\partial {\bf r}_{lj}}{\partial {\bf t}^\top}\big\rvert_{{\bf s}=0} &= -[{\bf p}_j^{h}~{\bf q}_j^{h}]^\top \left(\hat {\bf R}^{\rm BE} [{\bf L}_{j}]_{4:6}\right)^\wedge,
\end{align*}
and
\begin{align*}
    \frac{\partial 1/\sigma_j}{\partial {\bf s}^\top}\big\rvert_{{\bf s}=0} &= \frac{-[\bar {\bf l}_j]_{1:2}^\top \left[\left(\overbar {\bf L}_{j1}+{\bf t}^\wedge \overbar {\bf L}_{j2}\right) ({\bf I}_3 \otimes \hat {\bf R}^{\rm BE}) {\bm \Psi}\right]_{1:2}}{\sigma \|[\bar {\bf l}_j]_{1:2}\|^3},  \\
    \frac{\partial 1/\sigma_j}{\partial {\bf t}^\top}\big\rvert_{{\bf s}=0} &= \frac{[\bar {\bf l}_j]_{1:2}^\top \left[\left(\hat {\bf R}^{\rm BE} [{\bf L}_{j}]_{4:6}\right)^\wedge\right]_{1:2}}{\sigma \|[\bar {\bf l}_j]_{1:2}\|^3}.
\end{align*}

\section{Proof of Theorem~\ref{asymptotic_efficiency_theorem}} \label{asymptotic_efficiency_proof}
    Let $f({\bf s},{\bf t})$ denote the objective function of~\eqref{ML_problem}, where ${\bf R}=\hat{{\bf R}}^{\rm BE}\exp({\bf s}^{\wedge})$, and denote the optimal $\bf s$ and $\bf t$ as $ \hat {\bf s}^{\rm ML}$ and $ \hat {\bf t}^{\rm ML}$, respectively. Since $\hat {\bf R}^{\rm BE}$ and $\hat {\bf t}^{\rm BE}$ are $\sqrt{n+m}$-consistent, it holds that 
\begin{equation*}
    \begin{bmatrix}
        \hat {\bf s}^{\rm ML} \\
        \hat {\bf t}^{\rm ML} 
    \end{bmatrix}-
    \begin{bmatrix}
        0 \\
        \hat {\bf t}^{\rm BE} 
    \end{bmatrix}=O_p(1/\sqrt{n+m}).
\end{equation*}
Based on the optimality condition $\nabla f(\hat {\bf s}^{\rm ML},\hat {\bf t}^{\rm ML})=0$ and the Taylor expansion, we have
{\small
\begin{equation*}
    0=\nabla f(0,\hat {\bf t}^{\rm BE})+\nabla^2 f(0,\hat {\bf t}^{\rm BE}) \begin{bmatrix}
        \hat {\bf s}^{\rm ML}-0 \\
        \hat {\bf t}^{\rm ML}-\hat {\bf t}^{\rm BE} 
    \end{bmatrix} + o_p(\frac{1}{\sqrt{n+m}}),
\end{equation*}}
Then,
{\small
\begin{equation} \label{difference_ML_BE}
    \begin{bmatrix}
        \hat {\bf s}^{\rm ML}-0 \\
        \hat {\bf t}^{\rm ML}-\hat {\bf t}^{\rm BE} 
    \end{bmatrix}=-\nabla^2 f(0,\hat {\bf t}^{\rm BE})^{-1}\nabla f(0,\hat {\bf t}^{\rm BE}) + o_p(\frac{1}{\sqrt{n+m}}).
\end{equation}}

By combining~\eqref{GN_iteration} and~\eqref{difference_ML_BE}, we finally obtain
{\small
\begin{align*}
    & \begin{bmatrix}
        \hat {\bf s}^{\rm ML}-\hat {\bf s}^{\rm GN} \\
        \hat {\bf t}^{\rm ML}-\hat {\bf t}^{\rm GN}
    \end{bmatrix} \\
    & =\left({\bf J}^\top {\bf J}\right)^{-1}{\bf J}^\top {\bf r}-\nabla^2 f(0,\hat {\bf t}^{\rm BE})^{-1}\nabla f(0,\hat {\bf t}^{\rm BE})+o_p(\frac{1}{\sqrt{n+m}}) \\
 & = \left( \left(\frac{{\bf J}^\top {\bf J}}{n+m}\right)^{-1}- 2     \nabla^2 f(0,\hat {\bf t}^{\rm BE})^{-1}\right) \frac{{\bf J}^\top {\bf r}}{n+m} +o_p(\frac{1}{\sqrt{n+m}}) \\
 & = O_p(\frac{1}{\sqrt{n+m}}) O_p(\frac{1}{\sqrt{n+m}})+o_p(\frac{1}{\sqrt{n+m}}) \\
 & =o_p(\frac{1}{\sqrt{n+m}}),
\end{align*}}
where the second ``$=$'' is based on the fact that $\nabla f(0,\hat {\bf t}^{\rm BE})=\frac{2{\bf J}^\top {\bf r}}{n+m}$, and the third ``$=$'' is based on Lemma~\ref{lemma_noise_aver} in Appendix~\ref{proof_of_point_consistency}. 
Since the ${\rm exp}$ is a continuous function, we also have $ \hat {\bf R}^{\rm ML}-\hat {\bf R}^{\rm GN}=o_p(1/\sqrt{n+m})$, which completes the proof.

\section{Derivation of the CRB} \label{CRB_derivation}
The likelihood function is
\begin{align*}
    \mathcal L({\bf R},{\bf t})= & \prod_{i=1}^{n} \frac{1}{\sqrt{4\pi^2 |{\bm \Sigma}|}} \exp (-\frac{1}{2} {\bf r}_{pi}^\top {\bm \Sigma}^{-1}{\bf r}_{pi})  \\
    & \times \prod_{j=1}^{m} \frac{1}{\sqrt{4\pi^2 |{\bm \Sigma}_j|}} \exp (-\frac{1}{2} {\bf r}_{lj}^\top {\bm \Sigma}_j^{-1}{\bf r}_{lj}).
\end{align*}
Then, the log-likelihood function is 
\begin{align*}
    \ell({\bf R},{\bf t})= & n \ln \frac{1}{\sqrt{4\pi^2 |{\bm \Sigma}|}} - \sum_{i=1}^{n} \frac{1}{2} {\bf r}_{pi}^\top {\bm \Sigma}^{-1}{\bf r}_{pi} \\
    & + \sum_{j=1}^{m} \ln \frac{1}{\sqrt{4\pi^2 |{\bm \Sigma}_j|}} -\frac{1}{2} {\bf r}_{lj}^\top {\bm \Sigma}_j^{-1}{\bf r}_{lj}.
\end{align*}

Let ${\bm \theta}_{\bf R} \triangleq {\rm vec}({\bf R})$ and ${\bm \theta} \triangleq {\rm vec}([{\bf R}~{\bf t}])$. We have 
\begin{equation*}
    \frac{\partial \ell({\bf R},{\bf t})}{\partial {\bm \theta}^\top} = -\sum_{i=1}^{n} {\bf r}_{pi}^\top {\bm \Sigma}^{-1}\frac{\partial {\bf r}_{pi}}{\partial {\bm \theta}^\top} - \sum_{j=1}^{m} {\bf r}_{lj}^\top {\bm \Sigma}_j^{-1}\frac{\partial {\bf r}_{lj}}{\partial {\bm \theta}^\top},
\end{equation*}
where $\frac{\partial {\bf r}_{pi}}{\partial {\bm \theta}^\top}=[\frac{\partial {\bf r}_{pi}}{\partial {\bm \theta}_{\bf R}^\top}~\frac{\partial {\bf r}_{pi}}{\partial {\bf t}^\top}]$, $\frac{\partial {\bf r}_{lj}}{\partial {\bm \theta}^\top}=[\frac{\partial {\bf r}_{lj}}{\partial {\bm \theta}_{\bf R}^\top}~\frac{\partial {\bf r}_{lj}}{\partial {\bf t}^\top}]$. 
These derivatives have the following expressions:
\begin{align*}
    \frac{\partial {\bf r}_{pi}}{\partial {\bm \theta}_{\bf R}^\top} &= \frac{{\bf r}_{pi}^{\rm num}{\bf e}_3^\top \overbar {\bf X}_i-{\bf r}_{pi}^{\rm den}{\bf E} \overbar {\bf X}_i }{{\bf r}_{pi}^{\rm den}}, \\
    \frac{\partial {\bf r}_{pi}}{\partial {\bf t}^\top} &= \frac{{\bf r}_{pi}^{\rm num}{\bf e}_3^\top-{\bf r}_{pi}^{\rm den}{\bf E}}{{\bf r}_{pi}^{\rm den}},
\end{align*}
and 
\begin{align*}
    \frac{\partial {\bf r}_{lj}}{\partial {\bm \theta}_{\bf R}^\top} &= [{\bf p}_j^{h}~{\bf q}_j^{h}]^\top \left(\overbar {\bf L}_{j1}+{\bf t}^\wedge \overbar {\bf L}_{j2}\right) , \\
    \frac{\partial {\bf r}_{lj}}{\partial {\bf t}^\top} &= -[{\bf p}_j^{h}~{\bf q}_j^{h}]^\top \left({\bf R} [{\bf L}_{j}]_{4:6}\right)^\wedge.
\end{align*}

Then, the Fisher information matrix is given by
\begin{align*}
    {\bf F} & = \mathbb E \left[\frac{\partial \ell({\bf R},{\bf t})}{\partial {\bm \theta}} \frac{\partial \ell({\bf R},{\bf t})}{\partial {\bm \theta}^\top} \Big\rvert {\bf R}^o,{\bf t}^o\right] \\
    & = \sum_{i=1}^{n} \frac{\partial {\bf r}_{pi}}{\partial {\bm \theta}} {\bm \Sigma}^{-1} \frac{\partial {\bf r}_{pi}}{\partial {\bm \theta}^\top} + \sum_{j=1}^{m} \frac{\partial {\bf r}_{lj}}{\partial {\bm \theta}} {\bm \Sigma}_j^{-1} \frac{\partial {\bf r}_{lj}}{\partial {\bm \theta}^\top}. 
\end{align*}
Note that the rotation matrix should satisfy the ${\rm SO}(3)$ constraint. This is equivalent to imposing the following $6$ quadratic constraints on $\bm \theta$~\cite{lynch2017modern}: 
\begin{equation*}
	{\bf h}({\bm \theta}) = \begin{bmatrix}
		[{\bm \theta}]_{1:3}^{\top}[{\bm \theta}]_{1:3}-1 \\
		[{\bm \theta}]_{4:6}^{\top}[{\bm \theta}]_{1:3} \\
		[{\bm \theta}]_{7:9}^{\top}[{\bm \theta}]_{1:3} \\
		[{\bm \theta}]_{4:6}^{\top}[{\bm \theta}]_{4:6}-1 \\
		[{\bm \theta}]_{7:9}^{\top}[{\bm \theta}]_{4:6} \\
		[{\bm \theta}]_{7:9}^{\top}[{\bm \theta}]_{7:9}-1
	\end{bmatrix}=0.
\end{equation*}
Let 
\begin{equation*}
	{\bf H}({\bm \theta}) = \frac{\partial {\bf h}({\bm \theta})}{\partial {\bm \theta}^\top} \in \mathbb R^{6 \times 12}.
\end{equation*}
The gradient matrix ${\bf H}({\bm \theta})$ have full row rank since the constraints are nonredundant, and hence there exists a matrix ${\bf U}\in\mathbb{R}^{12\times 6}$ whose columns form an orthonormal basis for the nullspace of ${\bf H}({\bm \theta})$, that is, ${\bf H}({\bm \theta}){\bf U}=0$ and ${\bf U}^{\top}{\bf U}={\bf I}_6$.
Finally, the constrained Fisher information is given as~\cite{stoica1998cramer,moore2008maximum}
\begin{equation*}
	{\bf F}_c = {\bf U} ({\bf U}^\top {\bf F} {\bf U})^{-1}{\bf U}^\top,
\end{equation*}
and the theoretical lower bound for the covariance of any unbiased estimator is ${\rm CRB}={\rm tr}({\bf F}_c)$.

\begin{figure}[!htbp]
	\centering
	\begin{subfigure}[b]{0.24\textwidth}
		\centering
		\includegraphics[width=\textwidth]{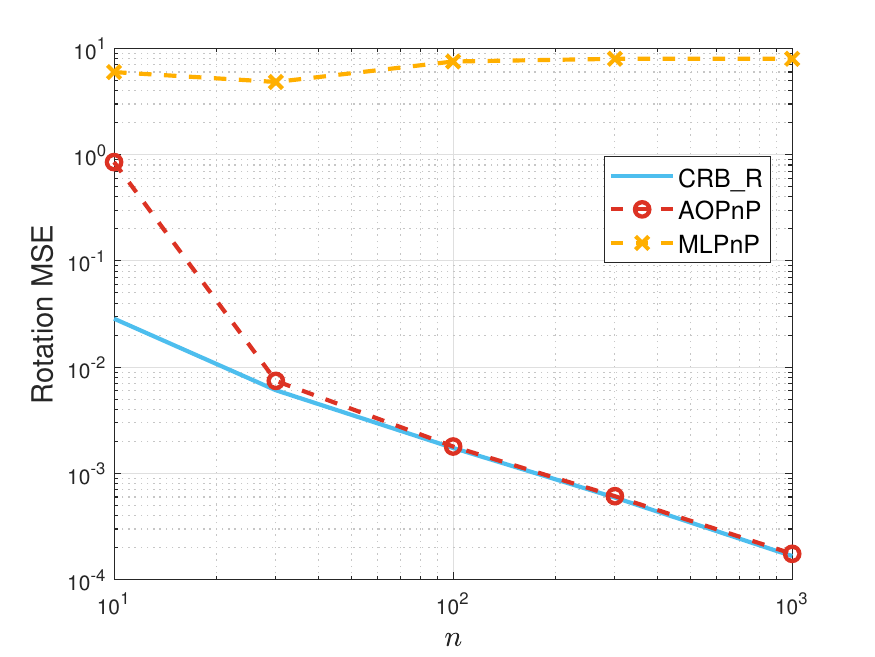}
		\caption{$\sigma=50$ pixels ($\bf R$)}
		\label{pnp_R_mse_50px}
	\end{subfigure}
	\begin{subfigure}[b]{0.24\textwidth}
		\centering
		\includegraphics[width=\textwidth]{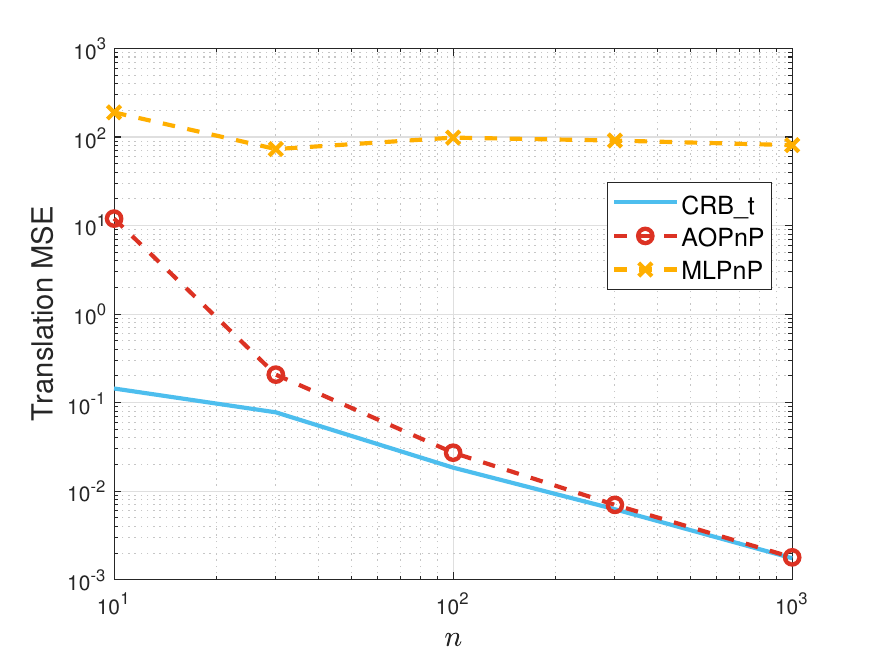}
		\caption{$\sigma=50$ pixels ($\bf t$)}
		\label{pnp_t_mse_50px}
	\end{subfigure}
	\caption{MSEs of PnP pose estimates.}
	\label{pnp_mse_50px}
\end{figure}

\section{MSE of the \texttt{MLPnP} estimator when $\sigma=50$ pixels} \label{MSE_of_MLPnP}
Here, we set $\sigma=50$ pixels. The result is shown in Figure~\ref{pnp_mse_50px}. In this case, the initial value of the \texttt{MLPnP} falls beyond the attraction neighborhood of the global minimum of~\eqref{ML_problem}. As a result, GN iterations drive the solution further away from the global minimizer, yielding a large MSE. Since the proposed \texttt{AOPnP} estimator is $\sqrt{n}$-consistent in its first step, a single GN iteration can make it asymptotically reach the CRB.

\bibliographystyle{IEEEtran}
\bibliography{sj_reference}

\end{document}